\documentclass[11pt,prd,aps,amsfonts,eqsecnum,superscriptaddress,nofootinbib,longbibliography,notitlepage]{revtex4-1}

\usepackage{lmodern}    
\usepackage{microtype}  
\usepackage[english]{babel}
\usepackage{setspace}

\usepackage{fancyhdr}

\usepackage{caption}
\usepackage{graphicx}
\usepackage{xcolor}
\usepackage{subcaption}
\usepackage{rotating}
\usepackage{amsmath, amssymb, graphics, amsthm, isomath}
\usepackage{physics}
\usepackage{verbatim}
\usepackage{tikz}
\usepackage{tikz-cd}
\usepackage{circuitikz}
\usetikzlibrary{patterns}
\usetikzlibrary{through,hobby}
\usetikzlibrary{decorations.markings}
\usetikzlibrary{fadings,shadings}
\usetikzlibrary{plotmarks}
\usetikzlibrary{positioning}
\usetikzlibrary{decorations,arrows}
\usetikzlibrary{decorations.pathreplacing}
\usetikzlibrary{chains, scopes, positioning, backgrounds, shapes, fit, shadows, calc, arrows.meta, decorations.pathreplacing}
\usepackage{blkarray}
\usepackage{svg}
\usepackage{mathrsfs}
\usepackage{algorithm}
\usepackage{algpseudocode}

\usepackage{xcolor}
\definecolor{mycitecolor}{rgb}{0.0, 0.45, 0.85}   

\newcommand{\im}{\operatorname{im}}

\usepackage[
  colorlinks=true,
  linkcolor=mycitecolor,
  citecolor=mycitecolor,
  urlcolor=mycitecolor,
  hyperindex=true,
  linktocpage=true
]{hyperref}

\usepackage[capitalise,compress]{cleveref}
\usepackage{tcolorbox}

\newtcolorbox{shadedtheorem}{
  colback=gray!15,    
  colframe=white,     
  boxrule=0pt,        
  arc=0pt,            
  left=5pt,           
  right=5pt,
  top=5pt,
  bottom=5pt
}

\newtheoremstyle{upright-hang}
  {6pt}   
  {6pt}   
  {\normalfont\hangindent=1.5em\hangafter=1\relax} 
  {0pt}   
  {\bfseries} 
  {:}     
  {0.5em} 
  {}      

\theoremstyle{upright-hang}
\newtheorem{thm}{Theorem}
\numberwithin{thm}{section}
\newtheorem{cor}[thm]{Corollary}
\newtheorem{alg}[thm]{Algorithm}
\newtheorem{lem}[thm]{Lemma}

\newtheorem{prop}[thm]{Proposition}

\newtheorem{defn}[thm]{Definition}

\renewcommand{\thesection}{\arabic{section}}
\renewcommand{\thesubsection}{\thesection.\arabic{subsection}}
\renewcommand{\thesubsubsection}{\thesubsection.\arabic{subsubsection}}

\makeatletter
\renewcommand{\p@subsection}{}
\renewcommand{\p@subsubsection}{}
\makeatother

\makeatletter
\fancypagestyle{plain}{%
  \fancyhf{}%
  \fancyfoot[C]{\thepage}%
}
\fancypagestyle{revtexfooter}{%
  \fancyhf{}%
  \fancyfoot[C]{\thepage}%
}
\makeatother
\AtBeginDocument{\setstretch{1.25}\pagestyle{revtexfooter}}
\makeatletter
\renewcommand\frontmatter@title@format{\LARGE\bfseries\centering\parskip\z@skip}

\def\frontmatter@affiliationfont{\normalfont\selectfont}
\newcommand{\AndySectionStar}[1]{%
  \par\addpenalty\@secpenalty
  \addvspace{3.5ex \@plus 1ex \@minus .2ex}%
  \noindent{\normalfont\Large\bfseries #1\par}%
  \nobreak\vspace{2.3ex \@plus .2ex}%
  \@afterheading}
\newcommand{\AndySubsectionStar}[1]{%
  \par\addpenalty\@secpenalty
  \addvspace{3.25ex \@plus 1ex \@minus .2ex}%
  \noindent{\normalfont\large\bfseries #1\par}%
  \nobreak\vspace{1.5ex \@plus .2ex}%
  \@afterheading}
\newcommand{\AndySubsubsectionStar}[1]{%
  \par\addpenalty\@secpenalty
  \addvspace{3.25ex \@plus 1ex \@minus .2ex}%
  \noindent{\normalfont\normalsize\bfseries #1\par}%
  \nobreak\vspace{1.5ex \@plus .2ex}%
  \@afterheading}
\renewcommand\@seccntformat[1]{\csname the#1\endcsname\quad}
\def\@hangfrom@section#1#2#3{\@hangfrom{#1#2}#3}
\def\@hangfroms@section#1#2{#1#2}
\renewcommand\section{\@ifstar{\AndySectionStar}{\@startsection {section}{1}{\z@}%
  {-3.5ex \@plus -1ex \@minus -.2ex}%
  {2.3ex \@plus .2ex}%
  {\normalfont\Large\bfseries}}}
\renewcommand\subsection{\@ifstar{\AndySubsectionStar}{\@startsection{subsection}{2}{-\parindent}%
  {-3.25ex\@plus -1ex \@minus -.2ex}%
  {1.5ex \@plus .2ex}%
  {\normalfont\large\bfseries}}}
\renewcommand\subsubsection{\@ifstar{\AndySubsubsectionStar}{\@startsection{subsubsection}{3}{-\parindent}%
  {-3.25ex\@plus -1ex \@minus -.2ex}%
  {1.5ex \@plus .2ex}%
  {\normalfont\normalsize\bfseries}}}
\renewcommand*\l@section[2]{%
  \ifnum \c@tocdepth >\z@
    \addpenalty\@secpenalty
    \addvspace{0.75em \@plus\p@}%
    \begingroup
      \parindent\z@
      \rightskip\@pnumwidth
      \parfillskip-\@pnumwidth
      \leavevmode\bfseries
      \setlength\@tempdima{2.3em}%
      \advance\leftskip\@tempdima
      \hskip-\leftskip
      #1\nobreak\hfil\nobreak\hb@xt@\@pnumwidth{\hss #2}\par
    \endgroup
  \fi}
\renewcommand*\l@subsection{\@dottedtocline{2}{1.5em}{3.2em}}
\renewcommand*\l@subsubsection{\@dottedtocline{3}{3.8em}{4.1em}}
\makeatother

\usepackage{mathtools}
\tikzstyle{densely dashed}= [dash pattern=on 4pt off 3pt]

\usepackage{dsfont}

\allowdisplaybreaks

\begin{document}

\title{Sample-efficient learning with Dense Associative Memories}  

\author{Andrew Lucas}
\email{andrew.j.lucas@colorado.edu}
\affiliation{Department of Physics and Center for Theory of Quantum Matter, University of Colorado, Boulder, CO 80309, USA}

\author{Dmitry Krotov}
\email{krotov@dynamicalmind.com}
\affiliation{Dynamical Mind, Cambridge, MA 02141, USA}

\begin{abstract}
We analytically study the evolution of the energy function of Dense Associative Memories (DenseAMs) as they train on randomly sampled codewords of random expander low-density parity-check codes (a.k.a. solutions of random XORSAT problems).  We identify two distinct regimes and a transition between them, reminiscent of grokking in neural networks. A regime of small training set size is where the model memorizes a few training examples but cannot discover the underlying rules of the entire codeword dictionary generative process. The opposite regime, corresponding to a larger training set size, is where the model has uncovered the sparse constraints that generated the entire data set after seeing only polynomially many patterns. We also design a novel class of DenseAMs with a thresholding learning rule, and show that any amount of thresholding enables the model to perform simplified sparse Hamiltonian learning, which exponentially reduces the number of training examples needed to learn the true underlying data set. In this regime, the model learns how to discover $\exp[\mathrm{O}(n)]$ solutions from only $\mathrm{O}(\log n)$ training samples, which is asymptotically optimal for any learning algorithm. When training on codewords of certain random codes, the model can even learn the underlying code when a small O(1) fraction of bits are latent variables.  These latent variables are not shown to the model during learning, but will be discovered by sparsifying observed higher-order correlations.  For certain random codes, we show that a simple and conservative algorithm learns a minimal latent representation for the data set.  This provides a transparent, physics-motivated, and analytically tractable toy model for how deep latent representations emerge in an associative memory training.
\end{abstract}
\maketitle

\newpage
\tableofcontents
\newpage

\section{Introduction}
A memorization-to-generalization transition is a hallmark of modern AI models. At early stages of training, large models typically memorize individual instances of training data. As training progresses, the models discover more abstract features and transition from relying on example-specific details to forming reusable knowledge via internal representations. This phenomenon has received substantial attention in Large Language Models, where it is typically called ``grokking'' \cite{power2022grokking, gromov2023grokkingmodulararithmetic,MohamadiLiWuSutherland2024,morris2025much, li2026grokking}. Similar observations have been made in Diffusion Models \cite{somepalli2023diffusion,carlini2023extracting, pham2025memorization, bonnaire2026diffusion}, and Dense Associative Memories (DenseAMs) \cite{pham2025memorization}. Interest in this phenomenon is motivated both by practical concerns -- including privacy and copyright infringement arising from the memorization of training data -- and by a more fundamental question: how do individual training examples turn into generalizable knowledge encoded in a model's weights?

Although the notion of a memorization-to-generalization transition is intuitive and widely used, it remains difficult to characterize rigorously in the context of natural datasets. The underlying rules governing natural datasets are generally unknown. Consequently, there is no canonical criterion for determining when a model has transitioned from memorizing individual training examples to learning the underlying data-generating structure. This ambiguity also makes the transition difficult to quantify as a function of the training set size.

In this paper we study this problem in a simplified setting where precise results can be found.  Specifically, we study how DenseAMs learn random instances of the XORSAT problem; equivalently, how they learn low-density parity-check codes over binary variables. DenseAMs are a prominent class of energy-based models, which have super-linear (in the size of the neural network) memory storage capacity \cite{krotov2016}. They generalize the Hopfield network of associative memory, which has a linear memory storage capacity \cite{hopfield_1982}. The generic feature of this class of models is that they have a very large number of basins of attraction (possibly exponentially large in the size of the network), and, at the same time, each basin has a substantial volume. The latter property makes if possible to perform error correction by following a trajectory of the dynamical system minimizing the energy of DenseAM. To set up the problem, we sample a small number of random codewords from expander low-density parity-check (LDPC) codes \cite{SipserSpielman1996,urbanke} to form the training set, and ask how many samples are needed until the DenseAM learns the underlying parity-checks (constraints) of the code, rather than simply memorizing individual instances from the training set. In other words, how many training samples are needed for DenseAM to recover the true underlying generative process of the LDPC codewords?

An LDPC code is a standard method for protecting digital information from errors during transmission or storage. When transmitting a string of bits,  an LDPC code adds a finite fraction of redundant bits:  the role of these redundant bits is to act as parity-checks on the original bits that encoded the logical information.  The code is ``low-density" when each bit is involved in only a finite-number of (small) parity-checks, and in turn each small check is a constraint involving only a finite number of bits.  
If some bits are corrupted by noise, the receiver can use the pattern of satisfied and violated checks to infer which bits are most likely incorrect.   For random LDPC codes, this procedure works with high probability when a small  $\mathrm{O}(1)$ fraction of the bits are corrupted.

The beauty of LDPC codes is that their codewords form a nice \emph{algorithmic dataset} --  every codeword comes from a clearly defined mathematical process: in this case, finding solutions to linear equations over $\mathbb{F}_2$, a.k.a. the integers (mod 2). This creates an opportunity for training an AI model on a small subset of codewords (the training set), and test the model on the remaining codewords (the test set) without ever revealing the underlying mathematical process of code word generation.   Obviously, we can test whether the AI has learned the underlying LDPC code by checking whether or not all of the unseen codewords are verified as solutions by the AI.  

As we will show, DenseAM is a perfect model to train on random LDPC codewords. The model is defined by an energy function, which has a huge (possibly exponential in the size of the network) number of local minima. Inference dynamics corresponds to a temporal evolution of a dynamical system decreasing the energy, and the low-energy states correspond to plausible data-points. The appealing aspect of DenseAMs is that it is possible to train them both via a self-supervised learning via the backpropagation algorithm \cite{krotov2018dense}, and, at the same time, via simpler Hebbian-style learning rules \cite{krotov2016}. The latter eliminates the complexity of training process from the main phenomenon that we focus on in this work: how sample complexity of the data drives the memorization-to-generalization transition. For this reason we adopt DenseAMs as a toy model laboratory for studying this phenomenon. 

In fact, the architecture of off-the-shelf DenseAM is more or less perfectly suited for the efficient learning of XORSAT.  The reason is that under Hebbian learning, the real parity-checks of the code correspond to weights in the energy function that are maintained as large during training, while false parity-checks are equally likely to be obeyed or violated.  Therefore under Hebbian learning, DenseAM learns to reduce the weight on false parity-checks until eventually, the energy landscape is dominated by the true parity-checks.  Far past this point (with increasing training set size), the inference dynamics described in the previous paragraph amounts to gradient descent in the \emph{underlying Hamiltonian}, where each codeword is a local minimum with a large energy basin around it.  Even better, we may read off the underlying rules by identifying the large weights in the learned energy function.   We can, therefore, quantitatively assess exactly how long it takes for DenseAM to learn the underlying code.

Our results are foreshadowed by earlier literature.  For example, a long time ago, Valiant proposed a PAC theory of approximately-learnable problems, with satisfiability problems (which include XORSAT) as a prime application \cite{valiant}.  More recently, a systematic theory of sparse Hamiltonian learning \cite{RavikumarWainwrightLafferty2010,Bresler2015,VuffrayMisraLokhovChertkov2016,KlivansMeka2017} has been developed, where it is a well-known result that only $\mathrm{O}(\log n)$ samples are necessary to learn a Hamiltonian when observing random samples drawn from its Gibbs state.    The Hamiltonian itself will typically have $\mathrm{poly}(n)$ terms, and its Gibbs state may cluster around $\exp[\mathrm{O}(n)]$ local minima, as in a typical spin glass or complex system \cite{MezardMontanari2009}; these features do not make learning the Hamiltonian itself parametrically harder! The key point is that sparse learning is capable of using the same samples to independently discover many different low-order correlations corresponding to different interactions in the Hamiltonian.  

The difference between sparse learning and the Hebbian learning rule of DenseAM is that the latter is a computationally efficient algorithm, without an all-purpose guarantee that it learns the correct problem.  Of particular note is that sparse learning is increasingly slow when the temperature of the sampled Gibbs state is very low.   It turns out, however, that XORSAT is a very special ``sparse Hamiltonian" to learn -- even at zero temperature (the limit of interest for our algorithmic dataset) we may still exactly learn the Hamiltonian.  To achieve the optimal $\mathrm{O}(\log n)$ scaling of the training set size predicted by sparse learning, we will add a thresholding rule to Hebbian learning, where suitably small couplings are, roughly, rounded down to zero.  We will prove that this modification to DenseAM causes an exponential increase in the efficiency of learning.  Related ideas appeared in \cite{DeDiakonikolasServedio2012,DeDiakonikolasServedio2015}, where the authors wished to mock the distribution of solutions to a satisfiability problem, rather than learn it exactly.

With or without thresholding, we can quantitatively describe the resulting memorization-to-generalization transition undergone by DenseAM as a function of training set size.  In the limit of high thresholding, there is a transition between memorization and generalization, reminiscent of the reverse code engineering problem from classical LDPC codes \cite{tillich,sudan}.  We prove that this transition is surprisingly sharp: with a sufficiently small $\mathrm{O}(\log n)$ number of training examples, DenseAM will literally only be able to memorize the seen examples (and their linear combinations); with only an O(1) factor of extra patterns, it learns the entire code.  In the limit of no thresholding, there is a simple transition as well.  After seeing few patterns, the energy landscape of DenseAM looks Hopfield-like \cite{krotov2016,hopfield_1982} and is dominated by the observed patterns.  After seeing sufficiently many [$\mathrm{poly}(n)$] patterns, the weights of the true parity-checks in the Hamiltonian dominate over the random noisy couplings dominated by the training set, and correspondingly the energy landscape increasingly matches XORSAT (as the learned energy function becomes the XORSAT Hamiltonian exactly).

Finally, we show that this optimal performance of DenseAM in learning XORSAT persists when a small fraction of the bits in XORSAT are not shown to the AI -- they are latent.  Again, the simplifying structure of XORSAT makes it possible for us to very efficiently learn the underlying latent variables, analogous to sparse latent learning \cite{BreslerKoehlerMoitra2019,BreslerBuhai2020}.

\section{Basics of Dense Associative Memory}
We are studying models where the patterns to learn are bitstrings of length $n$.  We denote each pattern as an $n$-component vector $(\xi^\mu_1,\ldots, \xi^\mu_n)$ where each $\xi^\mu_i \in \lbrace \pm 1\rbrace$.  Here and in what follows, the indices $i,j,\ldots \in \lbrace 1,\ldots, n\rbrace $ denote the bits, while $\mu,\nu,\ldots$ denote the patterns that we will feed the architecture.   We will choose patterns $\xi^\mu \in \mathcal{C}$ randomly out of some subset $\mathcal{C}\subseteq \lbrace \pm1\rbrace^n$.  Our goal will be to learn an energy function $\widehat{E}:\lbrace \pm 1\rbrace^N \rightarrow \mathbb{R}$,  such that every element of $\mathcal{C}$ is a local minimum of $\widehat{E}$.

A simple proposal is to choose $\widehat{E}$ as an $m$-local DenseAM \cite{krotov2016}:
\begin{equation}
    \widehat{E}(\sigma) = -\frac{1}{P}\sum_{\mu=1}^P \left(\sum_{i=1}^n \xi_i^\mu \sigma_i\right)^m. \label{eq:DAM}
\end{equation}
It is convenient to introduce tensor  
\begin{equation}
    T_{i_1\cdots i_m}:= \frac{1}{P}
\sum_{\mu=1}^P \xi^\mu_{i_1}\cdots \xi_{i_m}^\mu.  \label{eq:Ti1im}
\end{equation}
and its \emph{thresholded} version
\begin{equation}
    \widehat{T}_{i_1\cdots i_m} = \phi_a (T_{i_1\cdots i_m}),
\end{equation}
where 
\begin{equation}
        \phi_a(x) := \frac{1}{1-a}\mathrm{sign}(x) \max (0, |x|-a).
\end{equation}
For later purposes, we also write
\begin{equation}
    \widehat{E}(\sigma) = -\sum_{i_1,\ldots, i_m} \widehat{T}_{i_1\cdots i_m}\sigma_{i_1}\cdots \sigma_{i_m},  \label{eq:energy}
\end{equation}
where the sum over indices $i_1,\ldots, i_m$ allows repeated entries. In the classical DenseAM \eqref{eq:DAM}, we have $a=0$:
\begin{equation}
    \widehat{T}_{i_1\cdots i_m} = \phi_{a=0} (T_{i_1\cdots i_m})= \frac{1}{P}
\sum_{\mu=1}^P \xi^\mu_{i_1}\cdots \xi_{i_m}^\mu.  \label{eq:Ti1im}
\end{equation}
As we will see later, it is also interesting to consider a thresholded DenseAM (TDenseAM), where parameter $0\leq a<1$. The role of the parameter $a$ is to provide a ``noise floor" below which we reject couplings in $\widehat{E}$.  In other words, we wish to only keep terms in $\widehat{E}$ where we see a persistent signal across the observed patterns.

Once we have a learned energy function $\widehat{E}$, we simply perform gradient descent from any initial condition to deduce the output of the model, as is standard in the literature on EBMs.  In practice, we will perform gradient descent by choosing each bit (in a random order), and flipping it if doing so lowers the energy.  We update the value of $E$ before checking the next bit.   This algorithm terminates in a local minimum after a finite number of steps \cite{krotov2025modern}.

\section{LDPC codes}
Whether or not the DenseAM can efficiently learn will clearly depend on the choice $\mathcal{C}$ of patterns.  A natural idea is to choose $\mathcal{C}$ as the ground states of a frustration-free energy function $E$, which itself takes the form of \begin{equation}
    E(\sigma): = -\sum_{i_1,\ldots, i_m} J_{i_1\cdots i_m}\sigma_{i_1}\cdots \sigma_{i_m}.
\end{equation}
Then we would want to learn $\widehat{T}_{i_1\cdots i_m}=J_{i_1\cdots i_m}$.  There's a natural way to choose such $J_{i_1\cdots i_m}$.  If $E$ is frustration-free, we need, for every $\sigma \in \mathcal{C}$, $J_{i_1\cdots i_m}\sigma_{i_1}\cdots \sigma_{i_m}\ge 0$.   For simplicity, suppose each $J_{i_1\cdots i_m}\ge 0$.  Then we need $\sigma_{i_1}\cdots \sigma_{i_m}=1$ for any $J_{i_1\cdots i_m}>0$.  This is precisely the XORSAT problem in computer science, also known as a low-density parity-check (LDPC) code \cite{urbanke}.

It is more natural to switch to a binary representation: \begin{subequations} \label{eq:x-to-sigma}
    \begin{align}
        \xi_i^\mu &=: 1-2b_i^\mu, \\
        \sigma_i &=: 1-2x_i. 
    \end{align}
\end{subequations}
The ground states of $E$, or the elements of $\mathcal{C}$, are then the right null vectors of a parity-check matrix $H$: \begin{equation}
    \mathcal{C} = \lbrace b\in \mathbb{F}_2^n : Hb=0\rbrace ,
\end{equation}
where $\mathbb{F}_2 = \lbrace 0,1\rbrace$ and the parity-check matrix $H$ is $\mathbb{F}_2$-valued.   For every $J_{i_1\cdots i_m} \ne 0$, we can add a row $\alpha$ to the parity-check matrix with \begin{equation}
   H_{\alpha i} = \left\lbrace\begin{array}{ll} 1 &\ i\in \alpha =\lbrace i_1,\ldots, i_m\rbrace \\ 0 &\ i \notin \alpha \end{array}\right..
\end{equation}
By construction, notice that $\mathcal{C}$ has the structure of a vector space.  The dimension of this vector space is labeled $k$, and referred to as the number of logical bits.  Clearly then, $|\mathcal{C}|=2^k$.  In what follows, let \begin{equation}
    \widehat{\Delta_{\mathrm{B}}} := \left\lbrace \begin{array}{ll} 1 &\ \Delta_{\mathrm{B}} \text{ is even} \\  0 &\ \Delta_{\mathrm{B}} \text{ is odd} \end{array}\right..
\end{equation}

In practice, we will prefer to pick a random matrix $H$ and choose $J_{i_1\cdots i_m}$ accordingly.  We choose the random matrix $H$ as follows \cite{SipserSpielman1996}: pick a number $\Delta_{\mathrm{B}}<m$, and choose the rows of $H$ randomly such that each row has exactly $m$ 1s and each column has exactly $\Delta_{\mathrm{B}}$ 1s.  We now summarize some crucial properties of these random codes, leaving a more formal discussion to Appendix \ref{app:ldpc}.  A useful representation of an LDPC code is a Tanner graph (or a factor graph), depicted in Figure \ref{fig:tanner}.  The Tanner graph contains nodes that are either bits or checks, with an edge between bit $i$ and check $\alpha$ if and only if $H_{\alpha i}=1$.  LDPC codes then correspond to sparse Tanner graphs.  We generate LDPC codes randomly by choosing random\footnote{The precise distribution of such graphs that we sample (uniformly) from is detailed in Appendix \ref{app:ldpc}.} Tanner graphs where every bit node has degree $\Delta_{\mathrm{B}}$ and every check node has degree $\Delta_{\mathrm{C}}=m$.   Almost surely as $n\rightarrow \infty$, \begin{equation}
    k = \left(1 - \frac{\Delta_{\mathrm{B}}}{m} \right)n + \widehat{\Delta_{\mathrm{B}}}.
\end{equation}
Relatedly, the number $\mathcal{N}$ of parity-checks of weight $m$ is, with high probability, 
\begin{equation}
\mathcal{N} = \frac{\Delta_{\mathrm{B}}}{m}n + \mathrm{O}(1).  \label{eq:numberofchecks}    
\end{equation}

\begin{figure}
    \centering
    \includegraphics[width=0.5\linewidth]{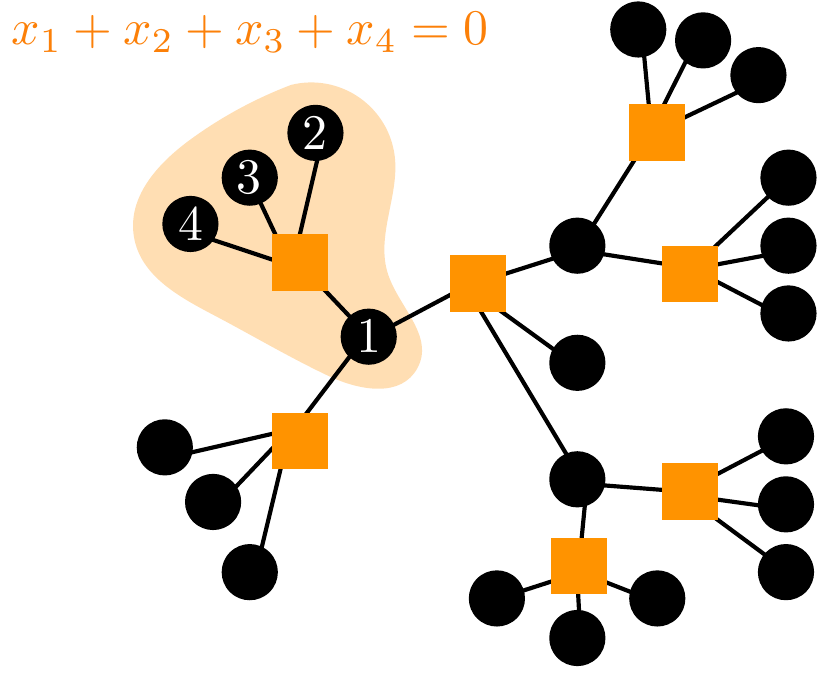}
    \caption{Illustration of a local patch of a Tanner graph of  a random code with $(\Delta_{\mathrm{B}},\Delta_{\mathrm{C}})=(3,4)$.  Bits $i$ are depicted as black circles; checks $\alpha$ are depicted as orange squares.  The number of small cycles in a Tanner graph is extremely small, so the graph will typically look locally tree-like.}
    \label{fig:tanner}
\end{figure}

There are simple, yet useful, observations about the typical weights $T_{i_1\cdots i_m}$.   If $\lbrace i_1,\ldots, i_m\rbrace$ is a real parity-check contained in $\mathrm{im}(H^{\mathsf{T}})$, we always have \begin{equation}
T_{i_1\cdots i_m} = 1. \label{eq:basic1}
\end{equation}
Otherwise, $\xi_{i_1}^\mu \cdots \xi^\mu_{i_m}$ equally likely to be $\pm1$; hence \begin{subequations}\begin{align}
    \mathbb{E}[T_{i_1\cdots i_m}] &= 0, \label{eq:zeromean} \\
\mathbb{E}\left[ {T_{i_1\cdots i_m}}^2\right] &= \frac{1}{P}. \label{eq:basic2}
\end{align}\end{subequations}
Hoeffding's inequality implies \begin{equation}
        \mathbb{P}\left(\left|T_I\right| > s\right) \le 2\mathrm{e}^{-Ps^2/2}. \label{eq:hoeffding}
    \end{equation} 
In fact, these observations explain why we should choose $\mathcal{C}$ as the codewords (ground states) of an LDPC code's $E$.   $\widehat{T}_{i_1\cdots i_m}$ are chosen by \eqref{eq:Ti1im} to be equal weight superpositions of the observed patterns.  So in general, TDenseAM does not perform Hamiltonian learning; it simply measures low-order correlations between spins when drawing samples uniformly at random from $\mathcal{C}$.  We are only guaranteed that these low-order correlations \emph{exactly} match the terms in $E$ itself, which follows from the equations above, when $E$ parametrizes a linear code.  For this reason, the (T)DAM architecture is perfectly suited to learn binary linear codes.  The suitability of the data set to the architecture will enable us to make rigorous statements about how the model learns.

\section{Learning random codes}
Now let us describe the learning process.  Here we give an intuitive and non-rigorous discussion; all scaling arguments are established rigorously in Appendix \ref{app:learning}.  

It is useful to first briefly review  how the regular DenseAM ($a=0$) architecture trains on random data \cite{krotov2016}.  Suppose that the number of patterns $P \sim n^\alpha$.   When $\alpha < m-1$,  \eqref{eq:DAM} is dominated by the actual learned patterns, in the sense that there are deep energy wells surrounding every seen pattern $\xi^\mu$.  (There are also ``mixed minima" corresponding to particular mixtures of patterns \cite{gayrard2026}.)  In contrast, once $\alpha>m-1$, the energy landscape becomes corrupted and overtrained. 

The story changes when we train on random codewords of an LDPC code.  Now the learned energy $\widehat{E}$ consists of two parts: the genuine parity-checks of the code, which contribute energy $\sim \mathcal{N}$ independent of the number of seen examples $P$, along with ``junk" pattern-sensitive terms, which contribute a typical energy $\sim \sqrt{n^m/P}$; these facts follow from \eqref{eq:basic1} and \eqref{eq:basic2}.  If $P\sim n^\alpha$ with $\alpha > m-2$, we see that the true parity-checks of the code begin to meaningfully shape the energy landscape: codewords are lower energy than random locations.    However, this does not mean that DenseAM will successfully recover unseen codewords.   Consider starting at an unseen codeword and perturbing the state by flipping any O(1) fraction of bits to align with a seen codeword.  The net energy change is \begin{equation}
    \mathrm{\Delta} \widehat{E} \sim \mathcal{N} - \frac{n^m}{P},
\end{equation}where the first positive contribution counts the number of violated parity-checks, while the second negative contribution comes from the huge energy gain from pattern alignment in \eqref{eq:DAM}. Using \eqref{eq:numberofchecks}, we see that until $\alpha \ge m-1$, the second term wins, so the DenseAM has a heavy bias towards reproducing seen patterns, just as when it is trained on random data.    In fact, we prove when $P/n^{m-1}$ is larger than a suitable O(1) constant, there are (with high probability) extensive energy barriers around each codeword.  This ensures that the denseAM will \emph{not} simply return a seen pattern -- it will at worst give an approximation of the true codeword (with a bounded fraction of errors) when initialized on an arbitrary codeword.

Next, let us turn to the opposite regime of $a\rightarrow 1$.   This extreme thresholding limit is equivalent to Valiant's (XOR)SAT learning algorithm \cite{valiant}, and to the ``reverse code engineering" problem \cite{tillich,sudan}.   For simplicity in the discussion that follows, let us assume that the first shown pattern is $\xi_i^1=1$ (we can always ``gauge fix" the training data so that this occurs).   After the first pattern, TDenseAM will presume that if $m$ is odd, every spin must be fixed to $\sigma_i=1$; if $m$ is even, $\sigma_i=-1$ is also allowed.   TDenseAM presumes that a parity-check is valid if and only if it has never seen it violated, and at this first step it has learned $\sim n^m$ checks.  All but $\mathcal{N}$ of them are spurious and should be unlearned during training.  As in \eqref{eq:zeromean}, each subsequent shown random codeword $\xi^\mu$ has a 50\% chance of violating each remaining spurious check.  The number of parity-checks in $\widehat{E}$ thus scales as $\mathcal{N} + n^m 2^{-P}$ and concentrates around this mean in the thermodynamic limit.  

Denote $\widehat{H}(P)$ as the learned parity-check matrix after $P$ patterns. $\widehat{H}(P)$  contains checks of weight $m$, $m-2$, etc.  Further denote\begin{equation}
    \widehat{k}(P) := \dim \left(\ker \widehat{H}(P)\right).
\end{equation}The number of minima of $\widehat{E}$ is $2^{\widehat{k}}$, because $\widehat{E}$ consists only of the parity-checks consistent with all seen codewords.  Since the minima of $\widehat{E}$ correspond to solutions of a linear equation, superpositions of observed codewords are also valid solutions.   

We conjecture that there is a sharp transition near \begin{equation}
    P_* = (m-1)\log_2 n.
\end{equation}
Indeed, we can show that once $P>P_*$, the number of spurious checks scales subextensively with $n$.  Even if all of these spurious checks are linearly independent (and they will be with high probability), they can at most reduce the rank to \begin{equation}
    \widehat{k} \approx k - \frac{n^m}{m!}2^{-P};
\end{equation} 
the first term $k=\mathrm{\Theta}(n)$ while once $P_* \sim (m-1+\epsilon)\log_2n$ for any $\epsilon>0$, the second term is subextensive.   

The behavior for $P<P_*$ depends on the value of $m$.  For $m=2$, we find that $\widehat{k}\approx 2^P$ for $P\ll \log_2k$.  This is because every LDPC code with weight-2 checks corresponds to a set of decoupled repetition codes: namely, there will be $k$ independent clusters of spins that must all agree within each cluster.  We need only observe codewords where any pair of two repetition code blocks take different values in order to distinguish them.  For example, we can see patterns $0000$, $0011$, $0101$ to learn that these 4 bits are all independent.

In contrast, for $m>2$, we instead expect that \begin{equation}
    \widehat{k}= P-1+\widehat{\Delta_{\mathrm{B}}} \label{eq:widehatkslow}
\end{equation}
until $P\sim (m-1)\log_2 n [1+\mathrm{o}(1)]$.    The reason for this drastic change in behavior is simple, yet profound: with higher weight checks, the $\sim n^m 2^{-P}$ surviving spurious checks necessarily have many linear dependencies (i.e. redundancies) until $P=(m-1)\log_2 n$.   These dependencies are also more complex compared to $m=2$: checks $w_1$ and $w_2$ can be violated by the next codeword while $w_1+w_2$ is a combination of non-violated checks.  We can crudely estimate that the typical number of ways to express $w_1+w_2$ as a linear combination of $L$ non-violated checks scales as $\gtrsim 2^{-PL}n^{mL/2-2m}$.  Hence when $P\lesssim cm \log_2n$ (with $c\le \frac{1}{2}$), removing $w_1$ and $w_2$ only increases $\widehat{k}$ by 1.  With a little more work, we can prove \eqref{eq:widehatkslow} as long as $P\lesssim cm\log_2 n$.  \eqref{eq:widehatkslow} is related to the fact that most random codes generated by picking $k\sim \log n$ random bitstrings as codewords are LDPC \cite{sudan}; it is then natural that a similar fact such as \eqref{eq:widehatkslow} can persist when given random codewords.  It may be a little surprising, though, that the structure of the code reveals itself so rapidly!

The dramatic change in learnability of the LDPC code is very easy to see when training a weight-$m$ TDenseAM on random bitstrings.  As $a\rightarrow 1$, the TDenseAM should eventually learn $\widehat{E}=0$, or $\widehat{k}=n$.   The random trajectories of $\widehat{k}(P)$ behave very differently when $m=2$ vs. $m>2$, exactly as predicted above.  A small numerical demonstration is shown in Figure \ref{fig:m2m4}.

\begin{figure}
    \includegraphics[width=0.49\textwidth]{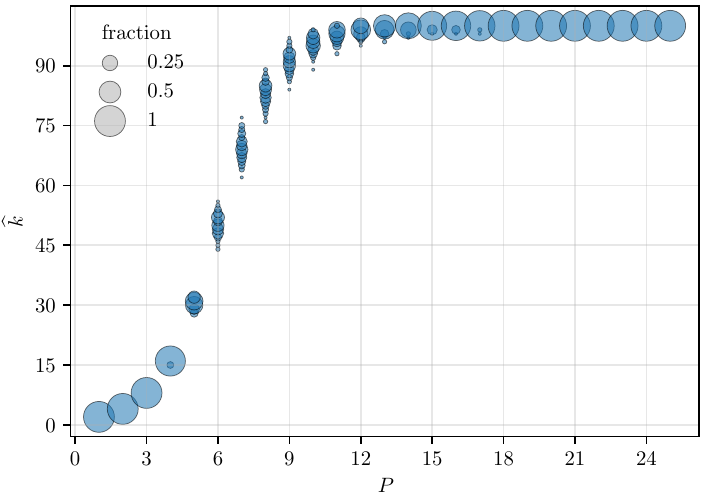}
    \includegraphics[width=0.49\textwidth]{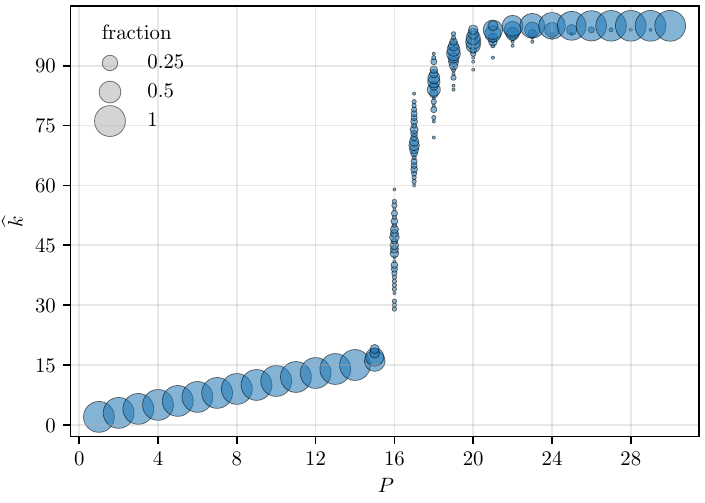}
    \caption{$\widehat{k}$ as a function of $P$ when training the $a\rightarrow 1$ TDenseAM on random bitstrings of length $n=100$.  The bubble plot depicts the probability of seeing a given value of $\widehat{k}$ at each $P$ when choosing a random sequence of training data.  Left: $m=2$.  Right: $m=4$, where \eqref{eq:widehatkslow} is clearly visible at small $P$.}
    \label{fig:m2m4}
\end{figure}

The punchline is that when $m>2$, TDenseAM learns ``slowly" at small $P$, and largely by memorization.  Indeed, for the $a\rightarrow 1$ learner, linear combinations of seen patterns are necessarily minima, so $\widehat{k} \rightarrow \widehat{k}+1$ learning is the least efficient possible, as long as every new seen pattern is linearly independent of the others (which almost surely happens as $n\rightarrow\infty$).   At a critical $P=P_*$, the spurious checks abruptly fall ``below" the noise threshold, and TDenseAM begins to rapidly \emph{generalize} beyond the training data it has seen.  

 This behavior is reminiscent of the grokking phenomenon seen in neural networks \cite{power2022grokking, gromov2023grokkingmodulararithmetic,MohamadiLiWuSutherland2024,morris2025much, li2026grokking}: one trains a neural network on ``algorithmic datasets" such as ours, and observes that first the neural network learns to reproduce training data it has seen, only ``discovering" the underlying rules generating the data long after reproducing training data. In neural networks, one could either talk about such a transition upon showing more training data, or optimizing network weights on a fixed dataset; in our context, only the former makes sense.   In the context of our XORSAT problem, the TDenseAM is basically guaranteed by construction to always correctly reproduce training data it has already seen.  The non-trivial step in the analogue ``grokking" is to generalize, which in this context amounts to correctly assigning unseen codewords (solutions) to local minima in $\widehat{E}$.    Figure \ref{fig:m2m4} demonstrates our analogue of this ``grokking" phenomenon in the $m>2$ TDenseAM when trained on random data.  In our context, the fact that grokking is a manifestation of the need to suppress spurious couplings in $\widehat{E}$ is very clear.  Firstly, it takes a long time for the spurious checks in $\widehat{E}$ to be suppressed relative to the true checks.  The abruptness of the transition is, in some sense, simply an artifact of the slow rate at which TDenseAM learns to remove spurious checks.  Secondly, once the spurious contribution to $\widehat{E}$ is comparable  to the true one, removing half of the spurious checks in each step leads to an explosive growth in generalizability, with nearly the entire code being discovered in a short training window.

When $0<a \ll 1$, we expect a rough ``interpolation" between the two extreme limits above. From \eqref{eq:hoeffding}, TDenseAM will almost surely learn the code so long as \begin{equation}
    P > \frac{2m}{a^2}\log_2 n. \label{eq:P1a2}
\end{equation}
This follows directly from a sub-optimal tail bound estimate on the probability of seeing a finite fraction $\ge \frac{1+a}{2}$ of codewords that violate (or not) any spurious check.  So, any amount of thresholding at all immediately leads to complete discovery of the underlying code -- and all of its $\exp(\mathrm{O}(n))$ codewords -- in $\mathrm{O}(\log n)$ patterns. 

We directly test this transition numerically by studying the behavior of the TDenseAM when it is initialized in an unseen codeword $\sigma^{\text{new}}$.  Suppose that after initializing $\sigma \rightarrow \sigma^{\text{new}}$ and running gradient descent using $\widehat{E}$, the final bit string is  $\sigma^{\mathrm{end}}$.   To test whether or not TDenseAM has simply memorized the patterns it has already seen, we probe \begin{equation}
    f_{\mathrm{new}} := \frac{1}{n} \min_\mu  \mathsf{d}_{\mathrm{H}}\left(\xi^\mu,  \sigma^{\text{end}}\right).
\end{equation}
Here $\mathsf{d}_{\mathrm{H}}$ denotes Hamming distance, or the number of bits in different states.  In the case where $m$ is even, we must also minimize over $-\xi^\mu$ as the TDenseAM will always have a global spin flip symmetry.  $f_{\text{new}}$ thus represents the fraction of bits in a ``new configuration" relative to any trained pattern:  $f_{\text{new}}$ should be 0 if we are memorizing, and O(1) if we are discovering new codewords.   Since we have fed the TDenseAM an actual codeword, we can test whether it has learned by asking whether or not the initial state $\sigma^{\text{new}}$ was in fact a local minimum, which is conveniently measured by the fraction of flipped spins: \begin{equation}
    f_{\text{flip}} := \frac{1}{n} \mathsf{d}_{\mathrm{H}}\left(\sigma^{\text{new}},  \sigma^{\text{end}}\right)
\end{equation}
Figure \ref{fig:mem-gen} shows the behavior of $f_{\text{flip}}$ and $f_{\text{new}}$ as a function of $P$.  When $a \ll 1$, we see a long period of \emph{strict} memorization -- just like in the $a=0$ DenseAM, we are overwhelmingly likely to simply reproduce a seen training example as the energy landscape is overwhelmingly shaped by these seen examples.  (Occasionally we do see the model settle at an unseen point, but this is rare.)  At a scale smaller than our bound in \eqref{eq:P1a2}, TDenseAM begins to efficiently generalize: $\widehat{E}$ develops local minima appear near typical unseen codewords, so it has learned the rules generating the data set.  The transition from memorization to generalization appears relatively sharp, but we have not analytically characterized this transition, and our simulations are clearly limited by finite-size effects.   The transition is clearly much more efficient when $a\rightarrow 1$.  In this case, $f_{\text{new}}$ is appreciable already at $P=2$, which is easily understood: TDenseAM in this limit also learns linear combinations of seen codewords.

\begin{figure}
    \includegraphics[width=0.49\textwidth]{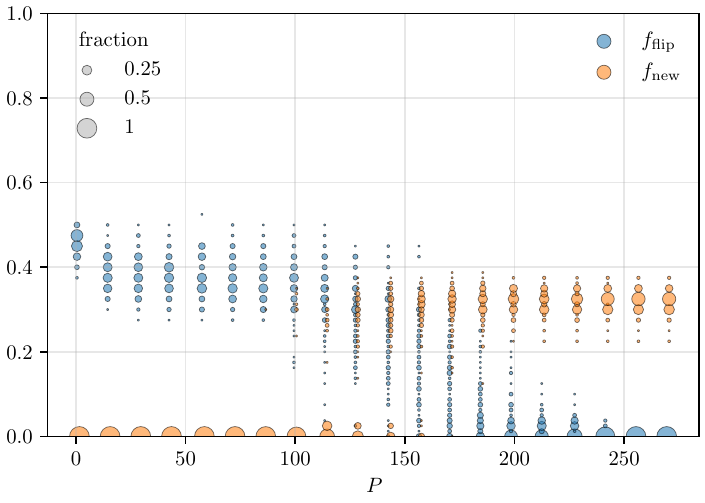}
    \includegraphics[width=0.49\textwidth]{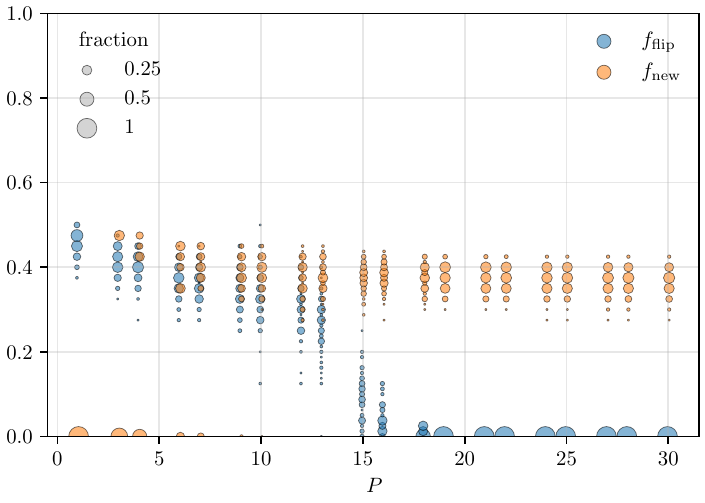}
    \caption{The memorization-generalization transition as a function of $P$, after initializing TDenseAM at unseen codewords.  The probability distributions of $f_{\text{flip}}$ and $f_{\text{new}}$ are depicted as bubble plots.   Training is done on a (3,4) code ($n=80$,  $k=20$).  Left: $a=0.2$.  Right: $a\rightarrow 1$.}
    \label{fig:mem-gen}
\end{figure}

It is plausible that right at the memorization-generalization transition, at any $a$, TDenseAM becomes very ``corrupted" by a glassy energy landscape in $\widehat{E}$.  This occurs in Figure \ref{fig:mem-gen} during the regime when both $f_{\text{flip}}$ and $f_{\text{new}}$ are sizable, right before generalization.  This is likely due to the glassy energy landscape of random LDPC codes themselves.  It would be interesting to investigate further in future work.

\section{Discovering latent variables}
It is often believed that when studying complex data patterns, some of the relevant degrees of freedom are not directly shown to the observer.  Such bits are called \emph{latent}.   A potential example is that visible bits correspond to the pixels in an image, while latent bits describe what the image represents.   There may not be a simple, local expression for $\widehat{E}$ without learning the latent variables.

Amazingly, we can also use TDenseAM to efficiently learn latent variables in XORSAT.   Let the first $n-\overline{n}$ bits be observed, and the remaining $\overline{n}$ bits be latent.  In ths section, we will denote observed bits with $x_i$ $(i=1,\ldots, n-\overline{n})$ and latent bits with $h_\alpha$ ($\alpha=n-\overline{n}+1,\ldots,n$).  We wish to generalize TDenseAM so that it can discover the latent bits.  Importantly, TDenseAM will not be told how many latent bits there are, nor where latent bits may be relative to the observed interaction graph -- it must discover all of this on its own.

A conservative way to discover a latent variable is to detect what we call an observable 3-shadow.  In the context of LDPC codes with weight-$m$ checks, this shadow will take the form of three $2m-2$-weight observable checks with a special structure: \begin{subequations}
    \begin{align}
        w_1 &= \overline{w}_2 + \overline{w}_3, \\ 
        w_2 &= \overline{w}_3 + \overline{w}_1, \\ 
        w_3 &= \overline{w}_1 + \overline{w}_2.
    \end{align}
\end{subequations}
 Here $\overline{w}_{1,2,3}$ are vectors of Hamming weight $m-1$.  Notice that $w_{1,2,3}$ are consistent with three smaller, weight $m$, checks that contain a latent variable $h$: \begin{equation}
            h + \overline{w}_1 =h + \overline{w}_2=h + \overline{w}_3 = 0. \label{eq:hwA0}
        \end{equation}
The key intuition is that if we observe three such patterns, rather than only two, we have clear insight as to where the latent variable must go.  If we only saw $w_1$, but not $w_{2,3}$, the decomposition into $\overline{w}_{2,3}$ is not knowable.  In principle, one could still introduce many candidate latent variables and keep only the ones which remain predictive.  To keep the story simple, however, we will restrict to learn latent variables when they have a 3-shadow as above.

We now summarize how to build a latent TDenseAM, leaving details to Appendix \ref{app:latent}.  The strategy is to run TDenseAM looking for weight-$(2m-2)$ checks.   We immediately accept checks of weight $\le m$, but also look for observable 3-shadows among the observed variables.   When we detect them, we introduce a latent variable to reduce them into acceptable checks with weight $\le m$.  The value of the latent variable should then be appended to all of the training data.  Then, we re-run TDenseAM with the learned latent variables treated as observed.  This process continues until we cannot discover any more latent variables.  An illustration is given in Figure \ref{fig:latent}. 

       \begin{figure}
    \centering
    \includegraphics[width=0.5\linewidth]{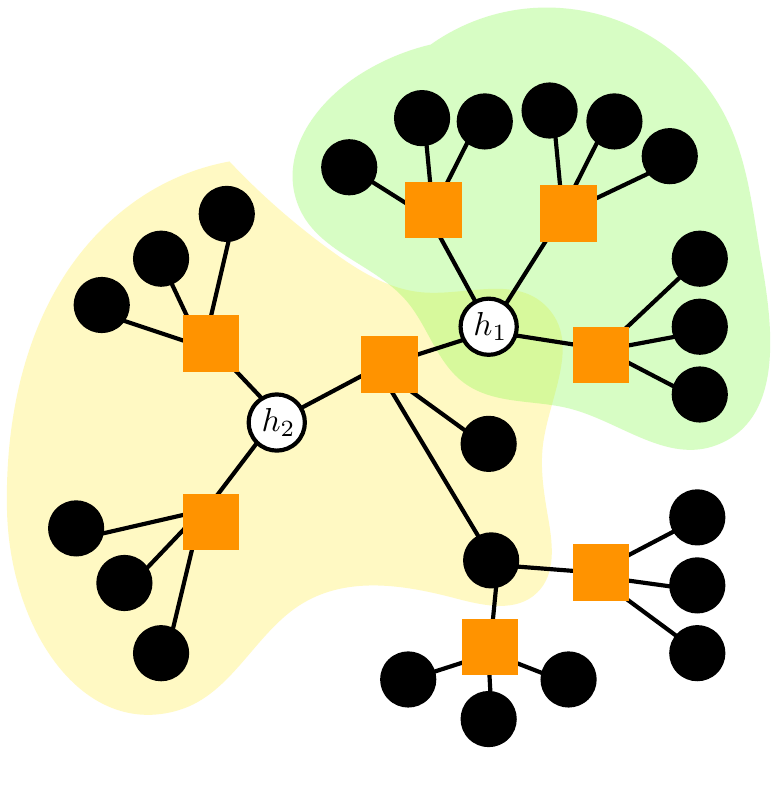}
    \caption{Illustration of a local patch of a Tanner graph of  a random code with observed (black) and latent (white) bits. The latent TDenseAM learner will detect $h_1$ since it has an observable 3-shadow.  Learning $h_1$ reveals $h_2$, which also subsequently has an observable 3-shadow.}
    \label{fig:latent}
\end{figure}

We prove in Appendix \ref{app:latent} that, almost surely as $n\rightarrow \infty$, latent TDenseAM can learn every latent variable correctly when training on random $(\Delta_{\mathrm{B}},\Delta_{\mathrm{C}})$ LDPC codes, if $5<\Delta_{\mathrm{B}}<\Delta_{\mathrm{C}}$, \begin{equation}
    \overline{n} < \gamma n \label{eq:overlinenbound}
\end{equation} for some O(1) constant $\gamma$, and the number of training patterns exceeds \begin{equation}
           P > \frac{4(m-1)}{a^2}\log_2 n.
       \end{equation}  
       In a nutshell, the proof uses the fact that locally, random codes look rather tree-like.  This property of the code, more accurately called expansion, guarantees that some latent variables will always have observable 3-shadows, and that any time an observable 3-shadow is found, it comes from an actual latent variable. This also guarantees that latent TDenseAM will never introduce incorrect constraints.

       The guarantee above does not rely on the latent bits being distributed randomly amongst the code; every subset of $\overline{n}$ latent bits will be learned, as long as we have \eqref{eq:overlinenbound}.   This is interesting because it means that we can even learn latent bits which are only observable after $\sim \log n$ rounds of running TDenseAM.  The correlations used to detect these deep latent variables will originally be extensive correlations among finite fractions of all of the observed bits.  The expansion of random codes, along with the certification of the shallower latent bits, guarantees that the deepest latent variable is genuine and detectable.


\section{Outlook}
We have presented a simple and largely solvable model for how a (thresholded) dense associative memory learns on a specific algorithmic dataset -- randomly chosen codewords of an expander LDPC code.  This architecture first learns by memorization of observed patterns $\xi^\mu$, and later abruptly begins to generalize well beyond the training data, in a sharp transition somewhat reminiscent of grokking.   The same architecture is also capable of discovering latent variables efficiently, providing a simple and solvable cartoon for the role of deep latent variables as sparsifications of the rules that generated the dataset.

Although we have not explicitly verified it, it is almost certain that the models described here can easily train on corrupted LDPC codewords, and any thresholding $a>0$ ensures learnability in $\mathrm{\Theta}(\log n)$ corrupted samples (so long as the corruption is reasonably independent);  it would be interesting to study the dynamics of learning more carefully in this setting.  More non-trivially, carefully chosen EBMs can provably efficiently learn other problems: for example, certain modular addition tables are discovered \cite{lucasfuture} doubly-exponentially faster in EBMs than in one-hot neural networks \cite{power2022grokking, gromov2023grokkingmodulararithmetic,MohamadiLiWuSutherland2024,morris2025much, li2026grokking}.  Perhaps the ideas we have fleshed out in a simplified setting may prove relevant more broadly in a theory of how AI learns structured data.

\section*{Acknowledgments}
We thank MPI-PKS for hospitality as this work was initiated.   ChatGPT 5.5/6 Pro was used to help write codes and to assist in proving the main theorems.  Codex 5.5 Medium wrote the code to make figures on small sample sizes for the main text.

\begin{appendix}

\makeatletter


\let\@sectioncntformat\@seccntformat

\let\@hangfrom@section\@hang@from


\renewcommand\section{\@ifstar{\AndySectionStar}{\@startsection{section}{1}{\z@}%

  {-3.5ex \@plus -1ex \@minus -.2ex}%

  {2.3ex \@plus .2ex}%

  {\normalfont\Large\bfseries}}}

\renewcommand\subsection{\@ifstar{\AndySubsectionStar}{\@startsection{subsection}{2}{\z@}%

  {-3.25ex\@plus -1ex \@minus -.2ex}%

  {1.5ex \@plus .2ex}%

  {\normalfont\large\bfseries}}}

\renewcommand\subsubsection{\@ifstar{\AndySubsubsectionStar}{\@startsection{subsubsection}{3}{\z@}%

  {-3.25ex\@plus -1ex \@minus -.2ex}%

  {1.5ex \@plus .2ex}%

  {\normalfont\normalsize\bfseries}}}

\makeatother

\renewcommand{\thesubsection}{\thesection.\arabic{subsection}}
\renewcommand{\theequation}{\thesection.\arabic{equation}}
\renewcommand{\thesubsubsection}{\thesubsection.\arabic{subsubsection}}
\makeatletter
\renewcommand\section{\@ifstar{\AndySectionStar}{\@startsection {section}{1}{-\parindent}%
  {-3.5ex \@plus -1ex \@minus -.2ex}%
  {2.3ex \@plus .2ex}%
  {\normalfont\Large\bfseries}}}
\renewcommand\subsection{\@ifstar{\AndySubsectionStar}{\@startsection{subsection}{2}{-\parindent}%
  {-3.25ex\@plus -1ex \@minus -.2ex}%
  {1.5ex \@plus .2ex}%
  {\normalfont\large\bfseries}}}
\renewcommand\subsubsection{\@ifstar{\AndySubsubsectionStar}{\@startsection{subsubsection}{3}{-\parindent}%
  {-3.25ex\@plus -1ex \@minus -.2ex}%
  {1.5ex \@plus .2ex}%
  {\normalfont\normalsize\bfseries}}}
\makeatother

\section{Review of binary linear codes}\label{app:ldpc}

\subsection{Basic definitions}
In the text that follows we will use some convenient shorthand:  $[n] = \lbrace 1,2,\ldots, n\rbrace$ refers to the set of $n$ elements labeled by the first $n$ integers.  Similarly we will just write $[I]$ to denote the set of all objects that could be labeled with $I$: in particular, if $I$ denotes an $m$-tuple of the bits which can be a coupling of the dense associative memory, then $[I] = \lbrace I \subset [n] : |I| = m\rbrace$, etc. We denote the indicator function with \begin{equation}
    \mathbb{I}[X] := \left\lbrace\begin{array}{ll}1 &\ \text{$X$ is true} \\ 0&\ \text{$X$ is false} \end{array}\right..
\end{equation}
Lastly, we will sometimes refer to a vector $x\in \mathbb{F}_2^n$ interchangably with $I \subset [n] $ defined by $I:=\lbrace i\in [n]:x_i=1\rbrace$.  So the statement $I\in \mathbb{F}_2^n$, and similar statements, should be understood in this light.  

Consider a vector space $V$ over the two-element field $\mathbb{F}_2 = \lbrace 0,1\rbrace$, defined by addition and multiplication mod 2.  The mod 2 will always be implicit when it is clear from context that operations are done over $\mathbb{F}_2$. A useful representation of $V \cong \mathbb{F}_2^k$ will be given by the following proposition, which summarizes basic facts about linear algebra over finite fields: \begin{prop}
    Define a matrix $G \in \mathbb{F}_2^{k\times n}$ such that \begin{equation}
        V = \mathrm{im}(G^{\mathsf{T}}),
    \end{equation}
    namely the rows of $G$ are linearly independent generators of $V\subset \mathbb{F}_2^n$.  Then there exists a (non-unique) matrix $H\in \mathbb{F}_2^{m\times n}$ with $m\ge n-k$ such that $HG^{\mathsf{T}} = 0$, and 
    \begin{equation}
    \ker(H) = \mathrm{im}(G^{\mathsf{T}}).    
    \end{equation}
   We will denote $V=\ker(H)$ and $V^\perp := \mathrm{im}(H^{\mathsf{T}})$.\footnote{Note that on $\mathbb{F}_2$, $V\cap V^\perp$ can be non-empty.}
\end{prop}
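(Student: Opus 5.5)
We construct $H$ directly from the annihilator of $V$ and then verify the two required properties, $HG^{\mathsf{T}}=0$ and $\ker(H)=\mathrm{im}(G^{\mathsf{T}})$. The only inputs are standard linear algebra over $\mathbb{F}_2$: Gaussian elimination and rank--nullity, both of which hold over any field.

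\textbf{Putting $G$ in systematic form.} Since the $k$ rows of $G$ are linearly independent, $G$ has rank $k$. Row reduction over $\mathbb{F}_2$ then gives $G = A\,[\,\mathbb{1}_k \;|\; M\,]\,\Pi$, where $A\in\mathbb{F}_2^{k\times k}$ is invertible, $\Pi$ is an $n\times n$ permutation matrix, and $M\in\mathbb{F}_2^{k\times(n-k)}$. Row operations do not change the row space, and the column permutation is merely a relabeling of the bits. So it suffices to treat $G_0=[\,\mathbb{1}_k \;|\; M\,]$.

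\textbf{Constructing $H$.} Define
\[
H_0 = [\,M^{\mathsf{T}} \;|\; \mathbb{1}_{n-k}\,], \qquad H = H_0\,\Pi .
\]
Using $\Pi\Pi^{\mathsf{T}}=\mathbb{1}$, we compute
\[
HG^{\mathsf{T}} = H_0 G_0^{\mathsf{T}} A^{\mathsf{T}} = (M^{\mathsf{T}}+M^{\mathsf{T}})A^{\mathsf{T}} = 0 ,
\]
since $1+1=0$ in $\mathbb{F}_2$. This shows $\mathrm{im}(G^{\mathsf{T}})\subseteq \ker(H)$.

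\textbf{Equality of the two spaces.} The matrix $H$ contains an identity block, so it has rank $n-k$. By rank--nullity, $\dim\ker H = k = \dim \mathrm{im}(G^{\mathsf{T}})$, and the inclusion above is therefore an equality.

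\textbf{The condition $m\ge n-k$ and non-uniqueness.} Appending further rows drawn from $\mathrm{im}(H^{\mathsf{T}})$, for example repeated rows or sums of existing rows, leaves the kernel unchanged. This produces valid parity-check matrices with any number of rows $m\ge n-k$. Conversely, any valid $H$ must satisfy $\mathrm{rank}\,H = n-k$, so it needs at least $n-k$ rows.

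The same freedom shows that $H$ is not unique: left multiplication by any invertible matrix also gives a valid $H$. The space $V^\perp=\mathrm{im}(H^{\mathsf{T}})$ is the annihilator of $V$ under the standard bilinear form, and has dimension $n-k$.

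\textbf{The footnote.} It remains to note that $V\cap V^\perp$ can be nonempty, i.e.\ contain a nonzero vector, because the form $x\cdot y$ is not positive definite over $\mathbb{F}_2$. For instance, $(1,1)\cdot(1,1)=0$, so for $V=\mathrm{span}\{(1,1)\}$ we have $V=V^\perp$.

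There is no genuine obstacle in this argument. The only step needing care is the rank--nullity argument, which relies on $H$ having full rank $n-k$; this holds because of the identity block in $H_0$.
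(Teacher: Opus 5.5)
Your proof is correct and follows the paper's route. The paper states this proposition without proof, but the canonical form it gives immediately afterwards, $G=(I_{k}\;\,A)$ and $H=(A^{\mathsf{T}}\;\,I_{n-k})$ obtained by Gaussian elimination, is exactly your construction; you additionally make explicit the column permutation that the paper's canonical form suppresses, and the rank--nullity step that upgrades $\mathrm{im}(G^{\mathsf{T}})\subseteq\ker(H)$ to equality.
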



\begin{defn}[Canonical form]
    By Gaussian elimination we can always find a canonical form for the generator matrix $G$ and parity-check matrix $H$: \begin{subequations}\begin{align}
        G &= \left(\begin{array}{cc} I_{k\times k}  &\ A \end{array} \right), \\
         H &= \left(\begin{array}{cc} A^{\mathsf{T}}  &\ I_{(n-k)\times (n-k)} \end{array} \right), 
    \end{align}\end{subequations}
    Here $A$ is a $k\times (n-k)$ matrix.   Notice that a canonical form of $H$ need not be sparse, even if there exists a sparse representation of $H$.
\end{defn}

\subsection{Useful properties}

 The following observation is crucial to our subsequent analysis:
\begin{prop}\label{prop:xiI}
    Let $a_i\in\mathbb{F}_2^n$.  Define $I\subseteq \lbrace 1,\ldots, n\rbrace := \lbrace i : a_i=1\rbrace$ and \begin{equation}
        \xi_I^\mu := \prod_{i\in I} \xi_i^\mu .
    \end{equation}  Then  \begin{equation}
        \frac{1}{2^{\dim V}}\sum_{\xi \in V}\xi_I = \left\lbrace \begin{array}{ll} 1 &\ a\in V^\perp\\ 0 &\ a\notin V^\perp\end{array}\right.. \label{eq:asum}
    \end{equation}
    Here $\xi \in V$ is understood via the mapping \eqref{eq:x-to-sigma}.   
\end{prop}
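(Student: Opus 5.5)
The plan is to convert the $\pm1$ product into a character of the group $V$ and then use the standard orthogonality of characters on a finite abelian group. Under the map \eqref{eq:x-to-sigma}, each $\xi_i = 1-2b_i = (-1)^{b_i}$ with $b\in V\subseteq\mathbb{F}_2^n$. Hence
\begin{equation}
\xi_I = \prod_{i\in I}(-1)^{b_i} = (-1)^{\sum_i a_i b_i} = (-1)^{a\cdot b},
\end{equation}
where $a\cdot b$ is the $\mathbb{F}_2$ inner product. The sum in \eqref{eq:asum} is therefore $2^{-\dim V}\sum_{b\in V}(-1)^{a\cdot b}$. Note that $\chi_a(b):=(-1)^{a\cdot b}$ is a group homomorphism from $(V,+)$ to $\lbrace\pm1\rbrace$.

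Next I split into the two cases of the statement. If $a\in V^\perp=\mathrm{im}(H^{\mathsf{T}})$, write $a=H^{\mathsf{T}}c$. For every $b\in V=\ker H$ this gives $a\cdot b = c\cdot Hb = 0$. So every term equals $1$, and the average is $1$. If $a\notin V^\perp$, I first show there is some $b_0\in V$ with $a\cdot b_0=1$. Given that, translation $b\mapsto b+b_0$ is a bijection of $V$, so
\begin{equation}
S=\sum_{b\in V}\chi_a(b+b_0)=\chi_a(b_0)S=-S,
\end{equation}
which forces $S=0$. Equivalently, $\chi_a$ is a nontrivial homomorphism, so its kernel is an index-2 subgroup and the $\pm1$ values cancel exactly.

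The only step with real content is the existence of $b_0$. This amounts to the identity $(\ker H)^{\perp}=\mathrm{im}(H^{\mathsf{T}})$ over $\mathbb{F}_2$, which holds even though the bilinear form is degenerate on subspaces (cf.\ the footnote that $V\cap V^\perp$ may be nonempty). I would prove it by dimension counting. Since $\mathrm{rank}\,H=\mathrm{rank}\,H^{\mathsf{T}}$, we have $\dim\mathrm{im}(H^{\mathsf{T}})=n-\dim\ker H$. For any subspace $W$, the map $\mathbb{F}_2^n\to W^*$ given by $a\mapsto(a\cdot\,)|_W$ is surjective, because the standard form on $\mathbb{F}_2^n$ is nondegenerate. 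Therefore $\dim W^{\perp}=n-\dim W$. Combined with the inclusion $\mathrm{im}(H^{\mathsf{T}})\subseteq(\ker H)^\perp$ shown in the first case, equal dimensions give equality. So any $a\notin V^\perp$ pairs nontrivially with some $b_0\in V$, which completes the argument.
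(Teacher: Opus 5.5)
Your proof is correct and follows essentially the same route as the paper: you rewrite $\xi_I=(-1)^{a\cdot b}$, observe that every term equals $1$ when $a\in\mathrm{im}(H^{\mathsf{T}})$, and otherwise pair each $b$ with $b+b_0$ for a codeword $b_0$ with $a\cdot b_0=1$, so the terms cancel. The one place you go further is justifying that such a $b_0$ exists, via the dimension count $(\ker H)^{\perp}=\mathrm{im}(H^{\mathsf{T}})$; the paper simply asserts this as $\mathrm{im}(H^{\mathsf{T}})=\ker(G)$.
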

\begin{proof}
   Observe that \begin{equation}
    \xi_I = (-1)^{\sum_{i=1}^n a_i b_i}.
    \end{equation}
    If $b\in \mathrm{im}(H^T) = V^\perp$, then $b\cdot a = 0$ for every $b \in \ker(H) = V$,  establishing the first line of our identity.   
    
    As for the second line, if $a\notin \mathrm{im}(H^{\mathsf{T}}) = \ker(G)$ then by definition there exists $c\in V = \mathrm{im}(G^{\mathsf{T}})$ such that $c\cdot b =1$.  For each $b$ in the sum \eqref{eq:asum}, it is paired with a unique partner $b+c$ in the sum (since $c+(c+b)=b$), and $(-1)^{a \cdot b} + (-1)^{a\cdot (b+c)} = 0$.  Summing over all such pairs of partners establishes the second line of \eqref{eq:asum}.
\end{proof}

\begin{prop}\label{prop:linindcheck}
    Let $\lbrace w_a\rbrace_{a=1}^q$ be linearly independent non-trivial cosets in $\mathbb{F}_2^n/\im(H^{\mathsf{T}})$.  Then there exist linearly independent codewords $\lbrace v_a\rbrace_{a=1}^q  \subset \ker(H)$ such that \begin{equation}
        w_a \cdot v_{a^\prime} = \delta_{aa^\prime}.
    \end{equation}
\end{prop}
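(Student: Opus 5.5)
This is a duality statement, and I would prove it with the nondegenerate pairing between $\mathbb{F}_2^n/\im(H^{\mathsf{T}})$ and $\ker(H)$. Write $V=\ker(H)$ and $V^\perp=\im(H^{\mathsf{T}})$, as in the first proposition of this appendix. The map
\[
\Phi:\mathbb{F}_2^n/V^\perp\to V^*,\qquad w+V^\perp\mapsto (v\mapsto w\cdot v),
\]
is well defined, because every element of $V^\perp=\im(H^{\mathsf{T}})$ pairs to zero with every $v\in\ker(H)$. It is injective by exactly the content of the second line of Proposition \ref{prop:xiI}: if $w\notin V^\perp$, then some $c\in V$ has $w\cdot c=1$. (Equivalently, $(V^\perp)^\perp=V$ holds over $\mathbb{F}_2$ by dimension counting, $\dim V^\perp=n-\dim V$.) Both spaces have dimension $\dim V$, so $\Phi$ is an isomorphism.

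Given linearly independent cosets $w_1,\ldots,w_q$, the functionals $\Phi(w_a)$ are therefore linearly independent in $V^*$. I would extend them to a basis $\Phi(w_1),\ldots,\Phi(w_{\dim V})$ of $V^*$, which is possible since $q\le\dim V$. Then I take $v_1,\ldots,v_{\dim V}\in V$ to be the dual basis, characterized by $\Phi(w_a)(v_{a'})=\delta_{aa'}$. Concretely, one can pick representatives $w_a$, form the $q\times n$ matrix $W$ with rows $w_a$, and fix a basis $G$ of $V$. The matrix $WG^{\mathsf{T}}$ has full row rank $q$ because of injectivity, so one solves $WG^{\mathsf{T}}y_{a'}=e_{a'}$ and sets $v_{a'}=G^{\mathsf{T}}y_{a'}$.

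Linear independence of the $v_a$ follows immediately. If $\sum_{a'}c_{a'}v_{a'}=0$, then pairing with $w_a$ gives $c_a=0$ for every $a$. The relation $w_a\cdot v_{a'}=\delta_{aa'}$ also does not depend on which representative of the coset is chosen, since changing the representative adds an element of $V^\perp$, which pairs to zero with $V$.

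The only step with real content is the injectivity of $\Phi$, i.e.\ that $(V^\perp)^\perp=V$ over $\mathbb{F}_2$. This is the one place where one has to be careful that self-orthogonality, $V\cap V^\perp\neq\emptyset$, does not cause trouble. It does not, because the argument uses only the rank identity $\dim\ker(H)+\mathrm{rank}(H)=n$, never an orthogonal direct-sum decomposition. Everything else is routine linear algebra.
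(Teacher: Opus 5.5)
Your proof is correct and follows essentially the same route as the paper. Nondegeneracy of the pairing between $\mathbb{F}_2^n/\im(H^{\mathsf{T}})$ and $\ker(H)$ forces the pairing matrix to have full rank $q$, and a linear solve (the paper's Gaussian elimination via $M_{ab}$) then produces the dual codewords $v_a$. Your version also spells out the dimension count behind nondegeneracy, which the paper leaves implicit, and it avoids the paper's slip of writing $w_1\in\ker(H)$ where $w_1\in\im(H^{\mathsf{T}})$ is meant.
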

\begin{proof}
    First look for a $v_1$ such that $w_1\cdot \tilde v_1=1$.  If such a $\tilde v_1$ did not exist, then $w_1\in \ker(H)$ contradicting our assumptions.   Similarly, we construct candidate $\tilde v_a$.  If it was not possible to choose $q$ linearly independent $\tilde v_a$, then there would exist some combination of $w_a$ which was orthogonal to every possible codeword, in violation of the assumption that the $w_a$ were independent.   Once we have found $q$ linearly independent $\tilde v_a$, we can do Gaussian elimination to find an invertible matrix $M_{ab}$ such that $v_a = M_{ab}\tilde v_b$.
\end{proof}

\subsection{Random expander codes}
In this subsection, we will review established facts about random expander codes \cite{urbanke}, emphasizing points which are relevant for our discussion. Its purpose is just to summarize the key properties of random codes which provide provable guarantees on efficient learning of TDenseAM for the remainder of the paper.  The reader should feel free to only refer to this section as results are referenced below.

\begin{defn}[$(\Delta_{\mathrm{B}},\Delta_{\mathrm{C}})$-LDPC codes]
    A $(\Delta_{\mathrm{B}},\Delta_{\mathrm{C}})$-LDPC code corresponds to parity-check matrix $H$ for which each row has Hamming weight $\Delta_{\mathrm{C}}$ and each column has Hamming weight $\Delta_{\mathrm{B}}$.\footnote{Often, we will choose $m=\Delta_{\mathrm{C}}$, because $\Delta_{\mathrm{C}}$ will be the natural order of interactions to learn in the dense associative memory!}  In other words, each bit connects to $\Delta_{\mathrm{B}}$ checks, and each check connects to $\Delta_{\mathrm{C}}$ bits.  It is often useful to define a set of bits $B$ and checks $C$; in the previous discussion we had informally used $B=\lbrace 1,\ldots, n\rbrace$.  Given a subset of bits $B_0\subseteq B$, we define its boundary \begin{equation}
        \lbrace c\in C : H_{cb} = 1\text{ for some }b \in B_0\rbrace =:  \partial B_0 \subseteq C.
    \end{equation}
    An analogous definition (with $H\rightarrow H^{\mathsf{T}}$) holds for the boundary of a check set.
\end{defn}

\noindent It is actually quite easy to generate these codes randomly due to a clever algorithm known as the configuration model \cite{bollobas}: one simply generates random candidate codes and rejects them if they are ``bad" -- this will succeed with high probability after only a finite number of attempts, even as $n\rightarrow \infty$.  We summarize the result without proof below: \begin{thm}\label{thm:configurationmodel}
    Propose a $(\Delta_{\mathrm{B}},\Delta_{\mathrm{C}})$-LDPC code by randomly connecting each bit to a check $\Delta_{\mathrm{B}}$ times, and stopping if the check has more than $\Delta_{\mathrm{C}}$ edges.   The probability that there is at least one bit/check pair that are connected more than once is, as $n\rightarrow \infty$, at most $1-\mathrm{e}^{-(\Delta_{\mathrm{B}}-1)(\Delta_{\mathrm{C}}-1)/2}$ \cite{bekessy}. 
\end{thm}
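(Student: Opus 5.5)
This is a first-moment computation in the configuration model, with a Poisson refinement at the end. Write the proposal as $\Delta_{\mathrm{B}} n$ half-edges ("sockets") on the bit side, each matched uniformly at random to one of $\mathcal{N}\Delta_{\mathrm{C}}$ check sockets, where $\mathcal{N}\Delta_{\mathrm{C}}=\Delta_{\mathrm{B}} n$. This is the uniform pairing, and conditioned on no overflow it is exactly the procedure described in Theorem \ref{thm:configurationmodel}. A bit/check pair $(b,c)$ is a \emph{double edge} if at least two of the $\Delta_{\mathrm{B}}$ sockets of $b$ land in sockets of $c$. Let $X$ be the number of such pairs. The goal is to show $\mathbb{P}(X\ge 1)\to 1-\mathrm{e}^{-\lambda}$ with $\lambda=(\Delta_{\mathrm{B}}-1)(\Delta_{\mathrm{C}}-1)/2$; this in particular gives the stated upper bound.

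\textbf{Step 1 (first moment).} Fix $b$ and $c$. Choose an unordered pair of sockets of $b$, in $\binom{\Delta_{\mathrm{B}}}{2}$ ways, and an ordered pair of sockets of $c$, in $\Delta_{\mathrm{C}}(\Delta_{\mathrm{C}}-1)$ ways. The probability that both chosen $b$-sockets are matched to these two $c$-sockets is $1/[(\Delta_{\mathrm{B}} n)(\Delta_{\mathrm{B}} n-1)]$. Summing over the $n\mathcal{N}$ pairs $(b,c)$ gives
\[
\mathbb{E}[X]\approx n\mathcal{N}\binom{\Delta_{\mathrm{B}}}{2}\frac{\Delta_{\mathrm{C}}(\Delta_{\mathrm{C}}-1)}{(\Delta_{\mathrm{B}} n)^2}=\frac{(\Delta_{\mathrm{B}}-1)(\Delta_{\mathrm{C}}-1)}{2}+\mathrm{o}(1),
\]
where the last equality uses $\mathcal{N}\Delta_{\mathrm{C}}=\Delta_{\mathrm{B}} n$. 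Counting pairs with three or more shared edges separately shows they contribute $\mathrm{O}(1/n)$, so they are negligible.

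\textbf{Step 2 (upper bound and Poisson limit).} By the method of factorial moments, for each fixed $r$ one shows $\mathbb{E}[(X)_r]\to\lambda^r$. For $r$ pairs $(b_j,c_j)$ that are pairwise disjoint, the $2r$ required socket matchings have joint probability $\prod_{i<2r}(\Delta_{\mathrm{B}} n-i)^{-1}$, which factorizes up to $1+\mathrm{O}(1/n)$. Configurations where the pairs share a bit or a check cost an extra factor $1/n$ per coincidence. This is the step I expect to be the main obstacle: one has to bound the overlapping configurations uniformly in the combinatorics. I would handle it by bounding, for each overlap pattern, (number of vertex choices) $\times$ (matching probability) by $\mathrm{O}(n^{-1})$, and noting there are only finitely many overlap patterns for fixed $r$.

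Brun's sieve then gives $X\Rightarrow\mathrm{Poisson}(\lambda)$, so $\mathbb{P}(X\ge1)\to 1-\mathrm{e}^{-\lambda}$. Only the upper bound is claimed, and that follows from Bonferroni truncation at an odd number of terms, which gives $\mathbb{P}(X\ge1)\le \sum_{r=1}^{2R+1}(-1)^{r+1}\mathbb{E}[(X)_r]/r!$, together with $R\to\infty$. Finally, conditioning on the no-overflow rule of the sequential procedure changes the probabilities only by a $1+\mathrm{o}(1)$ factor, since that rule is equivalent to the uniform pairing model. This recovers the classical result of \cite{bekessy}.
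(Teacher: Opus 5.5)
The paper does not prove this statement. It quotes it ``without proof'' from the asymptotic enumeration of $0$--$1$ matrices with prescribed row and column sums in \cite{bekessy}, adding only the remark that rejection sampling succeeds after a finite number of attempts. Your configuration-model argument is therefore a genuinely different, self-contained route, and it is correct.

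\begin{itemize}
\item The first moment is computed properly: with $\mathcal{N}\Delta_{\mathrm{C}}=\Delta_{\mathrm{B}}n$ it gives exactly $\lambda=(\Delta_{\mathrm{B}}-1)(\Delta_{\mathrm{C}}-1)/2$.
\item The overlap bookkeeping in $\mathbb{E}[(X)_r]$ works as you say. Any $r$ double edges require $2r$ specified socket matchings, of probability $\mathrm{\Theta}(n^{-2r})$. Each coincidence of a bit or a check removes a factor of order $n$ from the $\mathrm{\Theta}(n^{2r})$ vertex choices. Two double edges at the same bit need four distinct sockets, so that pattern simply vanishes when $\Delta_{\mathrm{B}}=3$.
\item Truncating Bonferroni at odd order, letting $n\rightarrow\infty$ and then $R\rightarrow\infty$, gives $\limsup_{n\rightarrow\infty}\mathbb{P}(X\ge1)\le 1-\mathrm{e}^{-\lambda}$.
\end{itemize}

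Your route buys the sharper fact that $X$ is asymptotically $\mathrm{Poisson}(\lambda)$, so the stated bound is attained in the limit. It also gives an explicit proof of the one direction the paper actually uses: a lower bound $\mathrm{e}^{-\lambda}-\mathrm{o}(1)$ on the probability that a proposal is simple. That is what Proposition~\ref{prop:subgraph} needs to transfer bounds from the bare pairing model to genuine LDPC codes. The cited enumeration gives a full asymptotic count of the admissible matrices, which is more than the paper needs.

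Your last sentence should be tightened, because no $1+\mathrm{o}(1)$ factor arises when passing from the stated procedure to the pairing model. Read the procedure as rejection sampling, which matches the paper's own description of the configuration model: each of the $\Delta_{\mathrm{B}}n$ bit-sockets independently picks a uniformly random check, and the whole proposal is discarded whenever some check overflows. The accepted assignment is then uniform over assignments in which every check receives exactly $\Delta_{\mathrm{C}}$ sockets. Each such assignment has exactly $(\Delta_{\mathrm{C}}!)^{\mathcal{N}}$ preimages under the uniform pairing, so the two models coincide exactly.

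Say this explicitly, because the statement also admits another natural reading: each choice is uniform among checks that are not yet full. That reading ignores the remaining capacities, is not equivalent to the pairing model, and is not covered by your argument.
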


\noindent We will use this simple generation algorithm below to prove some very useful facts about random LDPC codes.  A key property about random codes is that they are expander codes, as defined below: 

\begin{defn}[Left/right-expansion]\label{def:expansion}
    A code has $(\gamma,\delta)$-left expansion if, for every set of bits $B_0\subset B$ with $|B_0| < \gamma n$, $|\partial B_0| \ge \Delta_{\mathrm{B}}(1-\delta)|B_0|$.  It has $(\gamma,\delta)$-right expansion if the analogous property holds for checks: $|\partial C_0| \ge \Delta_{\mathrm{C}}(1-\delta)|C_0|$.  If both hold, the code has $(\gamma,\delta)$-expansion.
\end{defn}

\begin{prop}[Unique-neighbor expansion]\label{prop:expansion}
    Given a code $(B,C)$ with $(\gamma,\delta)$-expansion, consider any $B_0 \subset B$ with $|B_0| < \gamma n$.   Let $\partial_{\mathrm{u}}B_0\subset C$ be the number of unique-neighbor checks which connect only to a single bit in $B_0$: \begin{equation}
        \partial_{\mathrm{u}}B_0 := \lbrace c\in C : |\partial c\cap B_0|=1\rbrace .
    \end{equation}
    Then \begin{equation}
        |\partial_{\mathrm{u}}B_0| \ge (1-2\delta)\Delta_{\mathrm{B}}|B_0|. \label{eq:partial_u_bound}
    \end{equation}
\end{prop}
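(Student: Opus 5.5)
The plan is the standard double-counting argument on the edges of the Tanner graph between $B_0$ and its boundary $\partial B_0$; only the left expansion from Definition \ref{def:expansion} is needed. Fix $B_0\subset B$ with $|B_0|<\gamma n$. The number of edges leaving $B_0$ is exactly $\Delta_{\mathrm{B}}|B_0|$, since every bit has degree $\Delta_{\mathrm{B}}$. We split $\partial B_0$ into two disjoint sets:
\begin{itemize}
\item the unique-neighbor checks $\partial_{\mathrm{u}}B_0$, each receiving exactly one edge from $B_0$;
\item the remaining checks $\partial B_0\setminus\partial_{\mathrm{u}}B_0$, each receiving at least two edges from $B_0$.
\end{itemize}
Writing $u=|\partial_{\mathrm{u}}B_0|$ and $r=|\partial B_0|-u$, counting edges gives
\begin{equation}
\Delta_{\mathrm{B}}|B_0| \ge u + 2r = 2|\partial B_0| - u.
\end{equation}

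Next we insert the expansion lower bound $|\partial B_0|\ge \Delta_{\mathrm{B}}(1-\delta)|B_0|$, which holds because $|B_0|<\gamma n$. Rearranging the edge count gives
\begin{equation}
u \ge 2|\partial B_0| - \Delta_{\mathrm{B}}|B_0| \ge 2\Delta_{\mathrm{B}}(1-\delta)|B_0| - \Delta_{\mathrm{B}}|B_0| = (1-2\delta)\Delta_{\mathrm{B}}|B_0|,
\end{equation}
which is exactly \eqref{eq:partial_u_bound}.

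There is no real obstacle here. The only point requiring care is the edge count. Because the codes under consideration have no multi-edges (Theorem \ref{thm:configurationmodel} rejects configurations with repeated bit/check pairs), each bit contributes exactly $\Delta_{\mathrm{B}}$ distinct edges. It follows that a check with $|\partial c\cap B_0|\ge 2$ genuinely absorbs at least two of these edges. The statement says ``the number of'' while defining a set; we interpret $|\partial_{\mathrm{u}}B_0|$ as its cardinality.
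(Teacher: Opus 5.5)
Your proof is correct and is essentially the paper's argument: the same edge double count $\Delta_{\mathrm{B}}|B_0| \ge |\partial_{\mathrm{u}}B_0| + 2\left(|\partial B_0| - |\partial_{\mathrm{u}}B_0|\right)$, combined with the left-expansion bound $|\partial B_0|\ge \Delta_{\mathrm{B}}(1-\delta)|B_0|$. Your remark about multi-edges is harmless but not needed, since the inequality only uses that each unique-neighbor check receives at least one edge from $B_0$ and every other check in $\partial B_0$ receives at least two.
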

\begin{proof}
    The number of edges between bits in $B_0$ and checks in $C$ is given by $\Delta_{\mathrm{B}}|B_0|$.  By Definition \ref{def:expansion}, there are edges to $\Delta_{\mathrm{B}}(1-\delta)|B_0|$ checks.  Comparing the total number of edges to the total number of edges to unique checks, vs. edges to non-unique checks, we see that \begin{equation}
        \Delta_{\mathrm{B}}|B_0| \ge |\partial_{\mathrm{u}}B_0| + 2\left( \Delta_{\mathrm{B}}(1-\delta)|B_0| - |\partial_{\mathrm{u}}B_0| \right),
    \end{equation}
    which implies \eqref{eq:partial_u_bound}.
\end{proof}

\begin{cor}\label{cor:linearconfinement}
    For any $x$ with $|x| \le \gamma n$, a code with $(\gamma,\delta)$ expansion has \begin{equation}
        |Hx| \ge \Delta_{\mathrm{B}}(1-2\delta)|x|. \label{eq:linearconfinement}
    \end{equation}
    The code distance $d \ge \gamma n$.\footnote{This is not an optimal bound, but it suffices for our conceptual purposes in this paper -- namely, establishing that the code distance can be $\mathrm{O}(n)$ whenever we can establish expansion.}
\end{cor}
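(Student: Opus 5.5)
Identify $x\in\mathbb{F}_2^n$ with its support $B_0:=\lbrace i: x_i=1\rbrace\subseteq B$, so $|B_0|=|x|\le\gamma n$. The syndrome $Hx\in\mathbb{F}_2^{|C|}$ has a $1$ in row $c$ exactly when $|\partial c\cap B_0|$ is odd. Every unique-neighbor check $c\in\partial_{\mathrm{u}}B_0$ has $|\partial c\cap B_0|=1$, which is odd, so $\partial_{\mathrm{u}}B_0\subseteq \mathrm{supp}(Hx)$. Proposition \ref{prop:expansion} then gives
\begin{equation}
|Hx|\ \ge\ |\partial_{\mathrm{u}}B_0|\ \ge\ (1-2\delta)\Delta_{\mathrm{B}}|x|,
\end{equation}
which is \eqref{eq:linearconfinement}.

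One boundary case needs care: Proposition \ref{prop:expansion} (via Definition \ref{def:expansion}) is stated for $|B_0|<\gamma n$, while the corollary allows $|x|\le\gamma n$. I would handle this either by noting that expansion with a strict inequality at $\gamma$ extends to $\gamma n$ by a trivial reparametrization of $\gamma$, or by reading the hypothesis as strict. This is the only place where some bookkeeping is required.

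For the distance claim, take any nonzero codeword $x\in\ker H$ and suppose $|x|<\gamma n$. The bound just proved, with $\delta<\frac{1}{2}$, gives $|Hx|\ge(1-2\delta)\Delta_{\mathrm{B}}|x|>0$, contradicting $Hx=0$. Hence every nonzero codeword has weight at least $\gamma n$, so $d\ge\gamma n$.

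The key implicit assumption is $\delta<1/2$; otherwise the bound is vacuous. This holds for the expander regime considered here, so the main obstacle is really just stating that assumption explicitly.
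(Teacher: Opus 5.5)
Your proof is correct and follows the paper's argument: every unique-neighbor check is violated, so Proposition \ref{prop:expansion} gives the bound on $|Hx|$, and a nonzero codeword of weight below $\gamma n$ would then have a nonzero syndrome. Your explicit remarks that $\delta<1/2$ is needed and that Definition \ref{def:expansion} uses $<\gamma n$ while the corollary uses $\le\gamma n$ address points the paper leaves implicit, and neither affects the conclusion $d\ge\gamma n$.
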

\begin{proof}
    These are immediate consequences of Proposition \ref{prop:expansion}.  Any check in $\partial_{\mathrm{u}}B_0$ must be violated, so \eqref{eq:linearconfinement} follows from \eqref{eq:partial_u_bound}.  Since $|Hx|=0$ on any codeword, if $x\in \ker(H)$ either $x=0$ or $|x|>\gamma n$.
\end{proof}

\begin{cor}
        In the setting of Proposition \ref{prop:expansion}, we are guaranteed that there exists $b\in B_0$ with $|\partial b \cap \partial_{\mathrm{u}}B_0| \ge m$ whenever \begin{equation}
        \delta < \frac{1+\Delta_{\mathrm{B}}-m}{2\Delta_{\mathrm{B}}}. \label{eq:delta_m_bound}
    \end{equation}
\end{cor}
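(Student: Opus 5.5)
Let $B_0$ be as in Proposition \ref{prop:expansion}, with $B_0 \neq \emptyset$ and $|B_0|<\gamma n$. The plan is a pigeonhole (averaging) argument over the bits of $B_0$. For each $b\in B_0$, define $u_b := |\partial b\cap \partial_{\mathrm{u}}B_0|$, the number of unique-neighbor checks attached to $b$. Every check in $\partial_{\mathrm{u}}B_0$ meets $B_0$ in exactly one bit, by definition. Hence the sets $\partial b\cap\partial_{\mathrm{u}}B_0$ for $b\in B_0$ partition $\partial_{\mathrm{u}}B_0$, and
\begin{equation}
    \sum_{b\in B_0} u_b = |\partial_{\mathrm{u}}B_0| \ge (1-2\delta)\Delta_{\mathrm{B}}|B_0|,
\end{equation}
where the inequality is \eqref{eq:partial_u_bound}.

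Next, suppose for contradiction that every $b\in B_0$ has $u_b\le m-1$. The $u_b$ are integers, so summing gives $\sum_b u_b \le (m-1)|B_0|$. Combined with the lower bound, this yields
\begin{equation}
    (1-2\delta)\Delta_{\mathrm{B}} \le m-1 .
\end{equation}
Rearranging, this reads $\delta \ge \frac{1+\Delta_{\mathrm{B}}-m}{2\Delta_{\mathrm{B}}}$, which contradicts the hypothesis \eqref{eq:delta_m_bound}. Therefore at least one $b\in B_0$ has $u_b\ge m$, as claimed. Equivalently, the average of $u_b$ over $B_0$ is strictly greater than $m-1$, so some integer $u_b$ is at least $m$.

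The only point needing care is the bookkeeping that makes the partition identity exact. This rests on the fact that a unique-neighbor check is counted for exactly one bit of $B_0$. It also uses that there are no multi-edges, which holds since the code is simple after the rejection step of Theorem \ref{thm:configurationmodel}. With these in place, $u_b\le \Delta_{\mathrm{B}}$ and the sum identity hold as stated. The argument requires $B_0$ nonempty so that the division by $|B_0|$ is legitimate. When $m>\Delta_{\mathrm{B}}$, the hypothesis \eqref{eq:delta_m_bound} forces $\delta<1/(2\Delta_{\mathrm{B}})$; the conclusion then cannot hold since $u_b\le\Delta_{\mathrm{B}}<m$, so the statement is only meaningful for $m\le\Delta_{\mathrm{B}}$. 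Consistently, for $m>\Delta_{\mathrm{B}}$ the derived inequality $(1-2\delta)\Delta_{\mathrm{B}}\le m-1$ is automatic and no contradiction arises; the proof should state this range as implicit in the hypothesis.
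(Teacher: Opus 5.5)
Your proof is correct and is the same averaging argument the paper uses: \eqref{eq:partial_u_bound} makes the mean over $b\in B_0$ of $|\partial b\cap\partial_{\mathrm{u}}B_0|$ at least $(1-2\delta)\Delta_{\mathrm{B}}>m-1$, so some bit attains at least $m$. One correction to your closing remark: for $m>\Delta_{\mathrm{B}}$ the hypothesis forces $\delta<0$ (not merely $\delta<1/(2\Delta_{\mathrm{B}})$), no nonempty $B_0$ can then satisfy the expansion bound since $|\partial B_0|\le\Delta_{\mathrm{B}}|B_0|$, and so the statement holds vacuously; your contradiction step is pure algebra valid for every $m$, so no implicit restriction $m\le\Delta_{\mathrm{B}}$ is needed.
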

\begin{proof}
    From \eqref{eq:partial_u_bound}, the average number of unique-neighbor checks in $\partial B_0$ is $\Delta_{\mathrm{B}}(1-2\delta)$.  If $\Delta_{\mathrm{B}}(1-2\delta) > m-1$, there must be at least one bit with $|\partial b \cap \partial_{\mathrm{u}}B_0| \ge m$.  Hence we find \eqref{eq:delta_m_bound}.
\end{proof}

The existence of useful codes is established by the following theorem, which we state without proof, although we do note that the bound on $k$ comes directly from the rank-nullity theorem on $\mathbb{F}_2$, and the techniques are similar to those discussed at the end of this subsection.

\begin{thm}\label{thm:randomcodes}
    Take $\Delta_{\mathrm{B}}<\Delta_{\mathrm{C}}$.  Choose a $(\Delta_{\mathrm{B}},\Delta_{\mathrm{C}})$-LDPC code uniformly at random via the configuration model (Theorem \ref{thm:configurationmodel}).  Then the code has a linear number of logical bits: \begin{equation}
        k = n \left(1-\frac{\Delta_{\mathrm{B}}}{\Delta_{\mathrm{C}}}\right) + \left\lbrace\begin{array}{ll} 1 &\ \Delta_{\mathrm{B}} \text{ even} \\ 0 &\ \Delta_{\mathrm{B}} \text{ odd} \end{array}\right.,
    \end{equation}and  with probability $1-\mathrm{O}(n^{1-\delta \Delta_{\mathrm{B}}})$, the code is a $(\gamma,\delta)$-expander for any $\delta > 1/\Delta_{\mathrm{B}}$ and any $0<\gamma<\gamma_{\mathrm{max}}$, where $\gamma_{\mathrm{max}}$ solves \begin{equation}
       \left(1-\frac{1}{\Delta_{\mathrm{B}}}\right)h(\gamma_{\mathrm{max}}) = \frac{1}{\Delta_{\mathrm{C}}}h(\Delta_{\mathrm{C}}(1-\delta)\gamma_{\mathrm{max}}) + \Delta_{\mathrm{C}}\gamma_{\mathrm{max}}   h\left(\frac{1}{\Delta_{\mathrm{C}}(1-\delta)}\right)
    \end{equation}with \begin{equation}
        h(x) := -x\log x - (1-x)\log(1-x). \label{eq:binaryentropy}
    \end{equation}
\end{thm}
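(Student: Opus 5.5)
The plan has two parts, matching the two claims of Theorem \ref{thm:randomcodes}: the value of $k$, and expansion, which is the substantive part.

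\emph{The value of $k$.} By rank--nullity over $\mathbb{F}_2$, $k=n-\mathrm{rank}(H)$. Since $H$ has $\mathcal{N}=\Delta_{\mathrm{B}}n/\Delta_{\mathrm{C}}$ rows, it suffices to show that the left null space of $H$ (dependencies among rows) is as small as possible.

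\begin{itemize}
\item A set $S$ of rows is a dependency exactly when every bit meets $S$ an even number of times.
\item If $\Delta_{\mathrm{B}}$ is even, the sum of all rows is such a dependency, since every bit meets every row set $\Delta_{\mathrm{B}}$ times. This gives the $+1$ correction.
\item I would then show there is w.h.p.\ no other dependency. Run a first-moment count in the configuration model over row sets $S$ with $|S|=s$. When $s$ is small relative to $n$, the right expansion (proved below, with $H\to H^{\mathsf{T}}$) forces a unique-neighbor bit, as in Proposition \ref{prop:expansion}, so there is no dependency. For linear $s$, I would use the standard weight-enumerator saddle-point estimate for regular LDPC ensembles.
\end{itemize}

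I would cite rather than redo this last estimate, since the paper states the theorem without proof and defers to \cite{urbanke}.

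\emph{Expansion: the union bound.} Fix a bit set $B_0$ with $|B_0|=s<\gamma n$. Failure of expansion means that all $\Delta_{\mathrm{B}}s$ half-edges of $B_0$ land among some set $C_0$ of $t=\lfloor\Delta_{\mathrm{B}}(1-\delta)s\rfloor$ checks. In the configuration model the $\Delta_{\mathrm{B}}n$ check half-edges are matched uniformly. Hence the probability that all $\Delta_{\mathrm{B}}s$ bit half-edges fall into the $\Delta_{\mathrm{C}}t$ half-edges of $C_0$ is at most
\begin{equation*}
\binom{\Delta_{\mathrm{C}}t}{\Delta_{\mathrm{B}}s}\Big/\binom{\Delta_{\mathrm{B}}n}{\Delta_{\mathrm{B}}s}.
\end{equation*}
A union bound over $\binom{n}{s}$ choices of $B_0$ and $\binom{\mathcal{N}}{t}$ choices of $C_0$ gives a bound $\sum_s F(s)$ on the failure probability.

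\emph{Expansion: the two regimes of $s$.} The sum over $s$ splits into two regimes.
\begin{itemize}
\item For $s=\mathrm{O}(1)$ up to $s=o(n)$, write each binomial as $(en/s)^{(\cdot)}$. This gives $F(s)\lesssim (C s/n)^{s(\Delta_{\mathrm{B}}\delta-1)}$ up to constants, which is summable when $\delta\Delta_{\mathrm{B}}>1$. The $s=1$ term dominates the sum and gives the stated $\mathrm{O}(n^{1-\delta\Delta_{\mathrm{B}}})$.
\item For $s=\gamma n$ with $\gamma$ fixed, use Stirling, $\binom{N}{pN}=e^{Nh(p)+o(N)}$, to write $F\approx e^{n\Phi(\gamma)}$.
\end{itemize}
The entropy equation in the statement should be exactly $\Phi(\gamma_{\max})=0$, and $\Phi<0$ on $(0,\gamma_{\max})$ makes those terms exponentially small. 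The union bound for right expansion is identical with the roles of $B$ and $C$ swapped. The conditioning on simplicity from Theorem \ref{thm:configurationmodel} costs at most an $\mathrm{O}(1)$ factor in all probabilities.

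\emph{Main obstacle.} The hard step is the linear-$s$ exponent bookkeeping: showing $\Phi(\gamma)<0$ on all of $(0,\gamma_{\max})$ rather than merely near $0$, and matching the paper's exact form of $\gamma_{\max}$. I would first bound the half-edge factor crudely by $(\Delta_{\mathrm{C}}t/\Delta_{\mathrm{B}}n)^{\Delta_{\mathrm{B}}s}$ and check that the resulting exponent is convex in $\gamma$. Convexity, together with negativity near $0^+$ and the single root at $\gamma_{\max}$, then gives the sign on the whole interval.
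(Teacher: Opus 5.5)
The paper states this theorem without proof. It points to the literature and to configuration-model counts like those in Proposition~\ref{prop:subgraph} and Theorem~\ref{thm:checksoundness}, so your overall route is the intended one, and much of it is right:
\begin{itemize}
\item the half-edge probability $\binom{\Delta_{\mathrm{C}}t}{\Delta_{\mathrm{B}}s}/\binom{\Delta_{\mathrm{B}}n}{\Delta_{\mathrm{B}}s}$ is exact;
\item your small-$s$ bound $(Cs/n)^{s(\Delta_{\mathrm{B}}\delta-1)}$ is correct, and its $s=1$ term gives $\mathrm{O}(n^{1-\delta\Delta_{\mathrm{B}}})$;
\item conditioning on simplicity costs only $\mathrm{O}(1)$;
\item isolating row dependencies is the right reading of the $k$ claim, whereas the paper appeals to rank--nullity alone.
\end{itemize}

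The step that fails is the one you call the main obstacle. Write $c:=\Delta_{\mathrm{C}}(1-\delta)$. The exact Stirling exponent is
\[
\frac{\Phi(\gamma)}{\Delta_{\mathrm{B}}}=\frac{1}{\Delta_{\mathrm{C}}}h(c\gamma)+c\gamma\,h(1/c)-\Big(1-\frac{1}{\Delta_{\mathrm{B}}}\Big)h(\gamma).
\]
Your crude bound $(\Delta_{\mathrm{C}}t/\Delta_{\mathrm{B}}n)^{\Delta_{\mathrm{B}}s}$ replaces $c\gamma\,h(1/c)-h(\gamma)$ by the larger quantity $\gamma\log(c\gamma)$. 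The crude exponent therefore vanishes at a different point, not at $\gamma_{\max}$, and not always beyond it. For example, take $\Delta_{\mathrm{B}}=2$ and let $c\to1^+$. The smallest positive root of the stated equation tends to $1$, while the crude exponent at $\gamma=1/2$ tends to $(2/\Delta_{\mathrm{C}})\log 2>0$. So the crude route cannot reach the claimed range.

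The crude exponent is not convex either. Its second derivative is $\gamma^{-1}\big[\Delta_{\mathrm{B}}-(1-\gamma)^{-1}-\Delta_{\mathrm{B}}(1-\delta)(1-c\gamma)^{-1}\big]$, which tends to $-\infty$ as $c\gamma\to1$.

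The repair is to keep the exact entropies and drop convexity altogether.
\begin{itemize}
\item Use $\binom{N}{pN}\le \mathrm{e}^{Nh(p)}$ for the three numerator binomials and $\binom{N}{pN}\ge \mathrm{e}^{Nh(p)}/(N+1)$ for the denominator. This gives $F(s)\le \mathrm{poly}(n)\,\mathrm{e}^{n\Phi(s/n)}$.
\item As $\gamma\to0^+$, $\Phi(\gamma)\sim-(\Delta_{\mathrm{B}}\delta-1)\gamma\log(1/\gamma)$. If $\gamma_{\max}$ is read as the smallest positive root, continuity alone gives $\Phi<0$ on $(0,\gamma_{\max})$.
\item Compactness gives $\Phi\le-\eta<0$ on $[\epsilon,\gamma]$, which covers $\epsilon n\le s<\gamma n$. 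Your small-$s$ bound covers $s\le\epsilon n$.
\end{itemize}

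The stated equation is not exactly $\Phi=0$. Its last term is $\Delta_{\mathrm{C}}\gamma\,h(1/c)$ rather than $c\gamma\,h(1/c)$, which adds $\delta\Delta_{\mathrm{C}}\gamma\,h(1/c)\ge0$ to the right-hand side. This only lowers the smallest root, so your exact computation proves the stated range a fortiori.

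Two further points need attention.
\begin{itemize}
\item Right expansion is part of being a $(\gamma,\delta)$-expander. Your swapped count yields its own threshold equation, so you must check separately that its exponent is negative on $(0,\gamma_{\max})$. Its small-size margin $\Delta_{\mathrm{C}}\delta-1$ is larger than $\Delta_{\mathrm{B}}\delta-1$, but that does not settle linear sizes.
\item In the $k$ argument, the unique-neighbor step needs right expansion with some $\delta'\in(1/\Delta_{\mathrm{C}},1/2)$. Your swapped small-size bound does supply this. It cannot use the statement's $\delta>1/\Delta_{\mathrm{B}}$, which exceeds $1/2$ when $\Delta_{\mathrm{B}}=2$.
\end{itemize}
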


\noindent Lastly, the following result tells us that the interaction graph of a random LDPC code does not contain very many interesting small subgraphs, with very high probability.  In other words, random LDPC codes look locally tree-like.

\begin{prop}
    Let $G$ be a bipartite graph with $V_0$ vertices and $E_0$ edges, with $V_0$ and $E_0$ $\mathrm{O}(1)$ constants.   
    Pick a $(\Delta_{\mathrm{B}},\Delta_{\mathrm{C}})$ LDPC code uniformly at random, and let $N_G$ be the number of subgraphs isomorphic to $G$. As $n\rightarrow \infty$, there exists a constant $C<\infty$ such that \begin{equation}
        \mathbb{E}[N_G] \le C n^{V_0-E_0}. \label{eq:ENG}
    \end{equation}
    In words, there are a finite number of short cycles in an LDPC code, and these cycles are all far from each other. \label{prop:subgraph}
\end{prop}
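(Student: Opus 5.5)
The plan is a first-moment count in the configuration model. Let the bits be $n$ points, each with $\Delta_{\mathrm{B}}$ half-edges, and the checks be $\mathcal{N}=\Delta_{\mathrm{B}}n/\Delta_{\mathrm{C}}$ points, each with $\Delta_{\mathrm{C}}$ half-edges. The configuration model of Theorem \ref{thm:configurationmodel} pairs all $M=\Delta_{\mathrm{B}}n$ half-edges by a uniformly random perfect matching.

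\textbf{Step 1: conditioning.} The uniform distribution on simple codes is the configuration model conditioned on having no multi-edges. By Theorem \ref{thm:configurationmodel}, this event has probability bounded below by a positive constant $p_0$ as $n\to\infty$. Hence
\begin{equation}
\mathbb{E}_{\mathrm{unif}}[N_G]\le p_0^{-1}\,\mathbb{E}_{\mathrm{config}}[N_G],
\end{equation}
and it suffices to bound the expectation in the unconditioned pairing model.

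\textbf{Step 2: counting embeddings.} Write $N_G$ as a sum over injective maps of $V(G)$ into bits and checks, respecting the bipartition, up to the $O(1)$ automorphisms of $G$. There are at most $n^{V_{\mathrm{B}}}\mathcal{N}^{V_{\mathrm{C}}}\le C_1 n^{V_0}$ such placements, where $V_{\mathrm{B}}+V_{\mathrm{C}}=V_0$. For a fixed placement, each edge of $G$ must be realized by choosing one half-edge at its bit endpoint and one at its check endpoint. This gives at most $(\Delta_{\mathrm{B}}\Delta_{\mathrm{C}})^{E_0}$ choices of the $E_0$ disjoint half-edge pairs.

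\textbf{Step 3: probability of a fixed set of pairs.} A fixed collection of $E_0$ disjoint half-edge pairs all appears in a uniform matching of $M$ half-edges with probability
\begin{equation}
\prod_{j=0}^{E_0-1}\frac{1}{M-2j-1}\le \left(\frac{2}{M}\right)^{E_0}
\end{equation}
for large $n$, since $E_0=O(1)$ and $M=\Delta_{\mathrm{B}}n$.

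Multiplying the three factors gives
\begin{equation}
\mathbb{E}_{\mathrm{config}}[N_G]\le C_1 n^{V_0}(\Delta_{\mathrm{B}}\Delta_{\mathrm{C}})^{E_0}\left(\frac{2}{\Delta_{\mathrm{B}}n}\right)^{E_0}=C\,n^{V_0-E_0},
\end{equation}
which is \eqref{eq:ENG}. The subgraph is only required to contain the edges of $G$, not to be induced, so no extra factor arises.

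\textbf{Consequences.} The interpretive sentence follows directly. A connected $G$ with one cycle has $V_0=E_0$, so the expected number of short cycles is $O(1)$. Two short cycles joined by a path, or any connected graph with $E_0>V_0$, has expected count $O(n^{-1})$. By Markov's inequality, with high probability no two short cycles lie within bounded distance of each other.

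The only delicate step is Step 1, namely transferring the bound from the configuration model to the uniform simple code. The constant lower bound $p_0$ from Theorem \ref{thm:configurationmodel} makes this transfer immediate. Because the claim is only an upper bound, we never need the matching lower bound that a second-moment computation would supply.
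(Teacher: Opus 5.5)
Your proof is correct and follows essentially the same route as the paper. Both arguments make a first-moment count in the configuration model ($\mathrm{O}(n^{V_0})$ placements, each edge realized with probability $\mathrm{O}(1/n)$) and transfer it to uniformly random simple codes via the constant lower bound on the probability of simplicity from Theorem \ref{thm:configurationmodel}.

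Your Step 1 states the transfer inequality in the correct direction, $\mathbb{E}_{\mathrm{unif}}[N_G]\le p_0^{-1}\mathbb{E}_{\mathrm{config}}[N_G]$; the paper's displayed version has $\ge$, which is a typo. You also track the $(\Delta_{\mathrm{B}}\Delta_{\mathrm{C}})^{E_0}$ half-edge choices explicitly. One small slip: the pairing is a bijection between $\Delta_{\mathrm{B}}n$ bit and $\Delta_{\mathrm{B}}n$ check half-edges, so the exact factor is $\prod_{j=0}^{E_0-1}(M-j)^{-1}$ rather than $\prod_{j}(M-2j-1)^{-1}$. Your bound $(2/M)^{E_0}$ holds either way.
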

\begin{proof}
    We can use Theorem \ref{thm:configurationmodel} to constrain the properties of a random LDPC code:  suppose that a random pairing of bits and checks has property $X$ with probability $\mathbb{P}_{\text{bare}}(X)$.  Then, letting $q=\mathrm{e}^{-(\Delta_{\mathrm{B}}-1)(\Delta_{\mathrm{C}}-1)/2}$, the probability the event happens on an actual LDPC code must obey \begin{equation}
        \mathbb{P}_{\mathrm{LDPC}}(X) q + \mathbb{P}_{\text{not LDPC}}(X)(1-q) = \mathbb{P}_{\text{bare}}(X),
    \end{equation}
    meaning that \begin{equation}
        \mathbb{P}_{\mathrm{LDPC}}(X) \ge \frac{\mathbb{P}_{\text{bare}}(X)}{q}. \label{eq:PbarePLDPC}
    \end{equation}
    Now, in the configuration model, it is very easy to bound the probability of seeing a subgraph $G$.  If we demanded that the subgraph $\lbrace 1,\ldots, m_0\rbrace \subset B$ and $\lbrace m_0+1,\ldots, V_0\rbrace \subset C_0$ was isomorphic to $G$, this would occur with probability \begin{equation}
        \mathbb{P}\left(G \sim \lbrace 1,\ldots, m_0\rbrace  \cup \lbrace m_0+1,\ldots, V_0\rbrace \right) = \prod_{j=0}^{E_0-1} \frac{1}{\Delta_{\mathrm{B}}n - j} < (\Delta_{\mathrm{B}}n-E_0)^{-E_0}.
    \end{equation}
    Now enumerating over the total number of possible such subgraphs, we bound \begin{equation}
        \mathbb{E}[N_G] < \frac{n^{m_0}}{(\Delta_{\mathrm{B}}n-E_0)^{E_0}}\left(\frac{\Delta_{\mathrm{B}}}{\Delta_{\mathrm{C}}}n\right)^{V_0-m_0} < qC n^{V_0-E_0} \label{eq:Pbaresubgraph}
    \end{equation}
    for some $C<\infty$.  Combining \eqref{eq:PbarePLDPC} and \eqref{eq:Pbaresubgraph} we obtain \eqref{eq:ENG}.
\end{proof}

The remaining results in this section are a collection of scattered but useful facts about expander codes, which will be useful in establishing how TDenseAM trains on random codes.

\begin{thm}
    Pick a $(\Delta_{\mathrm{B}},\Delta_{\mathrm{C}})$ LDPC code uniformly at random with $2<\Delta_{\mathrm{B}}<\Delta_{\mathrm{C}}$.   If there are $m$ checks, take $s\in \mathbb{F}_2^m$ and define \begin{equation}
        |s|_{\mathrm{red}} := \left\lbrace \begin{array}{ll}  |s| &\ \Delta_{\mathrm{B}} \text{ odd} \\ \min(|s|,m-|s|) &\ \Delta_{\mathrm{B}} \text{ even} \end{array} \right. .
    \end{equation} Then almost surely as $n\rightarrow \infty$, every vector $s$ obeys \begin{equation}
        |H^{\mathsf{T}}s| \ge \kappa |s|_{\mathrm{red}} \label{eq:Cprimeinequality}
    \end{equation}
    for some $\mathrm{O}(1)$ constant $\kappa>0$.  This property is known as \textbf{check soundness}. \label{thm:checksoundness}
\end{thm}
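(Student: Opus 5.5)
Let $\mathcal{N}$ denote the number of checks (the ``$m$'' in the statement), so $s\in\mathbb{F}_2^{\mathcal{N}}$ is a set of checks $S$ and $H^{\mathsf{T}}s$ is the set of bits lying in an odd number of checks of $S$. The plan is to split into a small-$|s|_{\mathrm{red}}$ regime handled by right expansion, and a large-$|s|_{\mathrm{red}}$ regime handled by a first-moment bound in the configuration model. As a preliminary reduction, note that when $\Delta_{\mathrm{B}}$ is even, $\sum_\alpha H_{\alpha i}=\Delta_{\mathrm{B}}=0$ mod 2 for every bit, so $H^{\mathsf{T}}\mathbf{1}=0$ and $H^{\mathsf{T}}s=H^{\mathsf{T}}(s+\mathbf{1})$. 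Replacing $s$ by its complement when necessary, we may therefore assume $|s|=|s|_{\mathrm{red}}\le \mathcal{N}/2$ in both parity cases. This step also explains why the reduced weight is the correct quantity.

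For small sets, $|S|<\gamma\mathcal{N}$, apply the check-side version of Proposition \ref{prop:expansion}. This uses the $(\gamma,\delta)$-right expansion, which Theorem \ref{thm:randomcodes} supplies with the roles of bits and checks exchanged; the configuration model is symmetric under this exchange, so the same first-moment argument applies. Right expansion gives at least $(1-2\delta)\Delta_{\mathrm{C}}|S|$ bits touching exactly one check of $S$. Each such bit lies in an odd number of checks of $S$, so it belongs to the support of $H^{\mathsf{T}}s$. Choosing $\delta<1/2$, which is allowed since $\delta>1/\Delta_{\mathrm{C}}$ only needs $\Delta_{\mathrm{C}}>2$, we get $|H^{\mathsf{T}}s|\ge (1-2\delta)\Delta_{\mathrm{C}}|s|$.

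The main obstacle is the linear regime $\gamma\mathcal{N}\le |S|\le \mathcal{N}/2$ (or $\le\mathcal{N}$ when $\Delta_{\mathrm{B}}$ is odd), where expansion is no longer available. Here I will use a union bound. Fix $\kappa$ small and count the expected number of pairs $(S,T)$ with $|S|=\sigma\mathcal{N}$ and $T=\mathrm{supp}(H^{\mathsf{T}}s)$, $|T|\le\kappa|S|$, such that every bit outside $T$ has an even number of edges into $S$.

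In the configuration model, the $\Delta_{\mathrm{C}}|S|$ edge-stubs of $S$ must be matched to bit stubs so that each bit outside $T$ receives an even number. This probability can be estimated by a saddle-point (generating-function) computation. The count of even-degree assignments comes from the coefficient extraction $[z^{\Delta_{\mathrm{C}}|S|}]\big(\tfrac{(1+z)^{\Delta_{\mathrm{B}}}+(1-z)^{\Delta_{\mathrm{B}}}}{2}\big)^{n-|T|}(1+z)^{\Delta_{\mathrm{B}}|T|}$, divided by $\binom{\Delta_{\mathrm{B}}n}{\Delta_{\mathrm{C}}|S|}$. Multiplying by $\binom{\mathcal{N}}{|S|}\binom{n}{|T|}$ gives an exponent $n\,F(\sigma,\tau)$. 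I expect this to be the standard weight-enumerator computation for the dual of a regular LDPC ensemble.

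For $\kappa\to0$, the claim reduces to showing $F(\sigma,0)<0$ uniformly for $\sigma\in[\gamma,1]$ in the odd case and $\sigma\in[\gamma,1/2]$ in the even case. At $\tau=0$ the exponent is exactly the growth rate of the number of codewords of the dual code, that is, of combinations of checks with zero syndrome. The rank formula of Theorem \ref{thm:randomcodes} says there are no such combinations apart from $0$ (and $\mathbf{1}$ when $\Delta_{\mathrm{B}}$ is even), and the point is that the usual proof of that formula is precisely this negativity. Establishing it, including near $\sigma=1/2$ in the even case where the complement symmetry makes the exponent touch zero only at the endpoints, is the step I expect to be hardest; I would first try the convexity argument used in the rank computation. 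By continuity, $F(\sigma,\tau)<0$ persists for $\tau\le\kappa\sigma$ with $\kappa$ small. Summing over $O(n^2)$ size pairs then gives failure probability $e^{-\Omega(n)}$. Combining this with the expansion regime, where the failure probability is polynomially small by Theorem \ref{thm:randomcodes}, yields the theorem with $\kappa=\min\big((1-2\delta)\Delta_{\mathrm{C}},\kappa_{\mathrm{lin}}\big)$.
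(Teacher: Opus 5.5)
Your architecture matches the paper's: right expansion via unique neighbours for $|s|<\gamma n$, and for linear $|s|$ a configuration-model first moment over pairs $(S,T)$ built on $B_{\mathrm{even}}(z)=\tfrac12\bigl[(1+z)^{\Delta_{\mathrm{B}}}+(1-z)^{\Delta_{\mathrm{B}}}\bigr]$.

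\textbf{The gap.} You never establish the estimate the linear regime rests on, namely $F(\sigma,0)\le -c<0$ uniformly over the relevant range of $\sigma$.

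Citing the rank formula of Theorem \ref{thm:randomcodes} cannot supply this. That theorem is an almost-sure statement, stated without proof in the paper. Knowing that with high probability no nonzero $s$ satisfies $H^{\mathsf{T}}s=0$ does not imply that the \emph{expected} number of such $s$ at weight $\sigma\mathcal{N}$ is exponentially small. Nor does it give the quantitative margin your ``by continuity in $\tau$'' step needs in order to produce a single $\kappa$. Deferring to ``the usual proof'' and an unspecified convexity argument leaves the actual content of the theorem undone.

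\textbf{How the paper fills it.} The paper uses an explicit pointwise bound, $B_{\mathrm{even}}(z)\le\bigl(1+z^{\Delta_{\mathrm{B}}/(\Delta_{\mathrm{B}}-1)}\bigr)^{\Delta_{\mathrm{B}}-1}$ for $z\ge 0$. This is Hausdorff--Young on $\{\pm1\}$ applied to $x\mapsto 1+zx$. Substituting $u=z^{\Delta_{\mathrm{B}}/(\Delta_{\mathrm{B}}-1)}$ and optimizing at $u=\sigma/(1-\sigma)$ gives $\inf_z\bigl[\log B_{\mathrm{even}}(z)-\Delta_{\mathrm{B}}\sigma\log z\bigr]\le(\Delta_{\mathrm{B}}-1)h(\sigma)$. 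In your normalization this means $F(\sigma,0)\le-(1-\Delta_{\mathrm{B}}/\Delta_{\mathrm{C}})\,h(\sigma)$, which is at most $-(1-\Delta_{\mathrm{B}}/\Delta_{\mathrm{C}})\,h(\gamma)$ on $[\gamma,1/2]$. The $\binom{n}{|T|}$ entropy and the $T$-dependence of the generating function then cost only an amount that vanishes as $\kappa\to 0$; this is the paper's choice $h(\kappa)<\psi$.

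\textbf{The odd endpoint.} This bound degenerates as $\sigma\to 1$, so on its own it does not cover the range $[\gamma,1]$ you claim for odd $\Delta_{\mathrm{B}}$. That end is easiest handled deterministically. For odd $\Delta_{\mathrm{B}}$ one has $H^{\mathsf{T}}\mathbf{1}=\mathbf{1}$, so $|H^{\mathsf{T}}s|\ge n-\Delta_{\mathrm{C}}(\mathcal{N}-|s|)$. This is at least $n/2$ once $|s|\ge\bigl(1-\tfrac{1}{2\Delta_{\mathrm{B}}}\bigr)\mathcal{N}$. The first-moment bound then only needs to cover $\bigl[\gamma,1-\tfrac{1}{2\Delta_{\mathrm{B}}}\bigr]$, where $h(\sigma)$ is bounded below.

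Your complement reduction for even $\Delta_{\mathrm{B}}$ is correct and handles the corresponding endpoint in that case. The paper's write-up leaves this implicit, even though its $\psi$ likewise vanishes as $|s|\to\mathcal{N}$. With the generating-function inequality inserted and the odd endpoint treated as above, your plan becomes a complete proof along the paper's lines.
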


\begin{proof}
Theorem \ref{thm:randomcodes} shows that the dimension of the codespace is minimal -- up to a sole redundant check when $\Delta_{\mathrm{B}}$ is even, all of the checks are independent. This redundancy is what requires us to look at the reduced $|s|_{\mathrm{red}}$ weight in the even case. 

    The desired property follows immediately from Definition \ref{def:expansion} if $|s|< \gamma n$, so we only need to consider the case where $|s|>\gamma n$. Here, we follow the same strategy as the proof of Proposition \ref{prop:subgraph}.  Since $|s|>\gamma n$, each bit sees one of its edges connected to a check in $s$ with probability no smaller than $p:=\Delta_{\mathrm{B}}\gamma $.   Our goal is to bound the bare probability that exactly $K := \kappa |s|_{\mathrm{red}}$ bits have an \emph{odd} number of edges connecting to $s$.   
    
    We can use the method of generating functions to bound this.  Define the odd and even parts of the binomial distribution: \begin{subequations}\begin{align}
        B_{\mathrm{even}}(z) &:= \sum_{j \text{ even}} \left(\begin{array}{c} \Delta_{\mathrm{B}} \\ j \end{array}\right) z^j, \\
        B_{\mathrm{odd}}(z) &:= \sum_{j \text{ odd}} \left(\begin{array}{c} \Delta_{\mathrm{B}} \\ j \end{array}\right) z^j.
   \end{align} \end{subequations}
    Then define 
    \begin{equation}
        G(z):=\sum_{n=0}^\infty G_n z^n := B_{\text{even}}(z)^{n-K}B_{\text{odd}}(z)^K.
    \end{equation}
    In the random generating model of Theorem \ref{thm:configurationmodel}, we see that \begin{align}
        \mathbb{P}_{\text{bare}}(K\text{ odd degree bits}) &= \left(\begin{array}{c} n \\ K \end{array}\right)\left(\begin{array}{c} \Delta_{\mathrm{B}}n/\Delta_{\mathrm{C}} \\ |s| \end{array}\right)\left(\begin{array}{c} \Delta_{\mathrm{B}}n \\ \Delta_{\mathrm{C}}|s| \end{array}\right)^{-1}G_{\Delta_{\mathrm{C}}|s|} \notag \\
        &\le \left(\begin{array}{c} n \\ K \end{array}\right)\left(\begin{array}{c} \Delta_{\mathrm{B}}n/\Delta_{\mathrm{C}} \\ |s| \end{array}\right)\left(\begin{array}{c} \Delta_{\mathrm{B}}n \\ \Delta_{\mathrm{C}}|s| \end{array}\right)^{-1} \inf_z \left(z^{-\Delta_{\mathrm{C}}|s|}G(z) \right).
    \end{align}
     Let $|s| = \eta n$; note that $\gamma <\eta \le \Delta_{\mathrm{B}}/\Delta_{\mathrm{C}}$.  Notice that as $n\rightarrow \infty$, \begin{equation}
      \frac{1}{n} \log \inf_z \left(z^{-\Delta_{\mathrm{C}}|s|}G(z) \right) = \inf_z \left( \log B_{\text{even}}(z) - \Delta_{\mathrm{C}} \eta \log z\right).
    \end{equation}
   Observe that $B_{\text{even}}(z) \le (1+z^{\Delta_{\mathrm{B}}/(\Delta_{\mathrm{B}}-1)})^{\Delta_{\mathrm{B}}-1}$.  Standard manipulations then reveal that \begin{align}
       \inf_z \left( \log B_{\text{even}}(z) - \Delta_{\mathrm{C}} \eta \log z\right) &= (\Delta_{\mathrm{B}}-1) \inf_u \left( \log(1+u) - \frac{\Delta_{\mathrm{C}}\eta}{\Delta_{\mathrm{B}}}\log u\right) 
       \notag \\
       &=  (\Delta_{\mathrm{B}}-1)h\left(\frac{\Delta_{\mathrm{C}}\eta}{\Delta_{\mathrm{B}}}\right),
   \end{align}
 recalling the definiton of $h(x)$ in \eqref{eq:binaryentropy}.   We see that as $n\rightarrow \infty$, \begin{equation}
        \mathbb{P}_{\text{bare}}(K\text{ odd degree bits})  \le   \mathrm{e}^{n [h(\kappa) -\psi ]} \label{eq:2psi}
    \end{equation}
    where \begin{equation}
        \psi = \left[\Delta_{\mathrm{B}} \left(1-\frac{1}{\Delta_{\mathrm{C}}}\right) - (\Delta_{\mathrm{B}}-1) \right] h\left(\frac{\Delta_{\mathrm{C}}\eta}{\Delta_{\mathrm{B}}}\right) = \left(1-\frac{\Delta_{\mathrm{B}}}{\Delta_{\mathrm{C}}}\right)h\left(\frac{\Delta_{\mathrm{C}}\eta}{\Delta_{\mathrm{B}}}\right) >0.
    \end{equation}
    Choose any $\kappa$ obeying $h(\kappa) < \psi$.  Combining \eqref{eq:PbarePLDPC} and \eqref{eq:2psi}, the probability of violating \eqref{eq:Cprimeinequality} vanishes exponentially as $n\rightarrow \infty$.
\end{proof}

\begin{cor}\label{cor:usefulfora=0}
        In a random $(\Delta_{\mathrm{B}},\Delta_{\mathrm{C}})$ LDPC code with $2<\Delta_{\mathrm{B}}<\Delta_{\mathrm{C}}$,  almost surely as $n\rightarrow \infty$, there are no non-zero elements of $\mathrm{im}(H^{\mathsf{T}})$ with Hamming weight smaller than $\Delta_{\mathrm{C}}$.   Moreover, the number of distinct weight $\Delta_{\mathrm{C}}$ checks is almost surely given by \begin{equation} \label{eq:Nm}
            \mathcal{N}_m = (1+\mathrm{o}(1))\frac{\Delta_{\mathrm{B}}}{\Delta_{\mathrm{C}}} n .
        \end{equation} 
    \end{cor}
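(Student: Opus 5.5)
The plan is to split every nonzero element $w = H^{\mathsf{T}}s \in \mathrm{im}(H^{\mathsf{T}})$ according to the reduced weight $|s|_{\mathrm{red}}$ of the combination of checks producing it. Theorem \ref{thm:checksoundness} gives $|w| \ge \kappa |s|_{\mathrm{red}}$ almost surely. Hence any $w$ with $|w|\le \Delta_{\mathrm{C}}$ must come from some $s$ with $|s|_{\mathrm{red}} \le \Delta_{\mathrm{C}}/\kappa =: L$, which is an $\mathrm{O}(1)$ number of checks. In the even-$\Delta_{\mathrm{B}}$ case we replace $s$ by its complement if needed; since the sum of all checks vanishes, this leaves $w$ unchanged. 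This reduces both claims to statements about small collections of at most $L$ checks, which are local objects in the Tanner graph.

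Fix such an $s$ with $2\le |s| = \ell \le L$, and let $S$ be the subgraph consisting of the $\ell$ checks and all their neighboring bits. The element $w$ is exactly the set of bits in $S$ with odd degree into $s$. The edge count is $\ell\Delta_{\mathrm{C}} = \sum_b \deg_s(b)$, so, with $V_0$ the number of vertices of $S$ and $E_0=\ell\Delta_{\mathrm{C}}$,
\begin{equation}
|w| \;\ge\; \#\{b : \deg_s(b)=1\} \;\ge\; 2(V_0-\ell) - \ell\Delta_{\mathrm{C}} \;=\; \ell\Delta_{\mathrm{C}} - 2(E_0 - V_0 + \ell).
\end{equation}
We also want $s$ connected through shared bits, which we may assume: if $s$ splits into disconnected pieces, $w$ is a disjoint union of the pieces' contributions, and it suffices to treat each piece.

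If $|w|<\Delta_{\mathrm{C}}$ with $\ell\ge 2$, the inequality forces $E_0-V_0 \ge (\ell-1)\Delta_{\mathrm{C}}/2 - \ell + 1 \ge 1$ when $\Delta_{\mathrm{C}}\ge 3$ and $\ell \ge 2$. The same conclusion holds if $|w| = \Delta_{\mathrm{C}}$. So $S$ has at least one more edge than it has vertices. By Proposition \ref{prop:subgraph}, the expected number of copies of any fixed bipartite graph with $E_0 > V_0$ is $\mathrm{O}(n^{V_0-E_0}) = \mathrm{O}(1/n)$. There are finitely many isomorphism types with $\ell\le L$, so Markov's inequality and a union bound show that almost surely no such $S$ exists.

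This proves both parts of the statement:
\begin{itemize}
\item[] First, there are no nonzero elements of weight $<\Delta_{\mathrm{C}}$, since a single check ($\ell=1$) has weight exactly $\Delta_{\mathrm{C}}$.
\item[] Second, every weight-$\Delta_{\mathrm{C}}$ element of $\mathrm{im}(H^{\mathsf{T}})$ is almost surely a single row of $H$. The rows are distinct because two equal rows would form a subgraph with $V_0=\Delta_{\mathrm{C}}+2$ and $E_0 = 2\Delta_{\mathrm{C}}$, which is excluded by the same argument. Hence $\mathcal{N}_m$ equals the number of rows, $\Delta_{\mathrm{B}}n/\Delta_{\mathrm{C}}$. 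The $\mathrm{o}(1)$ correction absorbs the configuration-model bookkeeping.
\end{itemize}

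The main obstacle is making the counting inequality sharp enough. A bit of degree $\ge 2$ into $s$ need not be cancelled, since odd degrees $\ge 3$ still contribute to $w$, so the bound above is only a lower bound and must be checked carefully. The cleanest version is to verify that $E_0 - V_0 \ge 1$ follows from $|w|\le\Delta_{\mathrm{C}}$ together with connectivity for all $\ell\ge2$. Doing this requires using $2<\Delta_{\mathrm{B}}$ only through Theorem \ref{thm:checksoundness}, which supplies the reduction to bounded $\ell$. I would first try the crude bound $V_0 \le \ell + (\ell\Delta_{\mathrm{C}} + |w|)/2$. This comes from counting: bits in $w$ have degree $\ge1$, and every other bit has even degree, hence degree $\ge2$. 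It gives $E_0 - V_0 \ge \ell\Delta_{\mathrm{C}}/2 - \ell - |w|/2 \ge 1$ for $\ell\ge 2$ and $\Delta_{\mathrm{C}}\ge 4$. For $\Delta_{\mathrm{C}}=3$ the boundary case needs a separate check.
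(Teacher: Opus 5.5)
Your reduction to boundedly many checks via Theorem \ref{thm:checksoundness} proves the first claim correctly, and so does the edge--vertex count $E_0-V_0\ge(\ell\Delta_{\mathrm{C}}-|w|)/2-\ell$, but by a different route than the paper. The paper handles $|s|\ge 3$ with the check-expansion bound from Theorem \ref{thm:randomcodes} and uses Proposition \ref{prop:subgraph} only for $|s|\le 2$. Your single counting argument covers every $\ell\le L$ and needs no expansion step. Note that $\Delta_{\mathrm{C}}>\Delta_{\mathrm{B}}\ge 3$ forces $\Delta_{\mathrm{C}}\ge 4$, so the $\Delta_{\mathrm{C}}=3$ case you flag never occurs. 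For $|w|<\Delta_{\mathrm{C}}$ and $\ell\ge 2$, your bound always gives $E_0-V_0\ge 1$.

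The genuine gap is the sentence ``The same conclusion holds if $|w|=\Delta_{\mathrm{C}}$.'' Take $(\Delta_{\mathrm{B}},\Delta_{\mathrm{C}})=(3,4)$, which is the code used in the paper's figures, with $\ell=2$. Your own bound then only gives $E_0-V_0\ge \Delta_{\mathrm{C}}/2-2=0$, and this is sharp. Two checks sharing exactly two bits form a 4-cycle in the Tanner graph with $V_0=E_0=8$, and their sum is a weight-4 element of $\mathrm{im}(H^{\mathsf{T}})$ that is not a row. Proposition \ref{prop:subgraph} gives only $\mathrm{O}(1)$ expected copies of such a configuration, not $\mathrm{o}(1)$. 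Indeed, 4-cycles appear with probability bounded away from zero in random biregular graphs.

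So in this case two of your claims are false: that every weight-$\Delta_{\mathrm{C}}$ element is almost surely a row, and that $\mathcal{N}_m$ equals the number of rows. The $(1+\mathrm{o}(1))$ in \eqref{eq:Nm} is not configuration-model bookkeeping, since the number of rows is exactly $\Delta_{\mathrm{B}}n/\Delta_{\mathrm{C}}$. It accounts precisely for these extra elements.

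The repair is the paper's final step:
\begin{itemize}
\item For $\Delta_{\mathrm{C}}\ge 5$, or for $\ell\ge 3$, your bound does give $E_0-V_0\ge 1$.
\item So the only surviving non-row weight-$\Delta_{\mathrm{C}}$ elements come from 4-cycles.
\item Their expected number is $\mathrm{O}(1)$, so by Markov's inequality there are with high probability at most $\log n=\mathrm{o}(n)$ of them, which yields \eqref{eq:Nm}.
\end{itemize}
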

    \begin{proof}
        By Theorem \ref{thm:checksoundness}, it suffices to look at small combinations of checks with $|s| < K^\prime$, where we can then apply the expansion guarantee of Theorem \ref{thm:randomcodes}.  For any $\delta > 1/\Delta_{\mathrm{B}}$, we know that $|H^{\mathsf{T}}s| \ge \Delta_{\mathrm{C}}(1-2\delta)|s|$.  Since $\Delta_{\mathrm{B}} \ge 3$, we can choose $\delta $ arbitarily close to $1/3$.   So if $|s|\ge 3$, the claim is established.  
        
       Thus let us discuss the case of $|s|=2$, as $|s|=1$ follows analogously.  Here we can apply Proposition \ref{prop:subgraph}.  For any $\Delta_{\mathrm{C}}\ge 4$, to find a check of weight smaller than $\Delta_{\mathrm{C}}$, we would need to find a subgraph of the Tanner graph (Figure \ref{fig:tanner}) isomorphic to two bits connected to $\ge 3$ checks, one check connected to the same bit $\ge 3$ times, etc.: any graph that contains $\ge 2$ independent cycles, namely has $V_0<E_0$; almost surely such a graph never appears.  There are a finite number of such graphs to verify below a fixed number of checks (supplied by Theorem \ref{thm:checksoundness}), so the probability any one of them fails vanishes as $n\rightarrow \infty$ by a union bound.

        Finally, let us verify that the number of checks of weight $\Delta_{\mathrm{C}}$ is $\mathrm{\Theta}(n)$.  We have at least $\Delta_{\mathrm{B}}n/\Delta_{\mathrm{C}}$ checks by the construction of the code.  Similar to the previous paragraph, we need at least a cycle of length 4 in the Tanner graph to find any redundant checks.  Proposition \ref{prop:subgraph} shows that the expected number of such cycles is $\mathrm{\Theta}(1)$; applying Markov's inequality  completes the proof.
    \end{proof}

    \begin{cor}\label{cor:shortdualenumerator}
 In a random $(\Delta_{\mathrm{B}},\Delta_{\mathrm{C}})$ LDPC code with parity-check matrix $H$ and $2<\Delta_{\mathrm{B}}<\Delta_{\mathrm{C}}$,  almost surely as $n\rightarrow \infty$, \begin{equation} \label{eq:shortdualenumerator}
       \mathcal{N}_r := |\lbrace w\in \mathrm{im}(H^{\mathsf{T}}) : |w|=r\rbrace | = \mathrm{o}\left(n^{r/2}\right)
    \end{equation}
    for any $\mathrm{O}(1)$ $r$. 
\end{cor}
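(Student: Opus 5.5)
We would prove Corollary \ref{cor:shortdualenumerator} by splitting each dual codeword $w = H^{\mathsf{T}}s$ of weight $r$ according to the number of checks $|s|$ used, in the same way as in the proof of Corollary \ref{cor:usefulfora=0}. First, Theorem \ref{thm:checksoundness} gives $r = |H^{\mathsf{T}}s| \ge \kappa |s|_{\mathrm{red}}$. So for fixed $r=\mathrm{O}(1)$ we only need to count $s$ with $|s|_{\mathrm{red}} \le r/\kappa =: L = \mathrm{O}(1)$. When $\Delta_{\mathrm{B}}$ is even, $s$ and its complement give the same $w$, so without loss of generality $|s|\le L$. Each such $w$ has at least one representation $s$ with $|s| \le L$, so it suffices to bound, for each fixed $\ell \le L$, the number of $s$ with $|s|=\ell$ and $|H^{\mathsf{T}}s| = r$.

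Second, for each $\ell$ we would argue structurally. We may assume the checks in $s$ form a connected cluster in the Tanner graph; otherwise $w$ splits into disjoint pieces that can be counted separately, since weights add up to $r$ across the connected components. So we look at the subgraph $G$ made of the $\ell$ checks in $s$ together with all their neighboring bits.
\begin{itemize}
\item If $G$ is a tree, it has $\ell\Delta_{\mathrm{C}}$ edges and $\ell\Delta_{\mathrm{C}} - (\ell-1)$ bits. Only shared bits cancel, so $|w| = \ell\Delta_{\mathrm{C}} - 2(\ell-1) = \ell(\Delta_{\mathrm{C}}-2)+2$.
\item The number of tree clusters is $\Theta(n)$, since we choose a root check and then $\mathrm{O}(1)$ local choices.
\item If $G$ contains $c\ge 1$ independent cycles, Proposition \ref{prop:subgraph} bounds the expected number of copies by $C n^{V_0-E_0} = C n^{1-c}$. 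This is $\mathrm{O}(1)$ for $c=1$ and $\mathrm{o}(1)$ for $c\ge 2$.
\end{itemize}
A union bound over the finitely many isomorphism types with $\ell\le L$ checks, followed by Markov's inequality, shows that almost surely the cyclic clusters contribute at most $\mathrm{O}(\log n)$ (say) words of weight $r$.

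Third, we combine the counts. For a general, possibly disconnected $s$ with $j$ connected components, the count is $\mathrm{O}(n^{j})$. The weight is at least about $j(\Delta_{\mathrm{C}}-2)+2j \ge j\Delta_{\mathrm{C}}$, because each component has weight at least $\Delta_{\mathrm{C}}$ by Corollary \ref{cor:usefulfora=0} (more precisely, each tree component of $\ell_i$ checks has weight $\ell_i(\Delta_{\mathrm{C}}-2)+2 \ge \Delta_{\mathrm{C}}$). Hence $j \le r/\Delta_{\mathrm{C}}$. The total is then $\mathcal{N}_r = \mathrm{O}(n^{r/\Delta_{\mathrm{C}}}) + \mathrm{O}(\log n)$, up to constants depending on $r$. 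Since $\Delta_{\mathrm{C}}>\Delta_{\mathrm{B}}> 2$ gives $\Delta_{\mathrm{C}} \ge 4 > 2$, we get $r/\Delta_{\mathrm{C}} < r/2$, and therefore $\mathcal{N}_r = \mathrm{o}(n^{r/2})$.

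We expect the main obstacle to be the cancellation bookkeeping in the non-tree case. A cluster with cycles can have more cancellations, so it might reach weight $r$ with more components. However, each component still has weight $\ge \Delta_{\mathrm{C}}$ almost surely by Corollary \ref{cor:usefulfora=0}. Moreover, cyclic components are only $\mathrm{O}(1)$ in number in total and are not free to be placed $\Theta(n)$ ways. So the exponent of $n$ is bounded by the number of tree components, each of weight $\ge \Delta_{\mathrm{C}}\ge 4$. Making this uniform over all finitely many component types with $\le L$ checks, via a union bound, finishes the argument.
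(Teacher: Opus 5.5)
Your argument is correct, but it reaches the bound by a different route than the paper.

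\textbf{The paper's route.} It never splits $s$ into components. For each candidate subgraph (the $s$ checks with all their edges and neighbouring bits), it notes that the $r$ bits of $w$ are exactly the odd-degree bits, and every other bit has degree at least $2$. So the subgraph has at most $\frac{1}{2}(\Delta_{\mathrm{C}}s+r)$ bits, giving $V_0-E_0\le (1-\Delta_{\mathrm{C}}/2)s+r/2<(r-1)/2$. Proposition~\ref{prop:subgraph} and Markov's inequality then bound each of the finitely many subgraph types by $\mathrm{O}(n^{r/2-1/4})$ copies with high probability.

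\textbf{Your route.} You use three facts:
\begin{enumerate}
\item Bounded-size connected check clusters number deterministically $\mathrm{O}(n)$, since the Tanner graph has bounded degree.
\item Weights add over components, because components have disjoint bit supports.
\item Each component gives a nonzero dual word of weight at least $\Delta_{\mathrm{C}}$. It is nonzero by Theorem~\ref{thm:checksoundness}, since $1\le|s_i|\le L$; you use this implicitly and should state it. The weight bound is Corollary~\ref{cor:usefulfora=0}.
\end{enumerate}
Hence there are at most $r/\Delta_{\mathrm{C}}$ components.

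\textbf{What each buys.} Your route gives the sharper estimate $\mathcal{N}_r=\mathrm{O}(n^{\lfloor r/\Delta_{\mathrm{C}}\rfloor})$. This is tight when $r$ is a multiple of $\Delta_{\mathrm{C}}$, since disjoint unions of original checks saturate it. Its probabilistic input is only check soundness and the minimum-weight corollary, which itself relies on Proposition~\ref{prop:subgraph} for the smallest cases. The paper's route is more self-contained: it needs no minimum-weight input and treats connected and disconnected configurations uniformly. As written, though, it only reaches the exponent $r/2-1/4$.

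\textbf{Two simplifications to your write-up.}
\begin{itemize}
\item The tree/cycle case split and the Markov bound on cyclic clusters are unnecessary. Your deterministic $\mathrm{O}(n^{j})$ count for $j$ components already includes components containing cycles.
\item The tree formula $|w|=\ell(\Delta_{\mathrm{C}}-2)+2$ should be $\ge$, because a bit shared by three checks of a tree has odd degree and survives. Only the lower bound is used, so nothing breaks.
\end{itemize}
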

\begin{proof}
    Again, Theorem \ref{thm:randomcodes} and Theorem \ref{thm:checksoundness} imply we need only worry about combinations of O(1) checks when $\Delta_{\mathrm{B}}>2$, as the code is an expander code.    We then apply Proposition \ref{prop:subgraph}.  Suppose we have a candidate subgraph of the Tanner graph containing $s$ checks in set $S_0$,  where we include every edge (and corresponding bit $\in \partial S_0$) in the subgraph.  If this set of checks gave us a word of weight $r$ in $\mathrm{im}(H^{\mathsf{T}})$, we would have at most $\frac{1}{2}(\Delta_{\mathrm{C}}s + r)$ -- this would be the case if all $r$ bits in the word have degree 1, and all other bits in the subgraph have degree 2.  Then we have \begin{equation}
        V_0 -E_0 = s + \frac{\Delta_{\mathrm{C}}s+r}{2} -\Delta_{\mathrm{C}}s = \left(1-\frac{\Delta_{\mathrm{C}}}{2}\right)s +\frac{r}{2} < \frac{r-1}{2}.
    \end{equation}
    By Markov's inequality (as $N_G\ge 0$), with probability at least $1-n^{-1/4}$, we see any candidate subgraph with a fixed number of checks $s$ at most $\mathrm{O}(n^{r/2 - 1/4})$ times.  Since there are a finite number of candidate graphs to consider, we deduce that almost surely \eqref{eq:shortdualenumerator} holds.  
\end{proof}

   \begin{cor}\label{cor:random_codeword_50}
        In the setting of Corollary \ref{cor:shortdualenumerator}, a random codeword $b\in \ker(H)$ has Hamming weight close to $n/2$: \begin{equation}
            \lim_{n\rightarrow \infty} \mathbb{P}\left( \left| |b| - \frac{n}{2}\right|>n^{1/2+\epsilon} \right) = 0. \label{eq:random_codeword_50}
        \end{equation}
    \end{cor}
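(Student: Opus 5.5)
The plan is a second-moment (Chebyshev) bound on the weight of a uniformly random codeword. We compute the variance exactly using Proposition \ref{prop:xiI}, and control it with the dual weight enumerator estimates of Corollaries \ref{cor:usefulfora=0} and \ref{cor:shortdualenumerator}.

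\textbf{Rewriting the weight.} Write $|b| = \sum_i b_i = \frac{1}{2}\sum_i (1-\xi_i)$, so that
\begin{equation}
    |b| - \frac{n}{2} = -\frac{1}{2}\sum_{i=1}^n \xi_i .
\end{equation}

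\textbf{First moment.} With $b$ uniform on $\ker(H)$, Proposition \ref{prop:xiI} applied to the singleton $I=\lbrace i\rbrace$ gives $\mathbb{E}[\xi_i] = \mathbb{I}[e_i\in \mathrm{im}(H^{\mathsf{T}})]$. By Corollary \ref{cor:usefulfora=0}, almost surely there are no nonzero dual words of weight $1<\Delta_{\mathrm{C}}$. Hence $\mathbb{E}[\xi_i]=0$ for all $i$, and $\mathbb{E}|b| = n/2$.

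\textbf{Second moment.} Expanding the square,
\begin{equation}
    \mathbb{E}\left[\left(\sum_i \xi_i\right)^2\right] = n + \sum_{i\neq j}\mathbb{E}[\xi_i\xi_j] = n + 2\,\mathcal{N}_2 ,
\end{equation}
using Proposition \ref{prop:xiI} again with $I=\lbrace i,j\rbrace$. Here $\mathcal{N}_2$ is the number of weight-2 words in $\mathrm{im}(H^{\mathsf{T}})$. Corollary \ref{cor:usefulfora=0} already gives $\mathcal{N}_2=0$ almost surely, since $2<\Delta_{\mathrm{C}}$. Alternatively, Corollary \ref{cor:shortdualenumerator} with $r=2$ gives $\mathcal{N}_2 = \mathrm{o}(n)$, which is all that is needed. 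Either way, the variance of $|b|$ is at most $\frac{1}{4}(n+\mathrm{o}(n)) = \mathrm{O}(n)$ on an event of code realizations whose probability tends to 1.

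\textbf{Conclusion.} Chebyshev's inequality, conditional on the code, gives
\begin{equation}
\mathbb{P}\left(\left||b|-\frac{n}{2}\right|>n^{1/2+\epsilon}\right) \le \frac{\mathrm{O}(n)}{n^{1+2\epsilon}} \rightarrow 0 .
\end{equation}
The probability in \eqref{eq:random_codeword_50} is jointly over the code and the codeword. It is therefore bounded by the probability of the bad code event, which vanishes, plus the Chebyshev bound on the good event, which also vanishes.

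\textbf{Main obstacle.} The only delicate point is the quantifier structure: the weight-1 and weight-2 dual-word statements hold only almost surely over the code ensemble. I would handle this by explicitly splitting on the good event, as above. If one prefers to avoid Corollary \ref{cor:usefulfora=0} entirely, the weaker $\mathrm{o}(n^{r/2})$ bound of Corollary \ref{cor:shortdualenumerator} for $r=1,2$ still suffices. In that case a nonzero mean $\mathrm{o}(n^{1/2})$ is absorbed into the $n^{1/2+\epsilon}$ window, and the $\mathrm{o}(n)$ correlated pairs only perturb the variance at subleading order.
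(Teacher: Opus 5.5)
Your proposal is correct and follows the paper's own argument: write $n-2|b|=\sum_i\xi_i$, use Proposition \ref{prop:xiI} together with the almost-sure absence of dual words of weight $\le 2$ to get $\mathbb{E}\bigl[(\sum_i\xi_i)^2\bigr]=n$, and conclude by Chebyshev. Your explicit split on the good code event makes precise the joint-probability handling that the paper leaves implicit, and your separate first-moment step is harmless but not needed, since $\mathbb{E}\bigl[(\sum_i\xi_i)^2\bigr]$ already bounds the deviation from $n/2$.
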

    \begin{proof}
        Almost surely there are no checks of weight $\le 2$.   Use the $\pm1$-valued $\xi_i$ and evaluate \begin{equation}
            K := \sum_{i=1}^n \xi_i = n -2|b|.
        \end{equation}
        Proposition \ref{prop:xiI} shows that \begin{equation}
            \mathbb{E}\left[K^2\right] = n,
        \end{equation}
        because the only $n$ elements in $\mathrm{im}(H^{\mathsf{T}})$ are the trivial vectors arising from $(\xi_i)^2=1$.   \eqref{eq:random_codeword_50} follows from Chebyshev's inequality.
    \end{proof}

    \begin{cor}\label{cor:evendistribution}
        In a random $(\Delta_{\mathrm{B}},\Delta_{\mathrm{C}})$ LDPC code with $2<\Delta_{\mathrm{B}}<\Delta_{\mathrm{C}}$,  almost surely as $n\rightarrow \infty$, the following property holds:  if $I\in \mathrm{im}(H^{\mathsf{T}})$ has weight $\Delta_{\mathrm{C}}$, then given a codeword $x\in \ker(H)$ chosen uniformly at random, \begin{equation}
            \mathbb{P}\left[\sum_{i\in I} x_i = j \right] = \frac{1}{2^{\Delta_{\mathrm{C}}-1}}\left(\begin{array}{c} \Delta_{\mathrm{C}} \\ j \end{array} \right) \mathbb{I}(j\text{ is even}). \label{eq:evendistribution}
        \end{equation}
    \end{cor}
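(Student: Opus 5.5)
The plan is to reduce the statement to Proposition \ref{prop:xiI}, applied to subsets of $I$. Fix a weight-$\Delta_{\mathrm{C}}$ word $I\in\mathrm{im}(H^{\mathsf{T}})$. The distribution of the restriction $x|_I\in\mathbb{F}_2^{\Delta_{\mathrm{C}}}$ for uniformly random $x\in\ker(H)$ is determined by its Fourier coefficients
\begin{equation}
\mathbb{E}\bigl[\xi_J\bigr]=\mathbb{E}\bigl[(-1)^{\sum_{i\in J}x_i}\bigr],\qquad J\subseteq I .
\end{equation}
By Proposition \ref{prop:xiI}, each coefficient equals $1$ if $J\in\mathrm{im}(H^{\mathsf{T}})$ and $0$ otherwise. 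It therefore suffices to show that, almost surely, the only subsets $J\subseteq I$ lying in $\mathrm{im}(H^{\mathsf{T}})$ are $J=\emptyset$ and $J=I$.

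This follows from Corollary \ref{cor:usefulfora=0}. Almost surely there are no nonzero elements of $\mathrm{im}(H^{\mathsf{T}})$ of weight smaller than $\Delta_{\mathrm{C}}$, so every nonempty proper subset $J\subsetneq I$ is excluded. The full set $J=I$ is in $\mathrm{im}(H^{\mathsf{T}})$ by assumption. The key point is that this almost-sure event is a single global property of the code, not a statement about one particular $I$. Hence it holds simultaneously for all weight-$\Delta_{\mathrm{C}}$ words $I$, and no union bound over the $\mathcal{N}_m$ choices of $I$ is needed.

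Given this, Fourier inversion on $\mathbb{F}_2^{\Delta_{\mathrm{C}}}$ gives, for every $y\in\mathbb{F}_2^{\Delta_{\mathrm{C}}}$,
\begin{equation}
\mathbb{P}[x|_I=y]=2^{-\Delta_{\mathrm{C}}}\sum_{J\subseteq I}\mathbb{E}[\xi_J](-1)^{\sum_{i\in J}y_i}=2^{-\Delta_{\mathrm{C}}}\bigl(1+(-1)^{|y|}\bigr).
\end{equation}
So $x|_I$ is uniform over the $2^{\Delta_{\mathrm{C}}-1}$ even-weight patterns. Counting the patterns of weight $j$ then yields \eqref{eq:evendistribution} directly.

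The only nontrivial input is the claim that no proper subset of a check lies in the dual code, and this is already supplied by Corollary \ref{cor:usefulfora=0}. The main thing to check carefully is that Corollary \ref{cor:usefulfora=0} applies to every weight-$\Delta_{\mathrm{C}}$ element of $\mathrm{im}(H^{\mathsf{T}})$, including any redundant sums of checks rather than only the rows of $H$. This is fine, because the proper-subset argument uses only the minimum-weight statement. One should also note that this argument needs $\Delta_{\mathrm{C}}$ to be the minimal nonzero dual weight, which Corollary \ref{cor:usefulfora=0} provides for $2<\Delta_{\mathrm{B}}<\Delta_{\mathrm{C}}$.
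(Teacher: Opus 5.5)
Your proof is correct. It rests on the same key input as the paper's: Corollary \ref{cor:usefulfora=0}, which guarantees that the only subsets of $I$ lying in $\mathrm{im}(H^{\mathsf{T}})$ are $\emptyset$ and $I$. You differ only in how you get from this fact to the distribution.

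The paper argues linear-algebraically. It projects onto the bits of $I$ and notes that $\pi_I(\ker H)\subseteq\mathbb{F}_2^{I}$ is a subspace. It asserts this subspace has dimension $\Delta_{\mathrm{C}}-1$ because only one check is supported in $I$. The pushforward of the uniform measure is uniform on the image, because the fibers $W_w$ are translates of one another. Since the image lies in the even-weight subspace and has the same dimension, it equals that subspace.

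You instead read off all Fourier coefficients of $x|_I$ from Proposition \ref{prop:xiI} and invert. This makes explicit a duality step the paper leaves implicit, namely $\dim\pi_I(\ker H)=\Delta_{\mathrm{C}}-\dim\lbrace J\subseteq I: J\in\mathrm{im}(H^{\mathsf{T}})\rbrace$. Equivalently, the annihilator of $\pi_I(\ker H)$ in $\mathbb{F}_2^{I}$ is exactly the set of dual codewords supported in $I$. Your computation also gives uniformity and the support of the distribution in one step.

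The paper's version is more structural. Its coset argument shows that any linear image of a uniformly random codeword is uniform on that image, after which only a dimension count remains.

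Your remark that the relevant event is a global property of the code is also correct. No union bound over the choices of $I$ is needed, and the paper uses this implicitly as well.
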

    \begin{proof}
        Corollary \ref{cor:usefulfora=0} shows that almost surely, the only parity-check with support in $I$ is the check $I$ itself (anything else would have weight smaller than $\Delta_{\mathrm{C}}$).  Let $\pi_I: \mathbb{F}_2^n \rightarrow \mathbb{F}_2^I$ denote the projection onto the bits in check $I$.  Notice that $\pi_I(\ker(H))$ must be a vector space, since $\pi_I$ is a linear map.  The dimension of this space must be exactly $\Delta_{\mathrm{C}}-1$ since we have exactly one unique check in this regime. Again by linearity, the distribution over random codewords must be uniform:  if \begin{equation}
            W_w := \lbrace x\in \ker(H) : \pi_I(x)=w\rbrace,
        \end{equation}
        then $\mathbb{P}(W_{w_1} )= \mathbb{P}(W_{w_1+w_2})$ since, for every $y\in W_{w_2}$, codeword $x$ and $x+y$ are equally likely.  The uniform distribution over all vectors with an even overlap with the check $I$ is given in \eqref{eq:evendistribution}.
    \end{proof}

\section{Learning random codes}\label{app:learning}
In this section, we will show that the TDenseAM architecture above learns the codewords of classical low-density parity-check (LDPC) codes efficiently. 

\subsection{How many training examples are needed?}
We now describe how the TDenseAM trains on random codewords of a high rate LDPC code with $k/n \sim \mathrm{O}(1)$.   

The first key observation is that the TDenseAMs do correctly learn the underlying code.   In other words, the ground states of $\widehat{E}$ -- after training on a large enough number of patterns $P$ -- precisely match the codewords of the input code.   Most excitingly, for any threshold -- including $a=0$ -- we do \emph{not} need to see every random codeword in order to recover all of them as local minima of $\widehat{E}$.  Bounds on a sufficient number of codewords to learn the LDPC code are provided in the following proposition:

 \begin{prop}\label{prop:patterns_to_learn}
Suppose that we train the $p$-local TDenseAM architecture with threshold $a>0$ on $P$ uniformly random codewords, from an LDPC code with $\le m$-body weights and distance $d>1$.  Then if \begin{equation}
    P > \frac{2}{a^2}\left(m\log n + \log \frac{2}{\epsilon} \right) \label{eq:patterns_to_learn}
\end{equation}  
for $0<\epsilon<1$, we find a bijection between ground states of $\widehat{E}$ and codewords in $V$, with probability at least $1-\epsilon$. 

If we consider the usual dense associative memory ($a=0$), then the lowest energy states of $\widehat{E}$ are the codewords so long as\footnote{The exponent of the polynomial scaling here is not expected to be optimal, and may depend on the energy landscape of the code itself.  The key thing is that we do not expect that $P\sim \log N$  can ever be achieved when $a=0$.} \begin{equation}
    P >  \frac{8N^{2m}}{(m-1)!^2} \left(m\log N + \log \frac{2}{\epsilon} \right)\label{eq:a=0case}
\end{equation}
\end{prop}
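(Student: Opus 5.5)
We first treat the thresholded case $a>0$. The goal is that, with probability at least $1-\epsilon$, the thresholded tensor $\widehat{T}$ is supported exactly on index sets $I$ whose reduced support (indices appearing an odd number of times) lies in $\mathrm{im}(H^{\mathsf{T}})$. Each genuine coupling then has $T_I=1$ by \eqref{eq:basic1}, so $\widehat{T}_I=\phi_a(1)=1$. Once this holds, $\widehat{E}(\sigma)=-\sum_{I\in\mathrm{supp}}\sigma_I$ is a sum of terms each equal to $1$ on every codeword. Hence every $b\in V$ attains the global minimum $-|\mathrm{supp}\,\widehat{T}|$. Conversely, if $\widehat{E}(\sigma)$ equals this minimum, every retained check is satisfied. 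Since the retained set contains all rows of $H$ (weight $\le m$), $\sigma$ lies in $\ker H=V$. This gives the bijection. The sign/gauge issue for even $m$ is handled the same way, since $\sigma\to-\sigma$ is in $V$ whenever the all-ones vector is.

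The main probabilistic step is a union bound over spurious index sets. For $I\notin\mathrm{im}(H^{\mathsf{T}})$, Proposition \ref{prop:xiI} shows that $\xi^\mu_I$ is a fair $\pm1$ variable, independent across $\mu$. Hoeffding \eqref{eq:hoeffding} with $s=a$ then gives $\mathbb{P}(|T_I|>a)\le 2\mathrm{e}^{-Pa^2/2}$. The number of distinct multisets of size $\le m$ reduced mod 2 is at most $n^m$, so
\begin{equation}
\mathbb{P}(\exists\text{ spurious }I\text{ with }\widehat{T}_I\ne0)\le 2n^m\mathrm{e}^{-Pa^2/2}.
\end{equation}
This is at most $\epsilon$ exactly when \eqref{eq:patterns_to_learn} holds. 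The hypothesis $d>1$ ensures that no weight-one check forces a bit trivially. It also ensures that the empty reduction, coming from repeated indices, is the only degenerate case; that case contributes a constant to $\widehat{E}$ and is harmless.

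For $a=0$ there is no rejection, so we instead compare energies. Write $\widehat{E}=E_{\mathrm{true}}+E_{\mathrm{junk}}$, where $E_{\mathrm{junk}}(\sigma)=-\sum_{I\text{ spurious}}T_I\sigma_I$. Take a union bound over all $\le N^m$ spurious $I$ with $s=\theta$, so that $\max_I|T_I|\le\theta$ with probability $\ge1-\epsilon$ once $P>\frac{2}{\theta^2}(m\log N+\log\frac2\epsilon)$. Then $|E_{\mathrm{junk}}(\sigma)|\le \theta\,\#\{\text{ordered tuples}\}\lesssim\theta N^m/(m-1)!$ uniformly in $\sigma$. Any non-codeword violates at least one true check. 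That costs energy at least $2$ times the multiplicity of that check in the ordered sum, which is at least $1$ up to combinatorial factors. So it suffices that $2\sup|E_{\mathrm{junk}}|<1$, that is, $\theta\approx(m-1)!/(2N^m)$. Substituting gives $P>\frac{8N^{2m}}{(m-1)!^2}(m\log N+\log\frac2\epsilon)$, which is \eqref{eq:a=0case}.

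The main obstacle is the bookkeeping in this $a=0$ step. We need to count how index tuples with repeats reduce to lower-weight sets, and we must check that the true-check gap is not eaten by the junk. The crude sup bound handles this only at the cost of the lossy $N^{2m}$ scaling, and we would not try to optimize it.
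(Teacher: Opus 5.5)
Your proposal is correct and follows the paper's proof essentially step for step: Hoeffding plus a union bound over the at most $n^m$ spurious couplings for $a>0$, and for $a=0$ a union bound forcing every junk coupling below $\delta_0=(m-1)!/(2N^m)$, so that the junk energy cannot outweigh the cost of a single violated true check. You also spell out the bijection that the paper leaves implicit: every codeword attains the minimum $-|\mathrm{supp}\,\widehat T|$, and every minimizer satisfies all retained rows of $H$. Your remarks on what $d>1$ buys are off, since $d>1$ concerns weight-one codewords rather than weight-one checks, but nothing in the argument depends on them.
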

\noindent Note that the scaling in this equation is suboptimal, if our goal is to learn an accurate representation of the overall energy landscape.  We return to that issue in Theorem \ref{thm:a=0learn-largeA}.
\begin{proof}
    We first consider the case $a>0$.  $\widehat{E}$ is determined by the random variables \begin{equation}
        T_I :=\frac{1}{P}\sum_{\mu=1}^P\xi_I^\mu.
    \end{equation}
    The distribution of these random variables is straightforward to see:  each $T_I$ is 1 with probability 1 if $I \in \mathrm{im}(H^{\mathsf{T}})$ represents a valid parity-check (here we have a slight abuse of notation), while it is a binomial random variable (an unbiased random walk with $P$ steps, normalized by $1/P$) otherwise.  This distribution follows directly from the proof of Proposition \ref{prop:xiI}. By the assumption that $H$ was generated by at most $p$-body checks, the TDenseAM will detect every parity-check in $H$, so the remaining question is how large $P$ must be before there is a high probability that $|T_I | < a$ for each $I\notin \mathrm{im}(H^{\mathsf{T}})$. We can easily address this question using Hoeffding's inequality \eqref{eq:hoeffding}. By the union bound, the probability that we fail and find $\widehat{E} \ne E$  is bounded by (using $m\ge 2$) \begin{equation}
        \mathbb{P}[\text{fail}] < m\left(\begin{array}{c} n \\ m \end{array}\right)\cdot 2\mathrm{e}^{-Pa^2/2} < 2\mathrm{e}^{m\log n - Pa^2/2} < \epsilon.
    \end{equation} In the last step, we used  \eqref{eq:patterns_to_learn} to give a bound on $P$.

    The case of $a=0$ is similar, except that we can never truly threshold away every coupling.   We can still obtain a very conservative bound as follows: there are fewer than $N^m/(m-1)!$ possible parity-checks of weight $\le m$.  If for each $I\notin \mathrm{im}(H^{\mathsf{T}})$, we find \begin{equation}
        \widehat{T}_I < \delta_0 = \frac{(m-1)!}{2N^{m}}, \label{eq:delta_0}
    \end{equation}
    then we are guaranteed that only codewords are ground states of $\widehat{E}$.  The reason is as follows: as before, $T_I=1$ if $I\in \mathrm{im}(H^{\mathsf{T}})$, and since each bit will be contained in at least one parity-check (as the distance is larger than 1), the energy cost of an error will be positive: there will be at least one violated check, so the energy cost of any state with a violated parity-check, relative to any codeword, will be \begin{equation}
       \widehat{E}_{\text{state with error}} - \widehat{E}_{\text{codeword}} > \left(1-\frac{N^m}{(m-1)!} \delta_0 \right) - \frac{N^m}{(m-1)!} \delta_0 > 0
    \end{equation}
    using \eqref{eq:delta_0}.  So we should use Hoeffding's inequality \eqref{eq:hoeffding} and ask that each $T_I < \delta_0$, which occurs when \eqref{eq:a=0case} holds.
\end{proof}

The above proposition guarantees that after seeing enough patterns, the TDenseAM learns the underlying code.   Crucially, either \eqref{eq:patterns_to_learn} or even \eqref{eq:a=0case} are far smaller than the $\exp(\mathrm{O}(N))$ codewords of a random LDPC code.  Therefore, the associative memory is \emph{not} simply memorizing the training data, it is learning the underlying rules that generated the data set; namely, performing Hamiltonian learning.  Amazingly, in the case $a>0$, we do not even need to show the TDenseAM every linearly independent vector in $V$ before it learns the entire vector space.  The $a>0$ TDenseAM therefore truly learns to generalize on this problem, even going beyond interpolation and learning pattern $v_1+v_2$ after seeing $v_1$ and $v_2$ individually.

\subsection{Dynamics of learning: $a\rightarrow 1$}
We will first discuss how the TDenseAM learns -- in particular, the nature of the learned energy landscape $\widehat{E}$ as a function of $P$ -- in the limit $a\rightarrow 1$.   In this regime, the TDenseAM becomes identical to the SAT learning algorithm of Valiant \cite{valiant} applied to the XORSAT (LDPC) problem: we reject a constraint if we ever see it violated a single time.  Namely, $\widehat{T}_I \ne 0$ if and only if $\xi_I^\mu = \xi_I^\nu$ for every pair of training patterns $(\mu,\nu)$.  

In our discussion of the learning algorithm, it will be useful to separate out \begin{equation} \label{eq:EEjunk}
    \widehat{E}(\sigma)  = -\sum_{I\in \mathrm{im}(H^{\mathsf{T}})}\widehat{T}_I \sigma_I-\sum_{I\notin \mathrm{im}(H^{\mathsf{T}})}\widehat{T}_I \sigma_I =: E + E_{\text{junk}},
\end{equation}
where $E$ is a true code Hamiltonian that we will learn, while $E_{\text{junk}}$ is the spurious Hamiltonian that is sensitive to the specific training data we have seen.  Proposition \ref{prop:patterns_to_learn} gives us a loose bound on how fast it takes for $E_{\text{junk}}$ to become negligibly small or even zero.   However, we would like to say more about what happens to $E_{\text{junk}}$ as a function of $P$.

  There are two useful ways to think about how the TDenseAM learns.  Firstly, we can keep track of \begin{equation}
    J := \#\lbrace I : \widehat{T}_I > 0\rbrace = \sum_{I} X_I.
\end{equation}
$J$ stands for the number of junk couplings that we have not unlearned, and is a proxy for how corrupted $\widehat{E}$ is by $E_{\text{junk}}$.   For later use, we have also defined the indicator variables \begin{equation}
    X_I := \mathbb{I}[\widehat{T}_I \ne 0].
\end{equation}  The second thing we can keep track of is the rank of the learned codespace $\widehat{k} \le k$.

We will show that the way TDenseAM learns in the limit $a\rightarrow 1$ is rather different at early times depending on whether $m=2$ or $m>2$, and so we will have to tackle those problems separately.  The late time learning is very similar in both cases, so when we talk about late time behavior we will discuss both together.

\begin{prop}
   Consider the task of learning a random LDPC code with $m=2$ using the $a\rightarrow 1$ TDenseAM.   Then almost surely as $k\rightarrow \infty$, if $P< (1-\epsilon)\log_2 k$ for any $0<\epsilon<1$, the learned rank of codewords is
   \begin{equation}
        \widehat{k} = 2^{P-1}.
    \end{equation}
\end{prop}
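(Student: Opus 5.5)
We treat the $m=2$ code as a collection of decoupled repetition codes, as in the main text. Every weight-2 check $i+j$ forces $x_i=x_j$, so $\ker(H)$ splits the bits into $k$ clusters $B_1,\ldots,B_k$. A codeword is exactly an arbitrary assignment $c\in\mathbb{F}_2^k$ of one value per cluster, and a uniformly random codeword has i.i.d.\ uniform cluster values. In the $a\rightarrow 1$ limit, a pair $\lbrace i,j\rbrace$ keeps $\widehat{T}_{ij}\neq 0$ if and only if $\xi^\mu_i\xi^\mu_j$ takes the same value on all $P$ patterns. Gauge fixing $\xi^1=+1$, as in the main text, this means $\xi^\mu_i=\xi^\mu_j$ for every $\mu$. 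Hence the surviving checks are exactly the pairs $\lbrace i,j\rbrace$ whose ``pattern columns'' $(b^1_i,\ldots,b^P_i)\in\mathbb{F}_2^P$ coincide. The kernel of $\widehat{H}(P)$ therefore consists of the vectors constant on each class of bits sharing the same column. So $\widehat{k}$ equals the number of distinct columns that actually occur.

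Since bits in the same cluster share a column, the number of distinct columns is the number of distinct vectors among the $k$ cluster labels $c_1,\ldots,c_k\in\mathbb{F}_2^P$. These labels are i.i.d.\ uniform. The first coordinate is fixed to $0$ by the gauge choice $\xi^1=+1$, so each label is uniform on a set of size $2^{P-1}$. The claim $\widehat{k}=2^{P-1}$ thus reduces to a coupon-collector statement: $k$ i.i.d.\ uniform draws from $2^{P-1}$ values hit every value, almost surely. By a union bound, the probability some value is missed is at most
\begin{equation}
2^{P-1}\left(1-2^{-(P-1)}\right)^k \le \exp\left[(P-1)\log 2 - k\,2^{-(P-1)}\right].
\end{equation}
When $P<(1-\epsilon)\log_2 k$ we have $k2^{-(P-1)}\ge 2k^{\epsilon}$, which dominates $(P-1)\log 2=\mathrm{O}(\log k)$. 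The bound therefore vanishes as $k\rightarrow\infty$.

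The step I expect to require the most care is justifying the first paragraph's structural claims for a random $m=2$ code. These are that $\ker(H)$ is exactly the set of cluster-constant vectors, with $k$ counting clusters and cluster values independent and uniform under a uniform codeword. They also include that the learned kernel is exactly the span of indicator vectors of the distinct-column classes, so that no higher-order combination of surviving weight-2 checks imposes extra constraints. The first follows from Proposition \ref{prop:xiI}: uniform sampling from a linear code makes the projection onto cluster representatives uniform on $\mathbb{F}_2^k$. The second is immediate because weight-2 checks generate only equalities, whose closure is the column-equivalence relation. I would also need to reconcile the gauge convention, since it is what makes the count $2^{P-1}$ rather than $2^P$. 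The gauge identifies the global flip, which removes the first coordinate, while all-equal columns remain one class. A careful statement would handle the parity-of-$m$ symmetry remark from the main text in the same way.
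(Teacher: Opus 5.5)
Your proposal is correct and follows the paper's proof essentially step for step. Both arguments reduce the $m=2$ code to disjoint repetition-code clusters and gauge-fix $\xi^1=+1$, so that the surviving pairs are exactly the bits sharing one of $2^{P-1}$ gauge-fixed columns (the paper's buckets $\mathcal{S}(\eta_2,\ldots,\eta_P)$); both then union-bound the probability of an empty bucket by $2^{P-1}\left(1-2^{-(P-1)}\right)^k$. Your closing estimate $k\,2^{-(P-1)}\ge 2k^{\epsilon}$ is stated in the right direction, whereas the paper's final inequality is written with its direction reversed and is harder to follow.
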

\begin{proof}
When $m=2$, the only allowed codes that TDenseAM can learn are disjoint repetition codes: indeed if $x_i=x_j$ and $x_j=x_k$ are constraints on the codespace, by transitivity $x_i=x_k$ -- therefore, the allowed codewords correspond to independent subsets of $[n]$ which must all agree with each other.
 The rank of the code is the number of non-empty sets \begin{equation}
    \mathcal{S}(\eta_1,\ldots,\eta_P) := \lbrace i\in [n] : \xi_i^\mu =\eta_i\rbrace .
\end{equation}
For convenience, it is useful to ``gauge fix" the first $\xi_i^1 =1$.  If each $\mathcal{S}(\eta_2,\ldots,\eta_P)$ is non-empty, then the rank of the learned code must be $2^{P-1}$.

The most general code with $m=2$ corresponds to a set of repetition codes, wherein the first $k$ bits are arbitrary and the remaining $n-k$ bits are fixed.  The probability that a given $i\in [k]$ ends up in each $\mathcal{S}(\eta_2,\ldots, \eta_P)$  bucket is $2^{-(P-1)}$, and each of these are independent events.  The probability that a bucket has zero elements is \begin{equation}
    \mathbb{P}\left(\text{any } \mathcal{S}(\eta_2,\ldots, \eta_P) = \emptyset\right) \le 2^{(P-1)} \left(1-2^{-(P-1)}\right)^k.
\end{equation}
This event happens with probability $<\epsilon$ so long as \begin{equation}
    P-1 + \log_2\frac{1}{\epsilon} >k\log_2 \frac{1}{1-2^{-(P-1)}} > 2^{-(P-1)}k
\end{equation}
In the limit of large $k$, this inequality is false for any $P<(1-\epsilon)\log_2k $, for any $\epsilon>0$.
\end{proof}

\begin{prop}
    Let $0<\kappa<1$ and $k=\mathrm{\Theta}(n)$.  In the regime \begin{equation}
        P = (m-\kappa)\log_2 n,
    \end{equation}
    almost surely as $n\rightarrow \infty$, the rank of learned codewords is  \begin{equation}
        \widehat{k} = k- 2\frac{n^\kappa}{m!} (1+\mathrm{o}(1)). \label{eq:widehat_k}
    \end{equation}
\end{prop}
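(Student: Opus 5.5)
The plan is to reduce the statement to two facts: the number of surviving spurious checks concentrates, and those survivors are, with high probability, linearly independent modulo the true checks. Gauge fix $\xi^1_i=1$ as in the main text.

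\textbf{Step 1: expected number of surviving junk checks.} In the $a\rightarrow 1$ limit, a check $I\notin \mathrm{im}(H^{\mathsf{T}})$ survives exactly when $\xi^\mu_I=\xi^1_I$ for every $\mu\ge 2$. By Proposition \ref{prop:xiI}, $\xi^\mu_I$ is an unbiased sign for each fresh codeword, so $I$ survives with probability $2^{-(P-1)}$. Summing over the $\binom{n}{m}$ weight-$m$ sets, and subtracting the $\mathcal{N}_m=\mathrm{O}(n)$ true checks of Corollary \ref{cor:usefulfora=0}, gives
\begin{equation}
\mathbb{E}[J_m]=\left(\binom{n}{m}-\mathcal{N}_m\right)2^{-(P-1)} = \frac{n^m}{m!}\cdot 2n^{-(m-\kappa)}(1+\mathrm{o}(1)) = \frac{2n^\kappa}{m!}(1+\mathrm{o}(1)).
\end{equation}
This reproduces the factor $2$ in \eqref{eq:widehat_k}. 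Lower-weight candidates (weights $m-2,m-4,\ldots$) contribute at most $n^{m-2}2^{-P}=\mathrm{O}(n^{\kappa-2})\rightarrow 0$ in expectation. By Markov's inequality they are almost surely absent.

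\textbf{Step 2: concentration of $J_m$.} I would use a second-moment argument. For two candidates $I\neq I'$, the joint survival event is governed by the signs $\xi_I,\xi_{I'}$ of each fresh codeword. Applying Proposition \ref{prop:xiI} to $I$, $I'$ and $I\triangle I'$ shows these signs are independent unbiased bits unless $I\triangle I'\in\mathrm{im}(H^{\mathsf{T}})$. If $I\triangle I'\in\mathrm{im}(H^{\mathsf{T}})$, the word $I\triangle I'$ has weight $r\le 2m$ and its support is fixed by the pair. By Corollary \ref{cor:shortdualenumerator}, the number of such pairs is $\mathrm{o}(n^{r/2})\cdot n^{m-r/2}=\mathrm{o}(n^m)$. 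Each such pair survives with probability $2^{-(P-1)}$, so these pairs contribute $\mathrm{o}(n^\kappa)$ to the second moment. Hence $\mathrm{Var}(J_m)=\mathbb{E}[J_m](1+\mathrm{o}(1))+\mathrm{o}(n^{\kappa})$, and Chebyshev's inequality gives $J_m=\frac{2n^\kappa}{m!}(1+\mathrm{o}(1))$ almost surely.

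\textbf{Step 3: independence of the survivors modulo $\mathrm{im}(H^{\mathsf{T}})$.} The learned check space is $\mathrm{im}(H^{\mathsf{T}})+\mathrm{span}(\text{survivors})$, and every true check is detected. Therefore $\widehat{k}=k-\dim\big(\mathrm{span}(\text{survivors})/\mathrm{im}(H^{\mathsf{T}})\big)$, which is at least $k-J_m$. The upper bound follows if I show that, almost surely, no nonempty subfamily of survivors $w_1,\ldots,w_L$ sums to some $c\in\mathrm{im}(H^{\mathsf{T}})$.

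I would run a first-moment count over such dependent families. Given the relation $\sum w_\ell=c$ and a generic configuration, the survival events of $L-1$ of the vectors are independent fair coins per pattern, so the family survives with probability $\approx 2^{-(P-1)(L-1)}$. For the count of families, write $|c|=r$. Every bit of $\bigcup_\ell w_\ell$ outside $\mathrm{supp}(c)$ must be covered at least twice, so the union has size at most $(mL+r)/2$. Choosing $c$ costs $\mathrm{o}(n^{r/2})$ by Corollary \ref{cor:shortdualenumerator}. This gives an expected number of dependent families of order
\begin{equation}
n^{\frac{mL}{2}+\frac{r}{2}-(m-\kappa)(L-1)}\cdot \mathrm{poly}(L).
\end{equation}
I would then sum this over $r$ and over $L\le J_m$ by a union bound.

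\textbf{Main obstacle.} The crude exponent above is not negative in every case: for example $m=3$, $L=3$ with $\kappa$ close to $1$. So the counting has to be sharpened. My first attempt would exploit that every bit in the union must be covered an even number of times modulo $c$. This forces the $L$ sets to overlap in a tree-like or cyclic pattern, and each shared bit costs an additional factor of $n^{-1}$ relative to the naive union count. I expect this refined enumeration, uniformly in $L$ up to $n^\kappa$, to be the genuinely delicate part of the proof.
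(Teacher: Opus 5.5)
\textbf{Steps 1 and 2 are fine; Step 3 has a genuine gap.} Your Steps 1 and 2 are correct and match the paper's first two moves (Lemma \ref{lem:linindcheck} plus a second-moment/Chebyshev bound on $J$). Your Step 2 is actually more careful than the paper, which treats all pairs $I_1\neq I_2$ as independent and overlooks the perfectly correlated pairs with $I_1\triangle I_2\in\mathrm{im}(H^{\mathsf{T}})$.

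The gap is Step 3. You leave it open, and it aims at a target that is stronger than the statement needs and not reachable with the available bounds. First, your family count double-counts. Once $c$ is chosen (cost $\mathrm{o}(n^{r/2})$), its $r$ bits are fixed, so only $(mL-r)/2$ bits of the union are free. That gives $\mathrm{o}(n^{mL/2})$ candidate families, not $n^{mL/2+r/2}$. With this correction, the expected number of $L$-element dependent families among survivors is $\mathrm{o}\bigl(n^{\kappa-(L-2)(m/2-\kappa)}\bigr)$. At $L=2$ this is only $\mathrm{o}(n^{\kappa})$, and these are exactly the correlated pairs of your Step 2. So the bound $\mathcal{N}_r=\mathrm{o}(n^{r/2})$ of Corollary \ref{cor:shortdualenumerator} only shows such dependencies are few, not absent; ``almost surely no dependent subfamily'' is out of reach. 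It is also unnecessary. \eqref{eq:widehat_k} only fixes $k-\widehat{k}$ up to a factor $1+\mathrm{o}(1)$, so it suffices that the rank of the survivors modulo $\mathrm{im}(H^{\mathsf{T}})$ is $J-\mathrm{o}(n^{\kappa})$.

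\textbf{The missing idea is the paper's local overlap estimate}, which avoids any enumeration uniform in $L$. Condition on a survivor containing bit $i$. The expected number of other survivors through $i$ is at most $\binom{n-1}{m-1}2^{-(P-1)}=\frac{2}{(m-1)!}n^{\kappa-1}\rightarrow 0$. Summing over the $\sim n^{\kappa}$ survivors, the expected number of survivors sharing a bit with another survivor is $\mathrm{O}(n^{2\kappa-1})+\mathrm{o}(n^{\kappa})=\mathrm{o}(n^{\kappa})$. The second term comes from your overlapping correlated pairs. The remaining $J-\mathrm{o}(n^\kappa)$ survivors have pairwise disjoint supports, so they are linearly independent. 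None of them can enter a relation $\sum_\ell w_\ell=0$, because every bit in such a relation is covered an even number of times. This is how the paper concludes \eqref{eq:widehat_k}.

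The paper's sketch says nothing about relations landing on a nonzero $c\in\mathrm{im}(H^{\mathsf{T}})$, for example two disjoint survivors whose union is a weight-$2m$ element of $\mathrm{im}(H^{\mathsf{T}})$. For bounded $L$, your corrected count shows these are $\mathrm{o}(n^{\kappa})$ in expectation. That is where your enumeration is genuinely useful, as a complement to the overlap argument rather than a replacement for it.
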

\begin{proof}
We begin with a useful lemma:
\begin{lem}\label{lem:linindcheck}
    Let $\mathcal{A} \subset [I]$  denote a collection of the possible learned couplings. Then, in the limit $a\rightarrow 1$, after training on $P$ random codewords, \begin{equation}
        \mathbb{P}\left[ \widehat{T}_I > 0 \text{ for all } I\in \mathcal{A}  \right] = 2^{-(P-1)A} \label{eq:PTI>0}
    \end{equation}
    where \begin{equation}
        A := \mathrm{rank}(\mathcal{A}) - \mathrm{rank}(\mathcal{A} \cap \ker(H)).  \label{eq:ArankA}
    \end{equation}
\end{lem}
\begin{proof}
    When $|\mathcal{A}|=1$, \eqref{eq:PTI>0} immediately follows from the training rule and Proposition \ref{prop:xiI}.  Firstly, if the sole $I\in\mathcal{A}$ is a valid check, then every codeword satisfies it, so $A=0$ and the probability in \eqref{eq:PTI>0} is 1.  Secondly, if $I\notin \ker (H)$, the probability that patterns $\xi_I^\mu$ with $\mu=2,\ldots, P$ match $\xi_I^1$ (which itself was $\pm 1$ with equal probability) is independently $\frac{1}{2}$ for each pattern, establishing \eqref{eq:PTI>0} as well.

    Next, suppose that $|\mathcal{A}|>1$, but $\mathcal{A}\cap \ker(H)=\emptyset$ and each check in $\mathcal{A}$ is linearly independent.  In this regime, we may apply Proposition \ref{prop:linindcheck}:   the probability that all $\widehat{T}_I$ survive training on each pattern $\mu=2,\ldots, P$ is $2^{-A}$: we must lie in an orbit that does not contain any $v_a$. 
    
    An analogous argument applies in the case where $\mathcal{A}\cap \ker(H)=0$ but $\mathcal{A}$ is not full-rank.   The values of the $\widehat{T}_I$ are compeletely determined by knowledge of their values on a linearly independent subset, where we can apply the argument of the previous paragraph.

    Lastly, suppose that $\mathcal{A}\cap \ker(H)\ne \emptyset$.  In this case, we know that every element of $\ker(H)$ in the span of $\mathcal{A}$ is never violated, so we must subtract these contributions from the count above.   If we study the reduced set $\mathcal{A}\setminus \ker(H)$, a vector space of rank $A$ as given in \eqref{eq:ArankA}, then it satisfies the previous case.
\end{proof}

    Observe that \begin{equation}
        \mathbb{E}\left[J^2\right] = \sum_{I_1} \mathbb{E}[X_{I_1}] + \sum_{I_1\ne I_2}\mathbb{E}[X_{I_1}X_{I_2}] = \sum_{I_1} \mathbb{E}[X_{I_1}] + \sum_{I_1\ne I_2}\mathbb{E}[X_{I_1}]\mathbb{E}[X_{I_2}].
    \end{equation}
    The second equality follows from Lemma \ref{lem:linindcheck}: any two inequivalent $I$ are linearly independent. Therefore, so long as the LDPC code has a subextensive fraction of all possible checks (the vast majority of possible checks are ``junk" checks), \begin{equation}
        \mathrm{Var}(J) \le \mathbb{E}[J] = (1+\mathrm{o}(1))\left(\begin{array}{c} n \\ m \end{array}\right) 2^{-(P-1)} = \frac{n^m}{m!}2^{-(P-1)} (1+\mathrm{o}(1)) = 2\frac{n^\kappa}{m!} (1+\mathrm{O}(1)). \label{eq:varJ}
    \end{equation}
 Chebyshev's inequality thus shows that \begin{equation}
        \mathbb{P}\left[ \left|\frac{J}{\mathbb{E}[J]} - 1 \right| > s \right] \le \frac{s^2}{\mathbb{E}[J]} \le \frac{s^2 m!}{2n^\kappa}
    \end{equation}
    This shows that we see $2n^\kappa /m! \times (1+\mathrm{o}(1))$ checks, almost surely as $n\rightarrow \infty$.  
    
    With high probability each bit is included in at most one surviving fake check.  This is because for any $I_1 \ne I_2$, $\mathbb{P}[X_{I_1}=1|X_{I_2}=1] = \mathbb{P}[X_{I_1}=1]$, again by the linear independence criterion in Proposition \ref{prop:linindcheck} and Lemma \ref{lem:linindcheck}.  Therefore, supposing that bit $1\in I_1$ and $X_{I_1}=1$, the probability that any other surviving check contains bit 1 is \begin{align}
        \mathbb{P}[\text{2 surviving checks contain 1} &| \text{1 surviving check}] \le \sum_{I_2\ne I_1, 1\in I_2}\mathbb{E}[X_{I_2}] \notag \\
        &= \left(\begin{array}{c} n-1 \\ m-1 \end{array}\right) 2^{-(P-1)} =\frac{2}{(m-1)! n^{1-\kappa}} \rightarrow 0.
    \end{align}
    This shows that almost every check is linearly independent as $n\rightarrow\infty$, and establishes \eqref{eq:widehat_k}.
\end{proof}

\begin{thm}\label{thm:a=1learn}
   Consider the task of learning a random LDPC code with $m>2$ and $k/n = \mathrm{O}(1)$, using the $a\rightarrow 1$ TDenseAM.   Fix any $\epsilon>0$, and suppose that \begin{equation} \label{eq:smallP_m>2}
        P < \left(\frac{m(m-1)}{2m-1} -\epsilon \right)\log_2 n.
    \end{equation}
    Then as $n\rightarrow \infty$, almost surely the learned rank of codewords is \begin{equation}
        \widehat{k} = \left\lbrace\begin{array}{ll} P-1 &\ m \text{ odd} \\ P &\ m \text{ even}  \end{array}\right.. \label{eq:hatk_m>2}
    \end{equation}
\end{thm}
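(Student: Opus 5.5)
The plan is to show that, in the regime \eqref{eq:smallP_m>2}, the learned code $\ker\widehat{H}(P)$ is exactly the span of the training patterns. For odd $m$ we also quotient by the gauge-fixed first pattern; for even $m$ we add the all-ones vector, which is always allowed. This immediately gives \eqref{eq:hatk_m>2}.

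The lower bound $\widehat{k}\ge P-1$ (respectively $P$) is the easy direction. In the $a\rightarrow 1$ limit every $\mathbb{F}_2$-linear combination of the (gauge-fixed) training patterns satisfies every surviving check, so all of them are minima. With $P=\mathrm{O}(\log n)$ and $k=\mathrm{\Theta}(n)$, the $P$ random codewords are almost surely linearly independent, and independent of $\mathbf{1}$, since the probability of a dependency is at most $2^{P}2^{-k}$. For even $m$, every check of even weight is satisfied by $\mathbf{1}$, which contributes the extra dimension.

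The substance is the upper bound. Let $W:=\mathrm{span}\{\xi^1,\ldots,\xi^P\}$ (plus $\mathbf{1}$ if $m$ is even), and let $W^\perp_m$ be the set of weight-$m$ (and for even $m$, lower even-weight) vectors orthogonal to $W$. Surviving checks are exactly the elements of $W^\perp_m$, so $\widehat{k}=\dim\,(\mathrm{span}\,W^\perp_m)^\perp$. It therefore suffices to show that $\mathrm{span}\,W^\perp_m = W^\perp$, i.e.\ that low-weight vectors orthogonal to the data span the full annihilator.

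I would reduce this to a local statement. It suffices to show that for every pair of bits $i,j$ that agree on all patterns (``same type''), the weight-2 vector $e_i+e_j$ lies in $\mathrm{span}\,W^\perp_m$. For odd $m$ one also needs $e_i$ for bits with constant value; this is impossible except via the gauge, hence the $P-1$. The reason this suffices: group the bits into $2^{P-1}$ classes by their column of values across the patterns. Each class has size about $n2^{-(P-1)}\gg 1$, since $P<\log_2 n$, and these classes already generate $W^\perp$ modulo class-sums. Any vector orthogonal to $W$ then reduces, using the pair moves $e_i+e_j$, to at most one representative per class, and the residual constraint involves only the $2^{P-1}$ class types. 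For that residual one uses weight-$m$ checks built from $m$ distinct types whose columns XOR to zero.

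To produce $e_i+e_j$, I would write it as $w_1+w_2$ with $w_1=\{i\}\cup S$ and $w_2=\{j\}\cup S$ for a common $(m-1)$-set $S$. This fails directly, because $w_1$ itself need not be orthogonal to $W$. So I would instead find a chain of $L$ surviving weight-$m$ checks whose sum is $e_i+e_j$, as in the heuristic $\gtrsim 2^{-PL}n^{mL/2-2m}$ count in the main text. Concretely, take $S$ to be $(m-1)$ bits whose types XOR to the type of $i$ (equal to the type of $j$). Then $w_1$ and $w_2$ both survive, since survival depends only on the types of the bits, and $w_1+w_2=e_i+e_j$. Such an $S$ exists as soon as the types can be chosen; with $2^{P-1}$ types all populated and $m-1\ge 2$, we can pick any $m-2$ types freely and solve for the last. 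This is exactly where $m>2$ is used.

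The main obstacle is therefore the class-population statement at the stated threshold. We need every class (or enough classes) to be nonempty and of size at least $m$, so that distinct bits can be chosen. Since the patterns are codewords rather than uniform random strings, this requires controlling the joint distribution of $P$ columns. I would use Proposition~\ref{prop:xiI} together with Corollary~\ref{cor:shortdualenumerator}: codewords restricted to small sets of bits are exactly uniform unless those bits support a short dual word, and such supports are rare. A second-moment count over $(m-1)$-tuples $S$ with the required type pattern then gives concentration. The exponent $m(m-1)/(2m-1)$ should come out of this second-moment computation. Balancing the expected number of valid pairs $(w_1,w_2)$ for a fixed $(i,j)$, roughly $n^{m-1}2^{-(P-1)}$ or its chain analogue, against the union bound over all $\sim n^2$ pairs and the overlap variance yields the condition $P(2m-1)<m(m-1)\log_2 n$. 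I would attempt that bookkeeping last and expect it to determine the precise constant.
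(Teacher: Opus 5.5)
\textbf{There is a genuine gap.} Your reduction needs every type class to be populated, indeed to have size $\gg 1$. You justify this with ``since $P<\log_2 n$'', but the hypothesis does not give that. It only gives $P<(\frac{m(m-1)}{2m-1}-\epsilon)\log_2 n$, and $\frac{m(m-1)}{2m-1}>1$ for every $m\ge 3$: it equals $6/5$ at $m=3$ and grows like $m/2$.

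\textbf{What breaks above $\log_2 n$.} Once $P\ge(1+\epsilon)\log_2 n$ there are $2^{P-1}\gg n$ possible types. Almost all classes are then empty and the occupied ones are essentially singletons: roughly $n^2 2^{-P}=o(n)$ same-type pairs, and none once $P>2\log_2 n$, which lies inside the allowed range for $m\ge 5$. Three consequences follow.
\begin{itemize}
\item Your moves $e_i+e_j$ span only an $o(n)$-dimensional space.
\item The step ``reduce to one representative per class'' accomplishes nothing.
\item The ``residual'' claim is the whole theorem. That claim is that weight-$m$ vectors of zero syndrome span all zero-syndrome vectors, and you give no argument for it.
\end{itemize}
Your heuristic for the constant also does not produce it. A per-pair count like $n^{m-1}2^{-P}$ points to a threshold near $(m-1)\log_2 n$, not $\frac{m(m-1)}{2m-1}\log_2 n$.

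\textbf{What your argument does establish.} Restrict types to an information set such as $\mathcal V$ of Lemma~\ref{lem:independent-ei}, where codeword bits are i.i.d.\ uniform. Your argument then proves the theorem for $P\le(1-\epsilon)\log_2 n$, and in that range it is a cleaner and more elementary route than the paper's.

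\textbf{The missing idea.} You need a gadget for two generic weight-$m$ vectors of equal syndrome, not only for two that share $m-1$ bits. The paper argues inductively that each new pattern lowers the rank of the surviving checks by exactly one. This reduces to showing $w_1+w_2\in\mathrm{span}(K_{P+1})$ for disjoint weight-$m$ checks $w_1,w_2$ that survive the first $P$ patterns and are both violated by pattern $P+1$.

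The paper writes $w_1+w_2$ as a sum of $2m$ weight-$m$ checks laid out on $\mathrm{K}_{m,m}$ minus a perfect matching:
\begin{itemize}
\item the vertices are the bits of $w_1$ and $w_2$;
\item the $m(m-1)$ edges are auxiliary bits drawn from $\mathcal V$.
\end{itemize}
Survival of these $2m$ checks imposes only $2m-1$ independent constraints. The expected number of such nice matchings is therefore about $(\rho n)^{m(m-1)}2^{-(2m-1)P}$, which is polynomially large exactly when $P<(\frac{m(m-1)}{2m-1}-\epsilon)\log_2 n$. This is where the constant comes from. Suen's inequality, together with the overlap bound of Lemma~\ref{lem:s_bound}, then gives existence with stretched-exponentially small failure probability, which survives a union bound over all pairs and all training steps.

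Your $e_i+e_j$ move is the degenerate case of this gadget in which $w_1$ and $w_2$ share $m-1$ bits. Without a general version of it, your argument cannot get past $P\approx\log_2 n$.
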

\begin{proof}
    The core strategy of this proof is to argue that when \eqref{eq:smallP_m>2} is obeyed, each new observed codeword increases the rank only by 1.  If we can establish this fact, \eqref{eq:hatk_m>2} follows immediately from the fact that after being shown the first pattern $\xi_i^1$, this is the unique ground state of $\widehat{E}$ if $m$ is odd, while if $m$ is even, $\xi_i^1$ and $-\xi_i^1$ are both ground states.


  To do this, we proceed as follows.  Define \begin{equation}
      \alpha := \frac{P}{\log_2n} < \frac{m(m-1)}{2m-1}-\epsilon . \label{eq:alpha_defined}
  \end{equation}
  Lemma \ref{lem:linindcheck} shows that the probability that any spurious check has survived is given by \begin{equation}
      q := \mathbb{P}[\widehat{T}_I \ne 0 \text{ if } I\notin \mathrm{im}(H^{\mathsf{T}})] = 2n^{-\alpha},
  \end{equation}
  which is quite small.  At the same time, there are a huge number of spurious checks: $\sim n^m$, which naively already wins.   However, we need to be careful: the spurious checks can themselves be highly redundant.   
  
  We will therefore establish \eqref{eq:hatk_m>2} as follows.  First, define $K_P$ to be the set of learned checks (modulo gauging by the first seen pattern) after $P$ patterns: \begin{equation}
      K_P := \lbrace w\in \mathbb{F}_2^n : |w|=m\text{ and } w\cdot (b^\mu- b^1) = 0 \text{ for } 2\le \mu \le P\rbrace.
  \end{equation}
  Without loss of generality for what follows, we set $b^1_i=0$ to simplify notation.  
  We will show that for suitably small $\alpha$, $\dim(\mathrm{span}(K_{P+1})) - \dim(\mathrm{span}(K_{P}))=1$ with high probability.  We do this by showing that for any two $w_{1,2}\in K_{P+1}\setminus K_P$ acting on disjoint bits, $w_1+w_2\in K_{P+1}$ (w.h.p.) because it can be represented as a sum over exactly $2m$ checks that remain in $K_{P+1}$.   This latter fact will come from establishing the existence of a nice matching, which we define as follows: 

  \begin{defn}[Nice matching]\label{def:nice}
Define the graph $G_{\text{nice}}$ to be the connected subgraph of the bipartite complete graph $\mathrm{K}_{m,m}$ minus a perfect matching: see Figure \ref{fig:gnice}.   A nice matching on $K_{P}$ is a collection $M$ of $m^2+m$ bits: $(v_1,\ldots, v_{2m}; u_1,\ldots,u_{m^2-m})$ such that there exists a sequential assignment of $v_1,\ldots v_{2m}$ to the $2m$ vertices of $G_{\text{nice}}$, and $u_1,\ldots, u_{m^2-m}$ to the $m^2-m$ edges of $G_{\text{nice}}$ such that $w_1=\lbrace v_1,\ldots, v_m\rbrace$, $w_2=\lbrace v_{m+1},\ldots, v_{2m}\rbrace$, and  $K_P$ contains every candidate check made out of a vertex along with its incident edges:  \begin{equation}
         \lbrace v\rbrace \cup \bigcup_{e\in \partial v} \lbrace u_e\rbrace  \in K_P .
      \end{equation}
      We also write $\mathcal{U}(M) := \lbrace u_1,\ldots, u_{m^2-m}$, and $\mathcal{A}(M)$ as the vector space spanned by the weight-$(m-1)$ checks in $M$ with the bits $v_i$ removed.
  \end{defn}

\begin{figure}[t]
    \centering
    \includegraphics[width=0.4\linewidth]{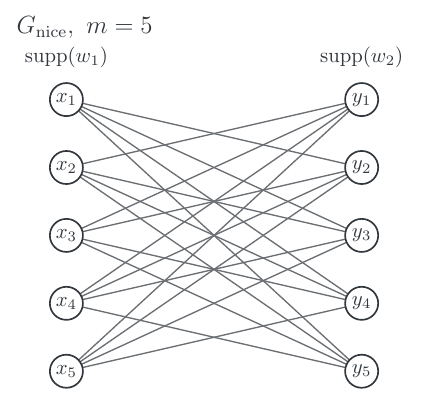}
    \caption{Illustration of $G_{\mathrm{nice}}$ in the case $m=5$.}
    \label{fig:gnice}
\end{figure}
  
\noindent We will restrict our study to candidate matchings where $u_1,\ldots, u_{m(m-1)} \in \mathcal{V}$, where $\mathcal{V} \subset [n]$ is any subset of vertices chosen to ensure the following property:
\begin{lem}\label{lem:independent-ei}
    Suppose that $k\ge Rn$ for some O(1) constant $R$, and $n$ is sufficiently large.  For any $W\subset [n]$ with $|W|=2m$, and O(1) constant $0<\rho<R$, there exists a set $\mathcal{V}\subset [n]$ with $|\mathcal{V}| = \rho n$, such that the following holds.  Let $D := \mathrm{im}(H^{\mathsf{T}})\oplus \mathrm{span}(e_i)_{i\in W}$; here $e_i$ denote the unit vectors in $\mathbb{F}_2^n$.   For every coset $[e_i] \in \mathbb{F}_2^n/D$, we have that $\lbrace [e_i]\rbrace_{i\in \mathcal{V}}$ are all linearly independent vectors.   Therefore, when sampling random codewords $b^\mu$, the random variables $b^\mu_i$ are i.i.d. Bernoulli random variables with $\mathbb{P}(b^\mu_i=1)= \frac{1}{2}$.
\end{lem}
\begin{proof}
Because we have $\mathrm{dim}(\mathrm{im}(\mathsf{H}^{\mathsf{T}}) = (1-R)n$, we know that $\mathrm{dim}(\mathbb{F}_2^n/D) \ge Rn-2m$, so any $\rho < R$ would suffice, for large enough $n$!  Since the cosets $\lbrace [e_i]\rbrace_{i\in[n]}$ span the entire space, there must be $\rho n$ of them which are linearly independent.  By Proposition \ref{prop:linindcheck}, the probability for each $\mathbb{P}(b^\mu_i=1)= \frac{1}{2}$ if $i\in\mathcal{V}$.   Moreover, even conditioned on arbitrary values of $b^\mu_j$ for $j\in W$, $b^\mu_i$ must be independent for all subsets of $i$;  if this was not true, then there would be a linear combination of $b^\mu_i$ which violates Proposition \ref{prop:linindcheck} by always taking the same value (0).   Note that the TDenseAM does not need to learn $\mathcal{V}$ -- it's role is constructive for the proof of Theorem \ref{thm:a=1learn}.
\end{proof}

 In what follows, we will condition on $w_{1,2}\cdot b^\mu = 0$ for $2\le \mu \le P$ and $w_{1,2}\cdot b^{P+1}=1$. Lemma \ref{lem:independent-ei} shows us that all $b^\mu_i$ are independent variables.  The strategy is now to show that there are typically many nice matchings in $K_{P+1}$ involving the two checks $w_{1,2}$, which would immediately imply that $w_1+w_2\in \mathrm{span}(K_{P+1})$.  To do this, we introduce the random variable \begin{equation}
          X_M := \mathbb{I}[M\text{ is a nice matching on $K_{P+1}$}].
      \end{equation}
      We further introduce independent Bernoulli random variables $\theta_M$, with $\mathbb{P}[\theta_M = 1] = \nu $ (we will choose the value of $\nu$ later).   It would suffice to argue that the following random variable is non-zero with high probability: \begin{equation}
          \mathcal{M} := \sum_M \theta_MX_M.
      \end{equation}
      We do this by invoking the following improvement over Hoeffding's inequality, discussed in \cite{suen,janson}: 
      \begin{lem}\label{lem:suen}
          Let $\lbrace X_a\rbrace_{a=1}^N$ be Bernoulli random variables:  $X_a\in\lbrace 0,1\rbrace$.  Define a graph on the $a$, with $a$ connected to $b$ by an edge ($a\in \partial b$) as infrequently as possible, while maintaining the following property:  if disjoint index sets $A$ and $B$ share no edges in common, then $\lbrace X_a\rbrace_{a\in A}$ and $\lbrace X_b\rbrace_{b\in B}$ are independent random variables.  Then \begin{equation}
              \mathbb{P}\left[\sum_{a=1}^N X_a = 0 \right] \le \exp\left(-\lambda + \Delta \mathrm{e}^{2\delta}\right)
          \end{equation}
          where \begin{subequations}
              \begin{align}
                  \lambda &= \sum_{a=1}^N \mathbb{E}[X_a], \\
                  \Delta &= \sum_{a=1}^N\sum_{b\in\partial a}\mathbb{E}[X_a X_b], \\
                  \delta &= \max_a \sum_{b\in\partial a}\mathbb{E}[X_b]. \label{eq:suen_delta}
              \end{align}
          \end{subequations}
      \end{lem}
      
      \noindent The idea is to apply this lemma to bound $\mathbb{P}[\mathcal{M}=0]$.   Clearly if $\mathcal{M}>0$, there exists a nice matching of checks in $K_{P+1}$, verifying that $w_1+w_2\in \mathrm{span}(K_{P+1})$.  We need to choose an incidence relation compatible with the requirements of Lemma \ref{lem:suen}.  A useful way to do this is to say that $M_1 \in \partial M_2$ if and only if $M_1 \ne M_2$ and $M_1$ and $M_2$ share at least one edge in common.   
      
      First, we evaluate \begin{equation}
          \lambda := \mathbb{E}[\mathcal{M}] = \frac{\lfloor \rho n\rfloor !}{(\lfloor \rho n\rfloor - m^2+m)!}\nu \mathbb{E}[X_M]
      \end{equation}
      where the first combinatorial factor represents the number of candidate nice matchings: note that the order in which we assign the coordinates does matter, which is why there is not an additional $1/(m^2-m)!$.  Lemma \ref{lem:independent-ei} implies that in the coset space $\mathbb{F}_2^n/D$, the rank of the $2m$ checks in the nice matching is $2m-1$ -- the only dependent combination is the sum of all checks (which equals $w_1+w_2$).  Using Lemma \ref{lem:linindcheck}, we then have \begin{equation}
          \mathbb{E}[X_M=1] = 2^{-P(2m-1)} = n^{-(2m-1)\alpha}. \label{eq:EXM}
      \end{equation}
       Therefore, using \eqref{eq:alpha_defined}, \begin{equation}
          \lambda \ge \nu (\rho n)^{m^2-m} n^{-(2m-1)\alpha} = \nu \rho^{m^2-m} n^{\epsilon}. \label{eq:lambda_inequality}
      \end{equation}
      On average, there are many candidate perfect matchings.  
      
      We now need to evaluate $\Delta$ and $\delta$ to show that these perfect matchings are largely independent, so the probability of finding one is extremely high. The key observation is as follows: \begin{lem}\label{lem:s_bound}
          Let $M_1$ and $M_2$ be two nice matchings ($M_1 \in \partial M_2$).  If\footnote{Recall the definition for $\mathcal{U}(M)$ at the end of Definition \ref{def:nice}.} $|\mathcal{U}(M_1)\cap \mathcal{U}(M_2)|=s$, then
          \begin{equation}
              \dim( \mathrm{row}(\mathcal{A}(M_1)) \cap \mathrm{row}(\mathcal{A}(M_2)) =: r \le \frac{2m-1}{m(m-1)}s =: a_m s.\label{eq:s_bound}
          \end{equation}   
          In other words, any two matchings with many common checks must also share a large number of bits in common.  As a consequence, \begin{equation} \label{eq:EXM1XM2}
              \mathbb{E}[X_{M_1}X_{M_2}] \le 2^{-P(2(2m-1)-r)} = \mathbb{E}[X_{M_1}]\mathbb{E}[X_{M_2}]2^{Pr}.
          \end{equation}
      \end{lem}
\begin{proof}
    Recall that a common vector between $\mathcal{A}(M_1)$ and $\mathcal{A}(M_2)$ is shared only between the coordinates in $\mathcal{U}(M_1)\cap \mathcal{U}(M_2) := \mathcal{U}_*$.  Delete the edges in $\mathcal{U}_*$ from the graphs $M_{1,2}$ (isomorphic to $G_{\mathrm{nice}}$).   The common vector must necessarily have disconnected both graphs $\mathcal{U}(M_{1,2})\setminus \mathcal{U}_*$ into $c>1$ connected components: the only way to find a trivial relation involving an unremoved edge is to sum over both checks at its endpoints: all checks on a connected component must be included with equal weight, and the sum over all checks is already a trivial constraint by construction.   We deduce that \begin{equation}
        r \le c-1.
    \end{equation}
    Now suppose that we choose a connected component of $A \subset G_{\mathrm{nice}}$ that contains $n_1$ left vertices and $n_2$ right vertices, as shown in Figure \ref{fig:gnice2}.  Without loss of generality we may take $n_1+n_2 :=n_{\mathrm{s}} \le m$.  The number of edges that must have been removed is at least \begin{equation} \label{eq:Gnice-figure}
        |\lbrace e=(v_1,v_2) : v_1\in A, v_2\notin A\rbrace| \ge (n_1+n_2)(m-1)-n_1n_2 \ge n_{\mathrm{s}}(m-1) - \frac{n_{\mathrm{s}}^2}{2} \ge \frac{m-1}{2}.
    \end{equation}
    The last inequality can be checked by brute force since $n_{\mathrm{s}}\in \lbrace 1,\ldots ,m\rbrace$.

    \begin{figure}[t]
        \centering
        \includegraphics[width=0.3\linewidth]{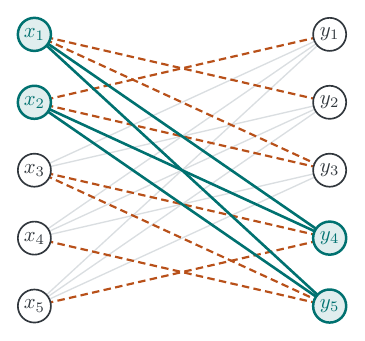}
        \caption{An illustration of a connected component $A\subset G_{\mathrm{nice}}$; the subset $A$ is highlighted in turquoise, and this example has $n_1=n_2=2$ along with $m=5$.  The red edges are the deleted edges which must have been removed in \eqref{eq:Gnice-figure}.}
        \label{fig:gnice2}
    \end{figure}

    Since $c \ge 2$ if $r>0$, we apply this cut bound to each component of $\mathcal{U}(M_1)$ post-deletion and find \begin{equation}
        s \ge c\frac{m-1}{2} \ge (m-1) \frac{m}{2m-1} (c-1),
    \end{equation}
    where the second inequality uses that $c\le 2m$.  This implies \eqref{eq:s_bound}.   \eqref{eq:EXM1XM2} follows the fact that the rank of checks in $M_1$ and $M_2$ is $2(2m-1)-r$ and from Lemma \ref{lem:linindcheck} and \eqref{eq:EXM}.
   \end{proof} 

   Let us then count the number of neighboring pairs in the adjacency graph on matchings.  Clearly, for $s>0$,  \begin{equation} \label{eq:M1M2count}
       |\lbrace (M_1,M_2) : |\mathcal{U}(M_1)\cap \mathcal{U}(M_2)|=s\rbrace| \le (m^2-m)^{2s} (\rho n)^{2m(m-1)-s},
   \end{equation}
   where the first factor counts the number of candidate identifications of two edges in the matching with the same bit, while the second factor bounds  the number of independent bits to choose between the matchings.   
   
   Now we can bound $\Delta$ as follows.  Fix $M_1$ (all matchings are the same on average), and sum over all potentially correlated $M_2$: \begin{equation}
       \Delta \le \frac{\lfloor \rho n\rfloor !}{(\lfloor \rho n\rfloor - m^2+m)!}\nu^2 \sum_{M_2 \in \partial M_1} \mathbb{E}[X_{M_1}X_{M_2}].
   \end{equation}
   This sum can be broken up by rank $r$ as follows: using \eqref{eq:M1M2count} for the case $r=0$, and using Lemma \ref{lem:s_bound} for the case $r>0$, we have \begin{align}
       \Delta &\le (\rho n)^{2m(m-1)} \nu^2 \mathbb{E}[X_{M_1}]^2\left(\sum_{s=1}^{m^2-m}\left(\frac{m^2(m-1)^2}{\rho n}\right)^s  + \sum_{r=1}^{2m-1}\sum_{s=\lceil a_m r\rceil}^{m^2-m}2^{Pr } \left(\frac{m^2(m-1)^2}{\rho n}\right)^s\right) \notag \\
       &\le C \lambda^2  n^{-\epsilon}.
   \end{align}
for sufficiently small $\epsilon$, large $n$, and a suitable constant $C$ whose exact value is unimportant.  The dominant potential contribution comes from the $r=1$ term with the smallest value of $s$.

Similarly it is easy to bound $\delta$:  a fixed matching has at most $[m(m-1)]^2(\rho n)^{m(m-1)-1}$ neighbors, so \begin{equation}
    \delta =  m^2(m-1)^2(\rho n)^{m(m-1)-1} n^{-(2m-1)\alpha} = m^2(m-1)^2 \frac{\lambda}{\rho n}.
\end{equation}

      Now let us put everything together.  We want to ensure that $2\delta < 1$ and, additionally, $\lambda > 2 \Delta \mathrm{e}^{2\delta}$, or more simply, $\lambda > 6\Delta$. This happens so long as \begin{equation}
         \lambda > 6C \lambda^2 n^{-\epsilon}. \label{eq:lambda-last}
      \end{equation}
      From \eqref{eq:lambda_inequality}, we see that \eqref{eq:lambda-last} is satisfied for large enough $n$ if we take \begin{equation}
          \nu = n^{-\epsilon/2}, 
      \end{equation}
        Lemma \ref{lem:suen} then shows that almost surely as $n\rightarrow \infty$, any two spurious and non-overlapping checks $w_{1,2}\in K_P\setminus K_{P+1}$ obey $w_1+w_2 \in \mathrm{span}(K_{P+1})$.   Indeed,  the probability of a fixed pair $w_1+w_2\notin \mathrm{span}(K_{P+1})$ falls as a stretched exponential in $n$, so the probability of this event happening for any pair still vanishes at the same rate.  By the union bound, the probability we have ever failed this test at any point is bounded by \begin{equation}
          \mathbb{P}[\text{removed more checks}] \le   n^m P \exp\left(-\lambda +\Delta \mathrm{e}^{2\delta}\right) < \exp \left(-\mathrm{\Theta}(n^{\epsilon/2})\right)
        \end{equation}
        for sufficiently large $n$.  Since we have seen that almost surely as $n\rightarrow \infty$ we never remove more than 1 check at a time at this small number of patterns, we establish \eqref{eq:hatk_m>2}.
  \end{proof}
  
\subsection{Dynamics of learning: $a=0$}
Next, we consider the opposite limit of $a=0$.  In this limit, we may expect that following \cite{krotov2016}, $\widehat{E}$ generically serves as a simple memory at small $P$ -- even though the dense associative memory is trained on random codewords, it will not uncover the rules that generated the data for some time.   This is made precise in the following discussions.

The first result is to establish an absence of stability of typical unseen perturbations until $P\sim n^{m-1}$ is sufficiently large.  More concretely:

\begin{thm}\label{thm:a=0learn-smallA}
    Consider training the $a=0$ dense associative memory on a random $(\Delta_{\mathrm{B}},\Delta_{\mathrm{C}})$ LDPC code with weights $\Delta_{\mathrm{C}} = m>\Delta_{\mathrm{B}}>2$.  Consider comparing the learned energy $\widehat{E}$ at a true codeword $c \in \ker(H)$ to the learned energy at $c+a$, where $|a|=\beta n$.   If \begin{equation}
        P = A n^{m-1}, \label{eq:Pnalpha}
    \end{equation}
    then almost surely as $n\rightarrow \infty$, \begin{equation}
        \mathbb{P}[ \widehat{E}(c) > \widehat{E}(c+a) \text{ for some }a] \rightarrow 1
    \end{equation}
    if $A < A_{\mathrm{min}}(\beta)$, for $n$-independent function $A_{\mathrm{min}}$, with $A(0)=0$. 
\end{thm}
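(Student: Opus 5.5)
The plan is to exhibit, with high probability, an explicit perturbation $a$ of weight $\beta n$ that lowers the learned energy. The natural candidate comes from the memorization picture of the main text: move the unseen codeword $c$ toward a seen pattern. Concretely, fix a training pattern $\xi^1$ (corresponding to codeword $b^1$) and let $a$ be a subset of size $\beta n$ of the bits where $c$ and $b^1$ disagree. By Corollary~\ref{cor:random_codeword_50} applied to the random codeword $c+b^1$, there are $\approx n/2$ such bits, so this is possible for $\beta<1/2$. We then split $\widehat{E}=E+E_{\text{junk}}$ as in \eqref{eq:EEjunk} and bound the two contributions to $\widehat{E}(c+a)-\widehat{E}(c)$ separately.

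\emph{Step 1 (true checks).} Every weight-$m$ term with $I\in\mathrm{im}(H^{\mathsf{T}})$ has $\widehat{T}_I=1$, and by Corollary~\ref{cor:usefulfora=0} and Corollary~\ref{cor:shortdualenumerator} the number of such $I$ of weight $\le m$ is $\mathcal{N}_m+\mathrm{o}(n^{m/2})$. The terms with weight $m-2,\ldots$ coming from repeated indices are handled the same way. Since $c$ satisfies every check, $E(c+a)-E(c)\le 2\cdot(\#\text{ ordered index tuples of true checks})$. This gives an upper bound $C_1(m)\,n$ that does not depend on $P$; more precisely it is $\le 2m!\,\mathcal{N}_m(1+\mathrm{o}(1))$.

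\emph{Step 2 (junk part).} Rather than bounding $E_{\text{junk}}$ term by term, we return to the form \eqref{eq:DAM}. Write $\widehat{E}(\sigma)=-\frac1P\sum_\mu(\xi^\mu\cdot\sigma)^m$. With $\sigma=c+a$ aligned toward $\xi^1$ on $\beta n$ extra bits, the $\mu=1$ overlap changes from $\approx \mathrm{O}(\sqrt n)$ to $\approx 2\beta n$. This contributes an energy decrease of at least $(2\beta n)^m/P-\mathrm{O}(n^{m/2}/P)=(2\beta)^m n/A-\mathrm{o}(n)$ when $P=An^{m-1}$.

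For $\mu\ge2$, the overlaps $\xi^\mu\cdot c$ and $\xi^\mu\cdot(c+a)$ are, conditionally on $c,b^1$, sums of nearly independent signs. By Proposition~\ref{prop:xiI}, a random codeword restricted to any $\mathrm{O}(1)$ bits is uniform unless a short check lies there. One way to make this rigorous is to compute the mean and variance of $\sum_{\mu\ge2}[(\xi^\mu\cdot c)^m-(\xi^\mu\cdot(c+a))^m]$ via Proposition~\ref{prop:xiI} and Corollary~\ref{cor:shortdualenumerator}. Equivalently, one expands in $T_I$ and uses the first two moments \eqref{eq:zeromean}--\eqref{eq:basic2} together with the pairwise independence of Lemma~\ref{lem:linindcheck}-type arguments for the $a=0$ setting. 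The goal is to show that the mean change of the $\mu\ge2$ part equals the true-check contribution already counted plus $\mathrm{o}(n)$, and that the fluctuation is $\mathrm{O}(\sqrt{P n^{m}}/P)=\mathrm{O}(n^{1/2}/\sqrt{A})=\mathrm{o}(n)$. Chebyshev then gives concentration.

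\emph{Step 3 (conclusion).} Combining the steps, $\widehat{E}(c+a)-\widehat{E}(c)\le C_1 n-(2\beta)^m n/A+\mathrm{o}(n)$ with high probability. This is negative whenever $A<A_{\min}(\beta):=(2\beta)^m/C_1$, and $A_{\min}(\beta)\to0$ as $\beta\to0$, as claimed.

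I expect the main obstacle to be Step 2. The difficulty is controlling the cross terms from the other $P-1\sim An^{m-1}$ patterns: each contributes only $\mathrm{O}(n^{m/2})/P$, but there are many of them, and their mean picks up precisely the true-check energy. That energy must be cleanly separated from the junk part without double counting. I would first try the variance computation directly in the $T_I$ basis, where independence across distinct linearly independent $I$ is available.
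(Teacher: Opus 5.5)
Your proposal is correct and is essentially the paper's argument. The paper also takes $c=0$, flips a $\beta n$-subset $B$ of the bits where the seen pattern $\xi^1$ disagrees with $c$, and collects the deterministic $(2\beta n)^m$ gain from $\mu=1$; it then controls the $\mu\ge2$ terms by their mean (the true-code energy change) and an $\mathrm{O}(n^m)$ second moment per pattern, obtained from Proposition~\ref{prop:xiI} and Corollary~\ref{cor:shortdualenumerator} in Lemma~\ref{lem:Ystats}.

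The only differences are these: you bound the true-check cost by the worst case $2\,m!\,\mathcal{N}_m$ instead of computing its average $(1-(1-2\beta)^m-(2\beta)^m)\,m!\,\mathcal{N}_m$, which gives a smaller but still valid $A_{\mathrm{min}}(\beta)$ and needs no averaging over $\xi^1$ and $B$; and you use Chebyshev where the paper invokes a central limit theorem, which is enough here because the fluctuation is $\mathrm{O}(\sqrt{n/A})=\mathrm{o}(n)$.
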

\begin{proof}
    Without loss of generality, we may consider the stability of the codeword $c=0$ (call it $\sigma^1$ when converting to spin language), after training on $P$ random codewords $\xi_i^\mu$, since the training ensemble is translation invariant on $\ker(H)$.   It is most useful to express the dense associative memory energy function as \begin{equation}
        \widehat{E}(\sigma) = -\frac{1}{P}\sum_{\mu=1}^P \left(\sigma \cdot \xi^\mu \right)^m
    \end{equation}
    when $a=0$. We will analyze larger spin flips biased towards a single observed pattern and show that typical such choices lower the energy of $c=0$.

    Towards this end, let us consider the following setup.  Let \begin{equation}
        \Xi_1 := \lbrace i\in [n] : \xi^1_i = -1\rbrace 
    \end{equation}
   denote the sites where the first (randomly drawn) codeword differs from $\sigma^1$.  Almost surely as $n\rightarrow \infty$, Corollary \ref{cor:random_codeword_50} shows that \begin{equation}
        |\Xi_1| = (1+\mathrm{o}(1)) \frac{n}{2}. \label{eq:Xi1}
    \end{equation}    
    Let $B\subset \Xi_1$ and define \begin{equation}
        \sigma^B_i := 1-2\mathbb{I}(i\in B).
    \end{equation}
    We then consider the random variable \begin{equation}
        \mathcal{E} := \widehat{E}(\sigma_i^1) - \widehat{E}(\sigma^B_i).
    \end{equation}
    The goal will be to show that for large enough $B$, $\mathcal{E}_B>0$ almost surely when $P$ is sufficiently small.  In order to show that fact, it is useful to define \begin{equation}
        Y^\mu := \left(\sigma^B \cdot \xi^\mu \right)^m - \left(\sigma^1 \cdot \xi^\mu \right)^m \label{eq:defYmu}
    \end{equation}
    since \begin{equation}
        \mathcal{E} = \sum_{\mu=1}^P Y^\mu.
    \end{equation}
    It will also be useful to define \begin{equation}
        \beta := \frac{|B|}{n}.
    \end{equation}

    Let us first consider the case $\mu=1$.  \eqref{eq:Xi1} implies that  \begin{equation}
        Y^1 = (n-2|\Xi_1| + 2|B|)^m - (n-2|\Xi_1|)^m,
    \end{equation}
    deterministically. Assume that $\beta>0$ is a fixed O(1) constant as $n\rightarrow \infty$.  Using \eqref{eq:Xi1}, we see that \begin{equation}
        Y^1 = (1+\mathrm{o}(1)) (2\beta n)^m  \label{eq:Y1}
    \end{equation}  In contrast, for $\mu=2,\ldots, P$, $Y^\mu$ is a random variable.  Some useful moments of this random variable are captured by the following lemma: 

    \begin{lem}\label{lem:Ystats}
        Letting $E$ denote the code energy as in \eqref{eq:EEjunk}, for any $\beta>0$: \begin{subequations}
            \begin{align}
                \mathbb{E}[Y^\mu] &= E(\sigma^1) - E(\sigma^B), \label{eq:EYmu} \\
                \mathbb{E}\left[(Y^\mu - \mathbb{E}[Y^\mu])^2\right] &<   (1+\mathrm{o}(1))\cdot 2  \frac{(2m)!}{2^mm!} \left(1 - (1-2\beta)^m \right)n^m  \label{eq:EYmu2}.
            \end{align}
        \end{subequations}
    \end{lem}
    \begin{proof}
        Here it is useful to expand out \begin{equation}
            (\sigma \cdot \xi^\mu)^m= \sum_{i_1, i_2,\ldots, i_m}\xi_{i_1}^\mu\sigma_{i_1}\cdots \xi_{i_m}^\mu\sigma_{i_m}.
        \end{equation}
        Using Proposition \ref{prop:xiI}, we see that when averaging over codewords $\xi^\mu$, the only terms that survive are those where the product of $\xi$ form a valid parity-check. From Corollary \ref{cor:usefulfora=0}, we know that every valid parity-check will involve $m$ distinct bits. Using the definition of $Y^\mu$ in \eqref{eq:defYmu} then establishes \eqref{eq:EYmu}.  

        Now we turn to the second moment.  It is useful to write it as follows.   Let $J_{1,2}\subset [n]$ denote ordered $m$-element lists.  First average over random patterns $\xi$:
        \begin{equation} \label{eq:ExiYmu2}
            \mathbb{E}\left[\left(Y^\mu\right)^2\right] = \sum_{J_1,J_2: |J_{1,2}| = m} \mathbb{E}_\xi\left[\xi_{J_1}^\mu \xi_{J_2}^\mu \right] \left( \sigma^B_{J_1}-\sigma^1_{J_1}\right)\left(\sigma^B_{J_2}-\sigma^1_{J_2}\right).
        \end{equation}
        Firstly, we can again use Proposition \ref{prop:xiI} to conclude \begin{equation}
            \mathbb{E}\left[\xi_{J_1}^\mu \xi_{J_2}^\mu\right] = \mathbb{I}[J_1 \triangle J_2 \in \mathrm{im}(H^{\mathsf{T}})].
        \end{equation}
   Notice the following simple bound:  since \begin{equation}
            \sigma^B_{J_1}-\sigma^1_{J_1} = -2\mathbb{I}\left( |J_1 \cap B| \text{ odd} \right)
        \end{equation}
        is either 0 or $-2$, we have \begin{equation}
            \mathbb{E}\left[\left(Y^\mu\right)^2\right] < 4\sum_{J_1,J_2: |J_{1,2}| = m} \mathbb{I}(J_1\triangle J_2\in \mathrm{im}(H^{\mathsf{T}})) \mathbb{I}(|J_1 \cap B| \text{odd})\mathbb{I}(|J_2 \cap B| \text{odd})
        \end{equation}
         By Corollary \ref{cor:shortdualenumerator}, this sum is dominated by terms where $J_1$ and $J_2$ contain every bit an even number of times:  if they contain some collection of bits $w\in \mathbb{F}_2^n$, we see that $w\in \mathrm{im}(H^{\mathsf{T}})$ is required for the average to be non-vanishing by Proposition \ref{prop:xiI};  the contribution over all such terms is bounded by $n^{m-|w|}\mathcal{N}_{|w|} = \mathrm{o}(n^m)$ if $|w| > 0$.  In contrast the number of ordered lists $J_1$ and $J_2$ that together share an even number of each element is bounded by $2^{-m}(2m-1)!! \times  n^m $ -- the first combinatorial factor counts all the ways that $2m$ indices $i_1\cdots i_{2m}$ can be paired up; $n^m$ is the number of bits that each pair can jointly be assigned.
         Notice that in this case, we also have 
         \begin{equation}
            \mathbb{P}\left(|J_1 \cap B| \text{ odd}\right) = \mathbb{P}\left(|J_2 \cap B| \text{ odd}\right) = \sum_{j\text{ odd}}\left(\begin{array}{c} m \\ j \end{array}\right)\beta^j(1-\beta)^{m-j} = \frac{1-(1-2\beta)^m}{2}.
        \end{equation}
        Therefore using that this contribution dominates $\mathbb{E}[(Y^\mu)^2]$, we find \eqref{eq:EYmu2}.
    \end{proof}

The crucial thing we learn from Lemma \ref{lem:Ystats} is that $Y^\mu$ are not heavy-tailed random variables; we can apply the central limit theorem to $Y^2+\cdots +Y^P$.  We conclude that as $n\rightarrow \infty$ with $P = An^{m-1}$ held fixed, $\mathcal{E}$ is well approximated by a Gaussian random variable with mean and variance \begin{subequations} \label{eq:mathcalEgaussian}
    \begin{align}
        \mathbb{E}[\mathcal{E}] &\rightarrow  P\mathbb{E}\left[E(\sigma^1)- E(\sigma^B) \right] + (2\beta n)^m , \label{eq:EofE} \\
        \mathbb{E}\left[\left(\mathcal{E}-\mathbb{E}[\mathcal{E}]\right)^2\right] &= \frac{(2m)!}{2^{m-1}m!} \left(1 - (1-2\beta)^m \right) n^m P.
    \end{align}
\end{subequations}
Since $\xi^1$ is also random and has half $\pm 1$ almost surely, and by Corollary \ref{cor:evendistribution} the sampled logical $\xi^1$ is equally likely to have all allowed overlaps with a given check, we have:  \begin{align}
        \mathbb{E}\left[E(\sigma^1)- E(\sigma^B) \right] &= -2m! \mathcal{N}_m \cdot \frac{1}{2^{m-1}}\sum_{j\text{ even}}\left(\begin{array}{c} m \\ j \end{array}\right) \sum_{k \text{ odd}} \left(\begin{array}{c} j \\ k \end{array}\right) (2\beta)^k(1-2\beta)^{j-k} \notag \\
        &= -\frac{m!\mathcal{N}_m}{2^{m-2}}\sum_{j\text{ even}}\left(\begin{array}{c} m \\ j \end{array}\right) \frac{1-(1-4\beta)^{2j}}{2} \notag \\
        &= \left(-1 + \frac{1}{2^{m-1}}\frac{(2-4\beta)^{m}+(4\beta)^{m}}{2} \right)\mathcal{N}_mm! \notag \\
        &= \left(-1 + (1-2\beta)^m+(2\beta)^m\right)\mathcal{N}_m m! =:y. \label{eq:mu_def}
    \end{align}
    The overall factor of $m! \mathcal{N}_m$ counts the number of checks -- each of which itself is included $m!$ times because all possible orderings of the bits are allowed in $E$.

    Let us now combine everything together: we find that with high probability a real codeword is higher energy than our random perturbation towards a seen codeword ($\mathcal{E}>0$) if, using  \eqref{eq:Nm}, \begin{equation} \label{eq:P_A_min}
        P < \frac{1+\mathrm{o}(1)}{\Delta_{\mathrm{B}} (m-1)!} \frac{(2\beta)^m}{1-(2\beta)^m-(1-2\beta)^m}n^{m-1} =: (1+\mathrm{o}(1)) \cdot A_{\mathrm{min}}(\beta)n^{m-1}.
    \end{equation}
    Indeed, the variance of $\mathcal{E}$ scales as $n^{m-1/2}$ when $P\sim n^{m-1}$, while $\mathbb{E}[\mathcal{E}]\sim n^m$, so it suffices to check that $\mathbb{E}[\mathcal{E}] n^{-m}>0$ as $n\rightarrow \infty$, which is what \eqref{eq:P_A_min} guarantees.
\end{proof}


\begin{thm}\label{thm:a=0learn-largeA}
    In the same setting as Theorem \ref{thm:a=0learn-smallA}, if $A>A_{\mathrm{max}}(\beta)$,  for any $0<\beta<\gamma$ below the confinement scale of the code, \begin{equation}
        \mathbb{P}[ \widehat{E}(c) < \widehat{E}(c+a) \text{ for every $a$ with } |a|=\beta n] \rightarrow 1. \label{eq:basin_guarantee}
    \end{equation}
    Therefore, for sufficiently large $A$, there must be a local minimum of $\widehat{E}$ closer to the true codeword than to any other.
\end{thm}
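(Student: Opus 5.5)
We work at the codeword $c=0$ (spin configuration $\sigma^1$) using the same translation invariance as in the proof of Theorem \ref{thm:a=0learn-smallA}. We then split $\widehat{E}=E+E_{\text{junk}}$ as in \eqref{eq:EEjunk}. For a fixed perturbation $a$ with $|a|=\beta n$, write
\begin{equation*}
\widehat{E}(\sigma^a)-\widehat{E}(\sigma^1) = \left[E(\sigma^a)-E(\sigma^1)\right] + \left[E_{\text{junk}}(\sigma^a)-E_{\text{junk}}(\sigma^1)\right].
\end{equation*}
The first bracket is deterministic, since every true check has $T_I=1$. Each check violated by $a$ contributes $2m!$, and by Corollary \ref{cor:linearconfinement} (with $\beta<\gamma$) at least $\Delta_{\mathrm{B}}(1-2\delta)\beta n$ checks are violated. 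So the code term is bounded below by $c_1\beta n$ uniformly in $a$, where $c_1=2m!\Delta_{\mathrm{B}}(1-2\delta)$.

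It remains to show that the junk term is uniformly $o(n)$, or at least smaller than $c_1\beta n$, over all $\binom{n}{\beta n}$ choices of $a$, once $P=An^{m-1}$ with $A$ large. The junk difference equals $-\sum_{I\notin \mathrm{im}(H^{\mathsf{T}})}T_I(\sigma^a_I-\sigma^1_I)$ and is linear in the $T_I$. Each $T_I$ is an average of $P$ i.i.d.\ mean-zero $\pm1$ terms $\xi^\mu_I$; they are mean zero by Proposition \ref{prop:xiI}. Hence the junk difference is $\frac{1}{P}\sum_\mu Z^\mu_a$, where
\begin{equation*}
Z^\mu_a = \left(\sigma^a\cdot\xi^\mu\right)^m-\left(\sigma^1\cdot\xi^\mu\right)^m - \left[E(\sigma^1)-E(\sigma^a)\right],
\end{equation*}
i.e.\ $Y^\mu$ minus its mean, as in Lemma \ref{lem:Ystats}. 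These are i.i.d.\ in $\mu$, with variance $O(n^m)$ by \eqref{eq:EYmu2}.

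The naive CLT gives fluctuations of order $\sqrt{n^m/P}=\sqrt{n/A}$, which is already $\ll n$ for a single $a$. However, we need a union bound over $e^{nh(\beta)}$ perturbations, so we require an exponential tail: $\mathbb{P}[|\sum_\mu Z^\mu_a|>tP]\le e^{-c\,t^2P/n^m}$ for $t=\frac{1}{2}c_1\beta n$. This yields failure probability $\exp[nh(\beta)-c'A\beta^2 n]$, which vanishes once $A>A_{\mathrm{max}}(\beta):=h(\beta)/(c'\beta^2)$, up to constants.

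The main obstacle is this concentration step. The variable $Z^\mu_a$ is a degree-$m$ polynomial in $\xi^\mu$, of typical size $n^{m/2}$ but with worst case $n^m$, so Hoeffding applied directly is too weak. I would try Bernstein's inequality with a truncation. Using Corollary \ref{cor:random_codeword_50}-type overlap bounds (made exponential via hypercontractivity for low-degree functions on the uniform distribution over the code, whose low-order correlations match i.i.d.\ bits by Lemma \ref{lem:independent-ei}), one can show that $|\sigma^a\cdot\xi^\mu|$ and $|\sigma^1\cdot\xi^\mu|$ are at most $\sqrt{n}\log n$ for all $\mu$ and all $a$ simultaneously. That fails only for overlaps of order $n$, which have probability $e^{-\Theta(n)}$ per $(\mu,a)$; this is still dangerous against the union bound. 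If so, I would handle large-overlap patterns separately: for those, $(\sigma^a\cdot\xi)^m$ is dominated by a single pattern, as in Theorem \ref{thm:a=0learn-smallA}, and its contribution $(2\beta n)^m/P\sim n/A$ can itself be beaten by $c_1\beta n$ for large $A$.

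Finally, for the local-minimum conclusion: $\widehat{E}$ is strictly lower at $c$ than on the whole Hamming sphere of radius $\beta n$. Hence the greedy descent of Section 2, started at $c$, cannot cross that sphere, so it terminates in a local minimum inside the ball. Since $\beta<\gamma\le d/n$, this minimum is closer to $c$ than to any other codeword.
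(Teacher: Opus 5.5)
You have the same skeleton as the paper: a deterministic code penalty $\gtrsim \Delta_{\mathrm{B}}(1-2\delta)\beta n$ from linear confinement, i.i.d.\ mean-zero junk fluctuations $\frac1P\sum_\mu Z^\mu_a$, and a union bound over $\binom{n}{\beta n}\approx \mathrm{e}^{nh(\beta)}$ perturbations. There is a genuine gap, and it is exactly the step you flag as the main obstacle. The route you sketch for it does not work.

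\textbf{The sub-Gaussian input is not justified.} You need exponential tails for $\xi_S=\sum_{i\in S}\xi_i$ under the uniform measure on $\ker(H)$, uniformly over all $S$ with $|S|=\beta n$. Hypercontractivity does not supply this, because it is a product-measure statement. The code measure is only $(m-1)$-wise independent: every weight-$m$ check has correlation exactly $1$. Lemma~\ref{lem:independent-ei} also does not help, since it gives independence only on one special set $\mathcal{V}$, not on an arbitrary $S$. Tails of size $\mathrm{e}^{-\Omega(n)}$ need moments of order $q\sim n$, and those moments see every dual codeword supported inside $S$. The paper supplies this as Lemma~\ref{lem:Lq-norm}, $\lVert\xi_S\rVert_q\le C\sqrt{q|S|}$. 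It is proved by expanding $\mathbb{E}[\mathrm{e}^{t\xi_S}]=(\cosh t)^{|S|}\sum_{\Lambda\subseteq S,\ \Lambda\in\mathrm{im}(H^{\mathsf{T}})}(\tanh t)^{|\Lambda|}$ and controlling the sum with three ingredients: a connected-cluster count, the minimum check weight $m>2$, and check soundness.

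\textbf{The truncation scheme fails as stated.} The simultaneous bound $|\sigma^a\cdot\xi^\mu|\le\sqrt n\log n$ for all $a,\mu$ is deterministically false. For every $\mu$, taking $a$ inside $\{i:\xi^\mu_i=+1\}$ gives $|\sigma^a\cdot\xi^\mu|\approx 2\beta n$. So every pattern is maximally aligned with some perturbation, and no control of individual overlaps survives a union bound over $a$. In addition, the per-$(\mu,a)$ failure probability at your threshold is $\mathrm{e}^{-\Theta(\log^2 n)}$, not $\mathrm{e}^{-\Theta(n)}$.

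For the same reason, the ``single dominant pattern'' fallback does not control an adversarial $a$ that is moderately aligned with many patterns at once. What is needed is, for each fixed $a$, a bound on the aggregate $\sum_\mu Z^\mu_a$ with an $\mathrm{e}^{-\Omega(n)}$ tail whose rate grows with $A$; only then can you take the union.

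\textbf{How the paper closes this step.} It uses $|Y|\le 2^m m|\xi_S|\bigl(|\xi_S|^{m-1}+|\xi_{S^{\mathrm{c}}}|^{m-1}\bigr)$ together with H\"older to get $\lVert Y\rVert_q\lesssim q^{m/2}n^{(m-1)/2}\sqrt{|S|}$. This means $Y$ is sub-Weibull with $\alpha=2/m$. It then applies the sub-Weibull sum bound (Theorem~\ref{thm:weibull}) with $t=\tau\beta n$, where $\tau\beta>h(\beta)$. The resulting fluctuation is of order $\sqrt{\beta}\,\bigl(\sqrt{A\tau\beta}+(\tau\beta)^{m/2}\bigr)n^m$. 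This loses to the $\propto A\beta n^m$ code penalty once $A$ is large.

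Your Gaussian-rate target is not itself the issue. Feeding these moment bounds and the deterministic bound $|Y|\le 2m\beta n^m$ into a Chernoff bound at $\theta\sim n^{1-m}$ does give a tail $\mathrm{e}^{-cA\beta n}$. But nothing in your sketch establishes that target.

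Your closing local-minimum argument is fine, with one correction: ``closer to $c$ than to any other codeword'' needs $\beta<\gamma/2$ rather than $\beta<\gamma$.
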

\begin{proof}
    This proof begins identically to that of Theorem \ref{thm:a=0learn-smallA}, but we will need to find more sophisticated bounds on the distribution of $Y^\mu$.  Towards this end, for any $S\subset [n]$, define \begin{equation}
        \zeta_S := \sum_{i\in S}\zeta_i.
    \end{equation} 
    Notice that \begin{equation}
        Y^\mu := \left(\xi^\mu_{S^{\mathrm{c}}} - \xi^\mu_S\right)^m - \left(\xi^\mu_{S^{\mathrm{c}}} + \xi^\mu_S\right)^m.
    \end{equation}
    Using elementary inequalities we find \begin{equation}
        \left|Y^\mu \right| \le 2m\left|\xi^\mu_S\right| \left(\left|\xi^\mu_S\right|+\left|\xi^\mu_{S^{\mathrm{c}}}\right| \right)^{m-1} \le 2^mm\left|\xi^\mu_S\right| \left(\left|\xi^\mu_S\right|^{m-1} +\left|\xi^\mu_{S^{\mathrm{c}}}\right|^{m-1} \right)
    \end{equation}
    Now let us consider the $\mathrm{L}^q$ norms on the uniform distribution on $\ker(H)$: (slightly abusing notation between $b$ vs. $\xi$) \begin{equation}
        \lVert A(\xi)\rVert_{q} := \left[ \frac{1}{2^k}\sum_{\xi \in \ker(H)} |A(\xi)|^q \right]^{1/q}.
    \end{equation}
    H\"older's inequality then gives \begin{align}\label{eq:Y-qnorm}
        \left\lVert Y \right\rVert_q &\le 2^mm \left\lVert \xi_S \right\rVert_{mq}\left( \left( \left\lVert \xi_S \right\rVert_{mq} \right)^{m-1} + \left( \left\lVert \xi_{S^{\mathrm{c}}} \right\rVert_{mq} \right)^{m-1}\right).
    \end{align}
\begin{lem}\label{lem:Lq-norm}
    The following inequality holds for any $q\ge 2$ and some $q,n$-independent $C>0$: \begin{equation}
        \lVert \xi_S\rVert_q \le C \sqrt{q|S|}. \label{eq:Lq-norm-xiS}
    \end{equation}
\end{lem}
\begin{proof}
    This is a useful generalization of Corollary \ref{cor:random_codeword_50} (which immediately follows from this result by taking $S=[n]$).  To prove this more general result, we notice the following useful identity, similar to those used in establishing Kramers-Wannier duality \cite{Hong:2024vlr}: \begin{align}
        \mathbb{E}\left[ \mathrm{e}^{t\xi_S} \right] &= \mathbb{E}\left[ \prod_{i\in S} (\cosh t + \xi_i \sinh t)  \right] = (\cosh t)^{|S|}\sum_{\Lambda \subset S : \Lambda \in \mathrm{im}(H^{\mathsf{T}})}(\tanh t)^{|\Lambda|} \notag \\
        &< (\cosh t)^{|S|} \prod_{i \in S}\left( 1 + \sum_{\text{connected }\Lambda \subset S :  \Lambda \in \mathrm{im}(H^{\mathsf{T}}), \; i\in \Lambda}(\tanh t)^{|\Lambda|} \right) \notag \\
        &<(\cosh t)^{|S|} \prod_{i \in S}\left( 1 + \sum_{s = m}^n(\mathrm{e}(m-1)\Delta_{\mathrm{B}}\tanh t)^{\min(m,\kappa s)} \right)^{|S|}  \notag \\
        &< \exp\left[ |S| \left( \log \cosh t + \frac{(\mathrm{e}(m-1)\Delta_{\mathrm{B}}\tanh t)^m}{1-\mathrm{e}(m-1)\Delta_{\mathrm{B}}\tanh t} \right) \right] =: \mathrm{e}^{c(t)|S|},
    \end{align}
    with $c(t)<t^2$ at sufficiently small $0<t<t_0<1$ (because $m>2$).  
    In the third line we used the combinatorial fact that there are at most $(\mathrm{e}\Delta_{\mathrm{B}}(\Delta_{\mathrm{C}}-1))^s$ connected (on the bit adjacency graph) clusters in an LDPC code \cite{Hong:2024vlr}, with $m=\Delta_{\mathrm{C}}$,  along with the fact that the smallest such cluster is $m$ (Corollary \ref{cor:usefulfora=0}) and check soundness (Theorem \ref{thm:checksoundness}).  We see that \begin{equation}
        \mathbb{E}\left[ \mathrm{e}^{t\xi_S}\right] \le \mathrm{e}^{|S| \min(t ,c(t))} \le \mathrm{e}^{|S| \cdot t^2/t_0}.
    \end{equation}
    Markov's inequality gives \begin{equation}
        \mathbb{P}\left[|\xi_S| > x \right ] \le \inf_t \mathrm{e}^{|S| \cdot t^2/t_0 - xt} = \mathrm{e}^{-x^2/(2t_0|S|)}.
    \end{equation}
    \eqref{eq:Lq-norm-xiS} follows from Gaussian-like scaling.  More concretely, \begin{align}
        \lVert \xi_S\rVert_q &= \int\limits_0^\infty \mathrm{d}x \left(-\partial_x \mathbb{P}[\xi_S>x]\right) x^q = q\int\limits_0^\infty \mathrm{d}x \; x^{q-1}\mathbb{P}(\xi_S\ge x) \le q\int\limits_0^\infty \mathrm{d}x \; x^{q-1} \mathrm{e}^{-x^2/(2t_0|S|)} \notag \\
        &\le q (C_0 q\cdot 2t_0|S|)^{q/2}
    \end{align}
    for an O(1) constant $C_0$.   Since $q^{1/q}\rightarrow 1$ as $q$ becomes large, there exists a constant $C$ for which \eqref{eq:Lq-norm-xiS} holds.
\end{proof}
Now the idea is to use Lemma \ref{lem:Lq-norm} to bound the tails of the probability distribution of $\mathcal{E}$.  A second instrumental ingredient in this bound is the following ``sub-Weibull" tail bound from \cite{Kuchibhotla_2022} (we are deliberately simplifying/weakening their results, while maintaining sufficiency for our needs): \begin{thm} \label{thm:weibull}
    Let $X_1,\ldots, X_P$ denote independent and identically distributed random variables with $\mathbb{E}[X_1]=0$ whose probability distribution is ``sub-Weibull": namely obeys, for some $0<\alpha<1$, \begin{equation}
        \mathbb{P}(|X_1| > x) \le 2 \exp\left[ - \left(\frac{x}{x_0}\right)^\alpha  \right]. \label{eq:weibull}
    \end{equation}
    Then there exists a function $\mathcal{F}(\alpha)$ which is positive and finite for $\alpha>0$ such that, for any $t>0$, \begin{equation}
        \mathbb{P}\left( \left| \sum_{i=1}^P X_i \right| > \mathcal{F}(\alpha)\left( t^{1/\alpha} + \sqrt{Pt}\right)x_0 \right) \le 2\mathrm{e}^{-t} \label{eq:weibull-bound}
    \end{equation}
\end{thm}
\noindent The strategy is to apply Theorem \ref{thm:weibull} as follows: write \begin{equation}
    \widetilde{Y}(\xi) := Y(\xi)- y 
\end{equation} where $y<0$ was defined in \eqref{eq:mu_def}.  Then $\mathbb{E}[\widetilde{Y}]=0$ (averaged over codewords), so we can apply Theorem \ref{thm:weibull}.  From the triangle inequality, \begin{equation}
    \lVert \widetilde{Y} \rVert_q \le \lVert Y\rVert_q + y \le 2\lVert Y\rVert_q.
\end{equation}
Next, combining Lemma \ref{lem:Lq-norm} with \eqref{eq:Y-qnorm} and the fact that $\max(|S|,|S^{\mathrm{c}}|)\le n$, we find  \begin{equation}
    \lVert \widetilde{Y} \rVert_q \le2 \lVert Y\rVert_q \le 2m(2C)^m q^{m/2} n^{(m-1)/2}\sqrt{|S|}=: C^\prime n^{(m-1)/2}\sqrt{|S|} \cdot  q^{m/2}. \label{eq:widetildeY-bound}
\end{equation}
Then we have the following useful lemma \cite{Vladimirova_2020}: 
\begin{lem}\label{lem:weibull-2m}
    If $\lVert X_1\rVert_q \le K q^{1/\alpha} $ for all $q$, then \eqref{eq:weibull} holds with \begin{equation}
        x_0 \le \left(\frac{2\mathrm{e}}{\log 2}\right)^{1/\alpha}K.
    \end{equation} 
\end{lem}
\noindent We now combine Lemma \ref{lem:weibull-2m} with \eqref{eq:weibull-bound} and \eqref{eq:widetildeY-bound} to find, as $n\rightarrow \infty$ and with $P=An^{m-1}$, $\widetilde{Y}$ must be sub-Weibull with $\alpha=2/m$.  More precisely,  \begin{equation} \label{eq:bigthm-tail}
    \mathbb{P}\left[ \left|\sum_{\mu=2}^P Y^\mu - (P-1)y\right|> \mathcal{C} n^{(m-1)/2}\sqrt{|S|} \left(\sqrt{Pt} + t^{m/2}\right) \right] \le 2\mathrm{e}^{-t}
\end{equation}
with \begin{equation}
    \mathcal{C}:= 2\mathcal{F}\left(\frac{2}{m}\right)C^\prime.
\end{equation}
So the tail bound \eqref{eq:bigthm-tail} is strong when we set $t=\tau |S| $ (for constant $\tau$ to be determined).    Recall that the regime of interest has $|S| = \beta n$, and we need the fluctuations in \eqref{eq:bigthm-tail} to be sufficiently suppressed so that a union bound over all possible $|S|$ subsets does not find a single favorable spin flip.  In other words, we need \begin{equation}
   \mathrm{e}^{-\tau \beta n}\left(\begin{array}{c} n \\ |S| \end{array}\right) \ll 1
\end{equation} or, recalling the binary entropy function $h$ defined in \eqref{eq:binaryentropy},
\begin{equation}
    \tau \beta > h(\beta). \label{eq:tau_inequality}
\end{equation}
Note that since $h(\beta) \sim -\beta\log \beta $ as $\beta \rightarrow 0$, we can only hope to do find such a $\tau$ for a fixed positive $\beta$, which is what the theorem asks for.  

In any event, let us finalize things.  For $\tau$ obeying \eqref{eq:tau_inequality} we need  \begin{equation}
    0 > (P-1)y + \mathcal{C}n^{m/2}\sqrt{\beta}\left(\sqrt{P\tau \beta n} + (\tau \beta n)^{m/2}\right)+(2\beta n)^{m}, \label{eq:almost-done-inequality}
\end{equation}
which would ensure that the cutoff in \eqref{eq:bigthm-tail}, along with the positive energy gain from $Y^1$ given in \eqref{eq:Y1}, are not enough to push $\mathcal{E}>0$ due to the cost of violating many of the (approximately) learned checks having arose from the $\mu=2,\ldots, P$ contributions to $\widehat{E}$. From Corollary \ref{cor:linearconfinement}, \begin{equation}
    y \ge \ (1-2\delta)\Delta_{\mathrm{B}} \cdot  \beta n,
\end{equation}
 for any $\delta< 1/\Delta_{\mathrm{B}}$ and $\beta < \gamma$ for which Theorem \ref{thm:randomcodes} ensures we have a $(\gamma,\delta)$ expander code.  Almost surely as $n\rightarrow \infty$,   \begin{equation}
    (1-2\delta)\Delta_{\mathrm{B}}\beta  \cdot A > \mathcal{C}\sqrt{\beta} \left(\sqrt{Ah(\beta)} + h(\beta)^{m/2}\right) + (2\beta)^m
\end{equation}
Solve this quadratic inequality for $0<\beta<\gamma$ and call the solutions $A >A_{\mathrm{max}}(\beta)$; this establishes \eqref{eq:basin_guarantee}.
\end{proof}

Theorem \ref{thm:a=0learn-smallA} and Theorem \ref{thm:a=0learn-largeA} together imply a clear crossover when $P= \mathrm{\Theta}(n^{m-1})$ where the denseAM establishes basins of stability around each codeword, as claimed in the main text.  Let us now confirm that, as claimed in the main text, codewords are lower energy than random configurations once $P\sim n^{m-2}$, which we formalize in the proposition below: \begin{prop}
    Starting at any codeword and flipping a random set $S$ of bits towards an \emph{unseen} codeword, we find (using the notation of Theorem \ref{thm:a=0learn-smallA}) that $\widehat{E}(\sigma^B) > \widehat{E}(\sigma^1)$ with high probability as $n\rightarrow \infty$ if $P=n^{m-2+\epsilon}$ for any $\epsilon>0$.
\end{prop}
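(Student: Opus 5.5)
We reuse the decomposition from the proof of Theorem \ref{thm:a=0learn-smallA}, with one change that drives the whole argument. Since the flip is now towards an \emph{unseen} codeword, no training pattern is distinguished, so there is no large deterministic term $Y^1\sim(2\beta n)^m$. We again take $\sigma^1$ to be the codeword $c=0$ and let $B$ be the random set of flipped bits, with $|B|=\beta n$ and $\beta$ a fixed constant. Then
\begin{equation*}
\widehat{E}(\sigma^B)-\widehat{E}(\sigma^1)=-\frac{1}{P}\sum_{\mu=1}^P Y^\mu ,
\end{equation*}
where the $Y^\mu$ are i.i.d. over the $P$ training codewords, all of which are independent of $B$. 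The claim is therefore that $\sum_\mu Y^\mu<0$ with high probability.

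The first step is the mean. By \eqref{eq:EYmu} and the computation \eqref{eq:mu_def}, $\mathbb{E}[Y^\mu]=y$ up to $1+\mathrm{o}(1)$ factors, where
\begin{equation*}
y=\left(-1+(1-2\beta)^m+(2\beta)^m\right)m!\,\mathcal{N}_m .
\end{equation*}
Using \eqref{eq:Nm}, $y=-\mathrm{\Theta}(n)$ for any fixed $0<\beta<1/2$. If the flipped set is a generic random set rather than half of a codeword difference, the check-parity average must be redone. Each weight-$m$ check is violated with probability $\frac{1-(1-2\beta)^m}{2}$, so the prefactor becomes $-(1-(1-2\beta)^m)$, which is still $-\mathrm{\Theta}(1)$. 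In either case we get $\mathbb{E}\sum_\mu Y^\mu=-\mathrm{\Theta}(Pn)$.

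The second step is the fluctuations. The variance bound \eqref{eq:EYmu2} in Lemma \ref{lem:Ystats} holds for each $\mu$ regardless of how $B$ is chosen, and gives $\mathrm{Var}(Y^\mu)=\mathrm{O}(n^m)$. By independence of the patterns,
\begin{equation*}
\mathrm{Var}\Big(\sum_\mu Y^\mu\Big)=\mathrm{O}(Pn^m).
\end{equation*}
Chebyshev's inequality then gives
\begin{equation*}
\mathbb{P}\Big[\sum_\mu Y^\mu\ge 0\Big]\le \frac{\mathrm{O}(Pn^m)}{\mathrm{\Theta}(P^2n^2)}=\mathrm{O}\left(\frac{n^{m-2}}{P}\right)=\mathrm{O}(n^{-\epsilon})\rightarrow 0
\end{equation*}
for $P=n^{m-2+\epsilon}$. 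This is exactly the threshold $\alpha>m-2$ heuristically identified in the main text. No union bound over $B$ is needed, because the statement concerns a single random flip set rather than all $B$. For this reason the weaker second-moment control suffices, and Theorem \ref{thm:weibull} is unnecessary.

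The main obstacle is the variance estimate for a set $B$ that is not biased towards any seen pattern. We must check that the dominant-pairing argument in Lemma \ref{lem:Ystats} still goes through: terms with $J_1\triangle J_2$ a nonzero element of $\mathrm{im}(H^{\mathsf{T}})$ must remain $\mathrm{o}(n^m)$ by Corollary \ref{cor:shortdualenumerator}. We must also check that the $\mathrm{o}(1)$ corrections to the mean coming from the overlap distribution of $B$ with checks, via Corollary \ref{cor:evendistribution} and Corollary \ref{cor:random_codeword_50}, do not cancel the $-\mathrm{\Theta}(n)$ leading term. This second check is plausible because $1-(1-2\beta)^m-(2\beta)^m>0$ strictly for $0<\beta<1/2$. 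If concentration over the randomness of $B$ is also needed, a Chebyshev bound on the number of violated checks, whose pairwise correlations are $\mathrm{O}(1/n)$ by the tree-like structure in Proposition \ref{prop:subgraph}, shows that $y$ concentrates.
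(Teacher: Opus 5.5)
Your proposal is correct and follows the paper's argument. You drop the deterministic $(2\beta n)^m$ term, then compare the $-\mathrm{\Theta}(Pn)$ mean $Py$ against the $\mathrm{O}(\sqrt{Pn^m})$ standard deviation from Lemma \ref{lem:Ystats}, which gives a ratio $\sim\sqrt{P/n^{m-2}}=n^{\epsilon/2}$. The only difference is that you close with Chebyshev's inequality, with failure probability $\mathrm{O}(n^{-\epsilon})$, where the paper invokes approximate Gaussianity of $\mathcal{E}$; this makes your version slightly more self-contained.
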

\begin{proof}
    We follow the proof of Theorem \ref{thm:a=0learn-smallA}, but if the bits we flip point us towards an unseen codeword, we simply lose the deterministic $(2\beta n)^m$ contribution from \eqref{eq:EofE}.   Then we see that \begin{equation}
        \frac{|\mathbb{E}[\mathcal{E}]|}{\sqrt{\mathrm{Var}(\mathcal{E})}} = \frac{\Delta_{\mathrm{B}}(m-1)! (1-(1-2\beta)^m-(2\beta)^m)}{\sqrt{2^{1-m}(2m)!/m!} (1-(1-2\beta)^m)} \sqrt{\frac{P}{n^{m-2}}} \sim n^{\epsilon/2}.
    \end{equation}
    Since $\mathcal{E}$ has approximately Gaussian fluctuations, $\mathrm{sign}(\mathcal{E}) < 0$ almost surely as $n\rightarrow\infty$. 
\end{proof}
Before $P\sim n^{m-2}$, we are relatively likely to see a true codeword higher energy than a random state, so it is hardly fair to say that we have ``learned" the code at this point!

It would be very interesting to study the ``mixed memories", which have been studied in dense associative memories trained on random codewords \cite{gayrard2026}, to the architecture trained on XORSAT codewords.  We leave this to future work.


\section{Learning latent variables}\label{app:latent}
With a sensible modification, the same TDenseAM architecture described above can (asymptotically) optimally recover latent variables -- namely, degrees of freedom which are not shown to the model, but which sparsify the energy $\widehat{E}$.   This is \emph{not} achievable for arbitrary problems \cite{valiant1994}.   However, random LDPC codes are a setting where latent variables are often easy to learn, and we will indeed provide some provable guarantees on their learnability in what follows.

Let us first set up the discussion at a high level.  Divide up the bits of an LDPC code into observed variables $x_1,\ldots, x_{n-\overline{n}}$ which will be given to TDenseAM as training data, and latent variables $h_{n-\overline{n}+1},\ldots, h_n$ which are not shown.  Notice that $\overline{n} < n$ denotes the number of latent bits.   The energy function, as before, is a sum over parity-checks.   We aim to generalize TDenseAM so that it can (1) identify latent variables $h$ and (2) learn a correct energy function $\widehat{E}$.   

As the LDPC problem is perfectly suited to TDenseAM, as discussed above, this will indeed be possible in many circumstances.  The strategy that we will explore is to identify latent variables by their observable shadows, which are defined as follows: 
\begin{defn}[Observable $m$-shadow]\label{def:shadow}
    A latent variable $h_\alpha $ in an LDPC code with maximal check weight $m$ has an observable $p$-shadow if there exist $p\ge 2$ subsets, any two of which $A, B \subset \lbrace 1,\ldots, n-\overline{n}\rbrace$ obey $|A|, |B| \le m-1$ and for which $\alpha \cup A, \alpha \cup B \in \mathrm{im}(H^{\mathsf{T}})$.
\end{defn}

Before continuing, let us make a few caveats.  (1) $h$ is not necessarily unique -- indeed, we'll comment more on this shortly!  (2) We do not rule out vastly more effective ways to discover latent variables than observable shadows.  Instead, we show that the observable shadow condition allows for a beautifully simple learning algorithm to discover them, and that many random codes guarantee observable shadows for latent bits, even when 
$\overline{n} = \mathrm{\Theta}(n)$.

Returning to Definition \ref{def:shadow}, the key observation is that $x_A + x_B=0$ is a parity-check involving at most $2m-2$ bits.  Therefore, we propose the following ``algorithm" which generalizes TDenseAM to the latent setting.  We should emphasize that exactly as described, this algorithm is tailor-made for XORSAT/LDPC problems (as was TDenseAM); however, there is a natural extension of the philosophy to more generic problems.

\begin{alg}[$p$-latent TDenseAM for XORSAT]\label{alg:latent}
    The $p$-latent TDenseAM architecture is a deterministic latent variable learner for XORSAT which iteratively proceeds as follows.  Let $\mathcal{V}_0 = [n-\overline{n}]$ denote the initial set of observed variables.   Given $P$ patterns, use TDenseAM to learn all parity-checks of weight $\le m$ -- collect them into a parity-check matrix $H_0$.   Now iterate the following algorithm, beginning at $t=0$: \begin{enumerate}
        \item Run TDenseAM to learn all parity-checks of weight $\le 2m-2$; collect them into parity-check matrix $H_t^\prime$.   Let \begin{equation}
            Z_t = \mathrm{span}(H_t^\prime)/\mathrm{span}(H_t)
        \end{equation}
        denote genuinely new higher-order constraints discovered.
        \item Suppose there exist $p(p-1)/2$ cosets $w_A \in \mathbb{F}_2^n/\mathrm{span}(H_t)$ such that every $w_A$ has a representative of Hamming weight $\le (m-1)$, and for every pair, $w_A+w_B \in \mathrm{span}(H_t^\prime)/\mathrm{span}(H_t)$.  Then introduce a \textbf{latent bit} $h$ such that the following parity-checks are obeyed: \begin{equation}
            h + w_A = 0. \label{eq:hwA0}
        \end{equation}
        In other words, $h$ has an observable $p$-shadow (Definition \ref{def:shadow}).
        \item If no $h$ is found, terminate the process.  Else, append $h$ to the observed variables: $\mathcal{V}_{t+1} := \mathcal{V}_t \cup \lbrace h\rbrace$.  Append learned parity-checks \eqref{eq:hwA0} to $H_t$ to make $H_{t+1}$.   Then send $t\rightarrow t+1$ and repeat.
    \end{enumerate}
\end{alg}

The 2-latent TDenseAM is an aggressive learner, but will necessarily introduce many redundant latent variables.  As a simple example, if $m=3$ and we observe $x_1+x_2+x_3+x_4=0$, the 2-latent learner does not know whether to introduce $x_1+x_2+h=x_3+x_4+h=0$ vs. $x_1+x_3+h^\prime=x_2+x_4+h^\prime=0$.   In contrast, the 3-latent TDenseAM is more cautious; as such one may expect for it to have a better performance guarantee.  Indeed, the 3-latent learning rule is analogous to those which have been used in previous studies of latent sparse learning \cite{BreslerKoehlerMoitra2019,BreslerBuhai2020}.   In the context of the XORSAT problem, we can prove that the 3-latent TDenseAM is capable of learning an extensive number of latent variables, with high probability, on a random realization of the problem.

\begin{thm}
    Consider a random $(\Delta_{\mathrm{B}},\Delta_{\mathrm{C}})$ LDPC code with $5 \le \Delta_{\mathrm{B}}<\Delta_{\mathrm{C}}$, and $6 \le \Delta_{\mathrm{C}} \le 9$ if $\Delta_{\mathrm{B}}=5$.  Decompose the bit set $B = B_{\mathrm{obs}}\sqcup B_{\mathrm{lat}}$ into observed and latent bits.  There exists $\mathrm{O}(1)$ constant $\gamma$ such that, for sufficiently large $n$, if $|B_{\mathrm{lat}}|< \gamma n$, the 3-latent TDenseAM architecture will learn every latent bit correctly (and no spurious latent bits) almost surely as $n\rightarrow \infty$, after seeing at most \begin{equation}
        P = (1+\epsilon) \frac{4(\Delta_{\mathrm{C}}-1)}{a^2}\log n \label{eq:Pthm6}
    \end{equation}random codewords as training examples, for any $\epsilon>0$.
\end{thm}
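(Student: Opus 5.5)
The proof separates into a \emph{statistical} part and a \emph{combinatorial} part; the two meet only through a union bound.

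\textbf{Statistical part.} Once $P$ obeys \eqref{eq:Pthm6}, every run of TDenseAM inside Algorithm \ref{alg:latent} returns exactly the true constraints. Concretely, the weight-$\le 2m-2$ checks it finds are precisely the elements of $\mathrm{im}(H^{\mathsf{T}})$ restricted to the current observed set $\mathcal{V}_t$. The argument is that of Proposition \ref{prop:patterns_to_learn}. There are fewer than $n^{2m-2}$ candidate couplings of weight $\le 2m-2$. By Proposition \ref{prop:xiI}, each one not in the code's dual has $T_I$ equal to a normalized unbiased walk, so Hoeffding \eqref{eq:hoeffding} together with a union bound gives a failure probability of at most $2\exp[(2m-2)\log n - Pa^2/2]$. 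This tends to zero for $P>(1+\epsilon)\,4(m-1)\log n/a^2$.

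One subtlety is that latent bits appended at stage $t$ are deterministic functions of the observed bits, namely $h=w_A$. As a result, the training data augmented with $h$ is still a uniform sample of the projected code. The same bound therefore applies at every stage, and only $\mathrm{O}(n)$ stages occur. I would rerun the union bound over stages, possibly absorbing an extra $\log n$ into $\epsilon$. Alternatively, and more cleanly, all data-dependent statistics are parities of the original observed bits of weight $\le (2m-2)\cdot$const, so a single union bound suffices. From here on I condition on this event and treat the learner as an oracle for $\mathrm{im}(H^{\mathsf{T}})\cap\mathbb{F}_2^{\mathcal{V}_t}$ in weight $\le 2m-2$.

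\textbf{Combinatorial part.} Here everything reduces to expansion. I take $\delta$ slightly above $1/\Delta_{\mathrm{B}}$ and a $(\gamma,\delta)$-expander from Theorem \ref{thm:randomcodes}. Two things must be shown.

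\emph{(Soundness.)} Every observable 3-shadow detected by the algorithm comes from a genuine latent bit $h$. Consider the three weight-$\le 2m-2$ observable checks $w_1,w_2,w_3$ with the pairwise structure. Each $w_i$ is a sum of $s_i$ original checks. Check soundness (Theorem \ref{thm:checksoundness}) and right expansion force $s_i$ to be bounded, and in fact $s_i=2$: with $\delta<1/3$, a sum of $s\ge 3$ checks has weight $\ge \Delta_{\mathrm{C}}(1-2\delta)s>2m-2$. Proposition \ref{prop:subgraph} rules out short cycles, so each $w_i=c+c'$ with $c,c'$ sharing exactly one bit, and that bit must be latent. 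The consistency of the three sums then identifies a single common latent bit $h$ shared by three checks. The algorithm's $h$ therefore equals the true latent bit on the span, and no spurious latent bit is created.

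\emph{(Completeness.)} As long as latent bits remain, some latent bit has an observable 3-shadow. Apply Proposition \ref{prop:expansion} to the remaining latent set $B_0$, which has $|B_0|<\gamma n$. This set has at least $(1-2\delta)\Delta_{\mathrm{B}}|B_0|$ unique-neighbor checks. The corollary following Corollary \ref{cor:linearconfinement}, used with threshold $3$ in place of $m$, then gives a latent bit $b$ with at least $3$ unique-neighbor checks. This requires $\Delta_{\mathrm{B}}(1-2\delta)>2$, i.e.\ roughly $\Delta_{\mathrm{B}}\ge 5$ once $\delta\approx 1/\Delta_{\mathrm{B}}$ is accounted for. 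Each such check has weight $\Delta_{\mathrm{C}}$, contains $b$, and has its other $m-1$ bits observed. Pairwise sums give weight-$(2m-2)$ observable checks, which the oracle finds. Iterating removes latent bits one at a time until none are left.

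The main obstacle is making soundness airtight. It must also hold after earlier latent bits have been added to $\mathcal{V}_t$, and it must exclude coincidental 3-shadows built from two-check sums through \emph{observed} shared bits, or from larger check combinations that happen to have small weight. This is where the restrictive hypotheses on $(\Delta_{\mathrm{B}},\Delta_{\mathrm{C}})$ enter: in particular $\Delta_{\mathrm{C}}\le 9$ when $\Delta_{\mathrm{B}}=5$ is needed so that $\Delta_{\mathrm{C}}(1-2\delta)\cdot 3>2\Delta_{\mathrm{C}}-2$ holds, combined with cycle-freeness of small subgraphs via Proposition \ref{prop:subgraph}. I would first enumerate the bounded set of Tanner-subgraph configurations that could produce a fake 3-shadow. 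I would then show that each has $V_0<E_0$, hence almost surely none appears, and close with a union bound over finitely many shapes.
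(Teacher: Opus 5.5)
Your skeleton is the paper's: Hoeffding plus a union bound at weight $2\Delta_{\mathrm{C}}-2$, expansion, and unique-neighbor counting with $\Delta_{\mathrm{B}}(1-2\delta)>2$. Note that the expansion condition is $\delta<(\Delta_{\mathrm{C}}+2)/(6\Delta_{\mathrm{C}})$, not $\delta<1/3$. That is what forces every weight-$(2\Delta_{\mathrm{C}}-2)$ observed check to be a sum of exactly two checks, and it is the source of $\Delta_{\mathrm{C}}\le 9$ at $\Delta_{\mathrm{B}}=5$.

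The gap is in the combinatorial part, which assumes Proposition~\ref{prop:subgraph} rules out short cycles. It does not. It only rules out configurations with $V_0<E_0$, i.e.\ two or more independent cycles. A single 4-cycle (two bits sharing two checks) or 6-cycle has $V_0=E_0$, so its expected count is $\mathrm{O}(1)$ and it appears with probability bounded away from zero.

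This breaks your completeness step as written. The bit $b$ supplied by the averaging argument may sit on a 4-cycle with an observed bit $b'$, and two of its three unique-neighbor checks may both contain $b'$. Then $A,B,C$ are not disjoint, and one pairwise sum has weight $2\Delta_{\mathrm{C}}-4$. That is not the weight-$(2\Delta_{\mathrm{C}}-2)$ shadow your soundness argument (``sharing exactly one bit'') certifies. Asking instead for four unique-neighbor checks needs $\Delta_{\mathrm{B}}(1-2\delta)>3$, which fails at $\Delta_{\mathrm{B}}=5$.

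The missing idea is the paper's $B_{\mathrm{share}}$ bookkeeping:
\begin{itemize}
\item Use Proposition~\ref{prop:subgraph} only to show that bits on 4-cycles form isolated, mutually far-apart 4-cycles.
\item Absent a clean 3-shadow, a latent bit has at most $2$ unique-neighbor checks off $B_{\mathrm{share}}$ and at most $3$ on it.
\item Then $\Delta_{\mathrm{B}}(1-2\delta)|B_{\mathrm{lat}}|\le 2|B_{\mathrm{lat}}|+|B_{\mathrm{lat}}\cap B_{\mathrm{share}}|$ forces $B_{\mathrm{share}}$ to have density at least $7/9$ in a connected latent component. This contradicts the separation of 4-cycles.
\end{itemize}

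The same misconception undermines your soundness plan. Take three checks $d_1,d_2,d_3$ meeting pairwise in distinct bits $p,q,r$ (a 6-cycle). They yield $w_1=d_1+d_2$, $w_2=d_2+d_3$, $w_3=d_3+d_1$, exactly of 3-shadow form with disjoint sets of size $\Delta_{\mathrm{C}}-1$. Since $V_0=E_0$, your planned enumeration showing every fake configuration has $V_0<E_0$ cannot exclude it, and your claim that the shared bit ``must be latent'' is false here.

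The paper excludes this case by observability, not rarity. Each of $p,q,r$ appears in some $w_i$, so all three are observed. Hence $d_1,d_2,d_3$ are observed weight-$\Delta_{\mathrm{C}}$ checks, the $w_i$ are trivial modulo $\mathrm{span}(H_t)$, and the shadow is not clean. The same argument handles a common shared bit that happens to be observed.

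There is also a smaller issue in the statistical part. An extra factor of $n$ from a union over stages cannot be absorbed into an arbitrary $\epsilon$. Your alternative justification is also wrong, because deep latent bits are parities of unboundedly many original observed bits. The correct reason a single union bound suffices is inductive: each appended latent bit is a genuine code bit. So every candidate coupling at every stage is a subset of $[n]$ of size at most $2\Delta_{\mathrm{C}}-2$, evaluated on the full codewords.
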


\begin{proof}
    Theorem \ref{thm:randomcodes} shows that a random LDPC code is a $(\gamma,\delta)$ expander with some O(1) $\gamma$, and any $\delta > 1/\Delta_{\mathrm{B}}$, almost surely as $n\rightarrow \infty$.   The strategy of the proof will be to assume $(\gamma,\delta)$-expansion with suitable $\gamma$ and $\delta$ (which holds with high probability), and then demonstrate that conditioned on this likely event, the strict latent TDenseAM learns a latent variable if and only if it is in $B_{\mathrm{lat}}$.  The key idea is that a certain kind of observable 3-shadow guarantees (certifies) the existence of a latent bit:

    \begin{lem}\label{lem:clean3shadow}
        Define a clean 3-shadow as a triple of observed parity-checks of size $2\Delta_{\mathrm{C}}-2$: \begin{equation}
            x_A+ x_B=x_B+x_C=x_C+x_A=0 \label{eq:3checkcondition}
        \end{equation}
        with sets $|A|=|B|=|C| = \Delta_{\mathrm{C}}-1$, all three sets disjoint, and none of the parity-checks explainable as the union of two smaller observed checks.   Then, given the code is a $(\gamma,\delta)$-expander with  \begin{equation}
            \delta < \frac{\Delta_{\mathrm{C}}+2}{6\Delta_{\mathrm{C}}}, \label{eq:delta_DeltaC}
        \end{equation}
        a clean 3-shadow certifies the existence of a unique (latent) bit $h_0$ such that the code had parity-checks \begin{equation}
            x_A+h_0=x_B+h_0=x_C+h_0=0. \label{eq:h0_pattern}
        \end{equation}
    \end{lem}
    \begin{proof}
        From Theorem \ref{thm:checksoundness}, we see that if $|H^{\mathsf{T}}s|=\mathrm{O}(1)$ then we may take $|s|=\mathrm{O}(1)$ with high probability as $n\rightarrow \infty$.  This puts us in the regime where Definition \ref{def:expansion} applies: for any $|s|<\gamma n$, the maximal number of parity-checks that can add together to make a parity-check of size $2\Delta_{\mathrm{C}}-2$ obeys: \begin{equation}
            |H^{\mathsf{T}}s| = 2\Delta_{\mathrm{C}}-2 \ge \Delta_{\mathrm{C}}(1-2\delta)|s|.
        \end{equation} 
        Clearly we must have $|s|\ge 2$, and the inequality is violated at $|s|=3$ if \eqref{eq:delta_DeltaC} is obeyed.  Hence \eqref{eq:delta_DeltaC} ensures that every parity-check of size $2\Delta_{\mathrm{C}}-2$ can be written as the sum of exactly two parity-checks.

        Now, suppose that we see the pattern \eqref{eq:3checkcondition}.  There are exactly two ways this can happen.  The first is that we see the pattern \eqref{eq:h0_pattern}, where $h_0$ may or may not be latent -- if it is not, we clearly should not introduce a new latent variable!   The other possibility is that \begin{subequations}\label{eq:ABCpattern}
            \begin{align}
                A &= A_0\sqcup \lbrace x,y\rbrace, \\
                B &= B_0\sqcup \lbrace x,z\rbrace, \\
                C &= C_0 \sqcup \lbrace y,z\rbrace. 
            \end{align}
        \end{subequations}
        Because by assumption we have seen $3(\Delta_{\mathrm{C}}-1)$ observed variables between the observed checks \eqref{eq:3checkcondition}, the pattern \eqref{eq:ABCpattern} can only be observed if all bits are already observed -- therefore there will be no latent variables.  Therefore, there is a latent variable if and only if the constraints \eqref{eq:3checkcondition} cannot be deduced from any of the observed lower-order constraints.
    \end{proof}

    It is clear that seeing a clean 3-shadow of a latent variable is a guarantee (with high probability) that we will correctly infer the intended latent variable.  Now we prove that:
    \begin{lem}
        If $|B_{\mathrm{lat}}|<\gamma n$, then almost surely as $n\rightarrow \infty$, there exists a latent variable with a clean 3-shadow.  
    \end{lem}
    Note that we will apply this lemma at each iteration of Algorithm \ref{alg:latent}, so $B_{\mathrm{lat}}$ is shrinking as we add more latent variables to the observed set. 
    \begin{proof}
        If the latent subgraph of the LDPC code was a tree, this property would follow immediately, because $\Delta_{\mathrm{B}} \ge 5$.  To prove this fact, we will need to show that the latent subgraph is ``close enough" to a tree to ensure the existence of a clean 3-shadow.  

        Let $B_{\text{share}}\subseteq B$ denote the set of bits which have the following property: \begin{equation}
            B_{\text{share}} :=\lbrace b\in B: |\partial b\cap \partial b^\prime|>1 \text{ for some }  b \ne b^\prime \in B \rbrace.  
        \end{equation} 
        We observe that $b\in B_{\text{share}}$ if and only if $b$ is contained in a subgraph of the Tanner graph isomorphic to a 4-cycle.  Furthermore, Proposition \ref{prop:subgraph} shows us that almost surely, $B_{\mathrm{share}}$ is a collection of disconnected 4-cycles in the Tanner graph.  Indeed, suppose that there was a set of 3 latent bits, all of which were in $B_{\mathrm{share}}$ and connected to each other.  This would require a finite subgraph of the Tanner graph with 2 loops, which happens with probability $\sim n^{-1}$ and can be neglected.  An analogous argument further shows that for any fixed O(1) distance $r_0$, $B_{\mathrm{share}}$ cannot contain two pairs of bits within a fixed distance $r_0$ of each other.
        
With this in mind, we now provide a proof by contradiction for the lemma. Suppose that the latent set had no clean 3-shadows.  Without loss of generality, we may assume that $B_{\mathrm{lat}}$ is connected: disconnected components cannot hinder learning any other component.   Because $B_{\mathrm{lat}}$ has no clean 3-shadows, every bit in $B_{\mathrm{lat}}\setminus B_{\text{share}}$ must have at most 2 unique-neighbor checks, while every bit in $B_{\mathrm{lat}}\cap B_{\text{share}}$ has at most 3 unique-neighbor checks.  Comparing with the definition of expansion, we have that \begin{align} \label{eq:bad_latent}
           \Delta_{\mathrm{B}}(1-2\delta) |B_{\text{lat}}| &\le  |\partial_{\mathrm{u}}B_{\text{lat}}| \le 2\left( |B_{\text{lat}}| - |B_{\text{lat}} \cap B_{\text{share}}|\right) + 3 |B_{\text{lat}} \cap B_{\text{share}}| \notag \\
           &= 2 |B_{\text{lat}}| + |B_{\text{lat}} \cap B_{\text{share}}|.
        \end{align}
        Notice that as long as we enforce \eqref{eq:delta_DeltaC} and $\Delta_{\mathrm{B}}\ge 5$ and $\Delta_{\mathrm{C}}\ge 6$,  \begin{equation}
            \Delta_{\mathrm{B}}(1-2\delta) \ge 5\left(1-\frac{8}{18}\right)=\frac{25}{9} > 2.
        \end{equation}
 Therefore, \eqref{eq:bad_latent} can only hold if \begin{equation}
     |B_{\text{lat}}| \le \frac{9}{7} |B_{\text{lat}} \cap B_{\text{share}}|.\label{eq:Blat_Bshare}
 \end{equation}

 Let us analyze \eqref{eq:Blat_Bshare} on each connected component of $B_{\mathrm{lat}}$.    Clearly a single isolated point in $B_{\mathrm{share}}$ is not possible, and such an isolated latent bit has a clean-$\Delta_{\mathrm{B}} \ge 5$-shadow.  So now consider the case where $B_{\mathrm{share}}$ is not empty.  If there is a single pair of bits in $B_{\mathrm{share}}$, then \eqref{eq:Blat_Bshare} implies that $2 \le |B_{\mathrm{lat}}| < 18/7 < 3$.   So the entire connected latent component is of shared bits, which is a contradiction: each such bit still has a clean $(\Delta_{\mathrm{B}}-2)\ge 3$-shadow.   If there are multiple disconnected components of $B_{\mathrm{share}}$ in the same connected component of $B_{\mathrm{lat}}$, then we know there is a path of length $\ge r_0$ separating them for any $r_0$.  Using this result, we deduce that the largest possible density \begin{equation}
     \frac{|B_{\mathrm{lat}}\cap B_{\mathrm{share}}|}{|B_{\mathrm{lat}}|} \le \frac{2m}{2m+(m-1)r_0} \label{eq:Blat_Bshare2}
 \end{equation}
 arises when $B_{\mathrm{lat}}$ is a tree-like subgraph with $m$ $B_{\mathrm{share}}$ pairs separated by $r_0$.  Take $r_0=2$:  if $m>1$, we find that \eqref{eq:Blat_Bshare2} contradicts \eqref{eq:Blat_Bshare}.  Therefore, in all possible cases, we have found a contradiction, completing the proof.
\end{proof}

It is now clear that the strict latent TDenseAM will provably converge by learning, one at a time, the latent bits by detecting their clean 3-shadows.  Lemma \ref{lem:clean3shadow} requires the condition \eqref{eq:delta_DeltaC} which enforces $\Delta_{\mathrm{C}}<10$ if $\Delta_{\mathrm{B}}=5$, while otherwise placing no constraints if $\Delta_{\mathrm{B}}>5$.  The 3-latent TDenseAM architecture correctly learns every visible weight-$(2\Delta_{\mathrm{C}}-2)$ constraint, by Proposition \ref{prop:patterns_to_learn}, if we train on the number of patterns given in  \eqref{eq:Pthm6}, almost surely as $n\rightarrow \infty$.  Therefore the 3-latent TDenseAM succeeds.
\end{proof}

\end{appendix}

\phantomsection
\section*{References}
\addcontentsline{toc}{section}{References}
\makeatletter
\let\bibsection\relax
\makeatother
\bibliography{thebib}
\end{document}